\documentclass{article} 
\usepackage{arxiv_style,times}

\usepackage{amsmath,amsfonts,bm}

\def\eqref#1{equation~\ref{#1}}

\def\1{\bm{1}}

\def\vzero{{\bm{0}}}

\def\vtheta{{\bm{\theta}}}

\def\mF{{\bm{F}}}

\DeclareMathAlphabet{\mathsfit}{\encodingdefault}{\sfdefault}{m}{sl}
\SetMathAlphabet{\mathsfit}{bold}{\encodingdefault}{\sfdefault}{bx}{n}

\def\sR{{\mathbb{R}}}

\DeclareMathOperator*{\argmin}{arg\,min}

\usepackage{hyperref}
\usepackage{url}

\usepackage{amsmath}
\usepackage{amssymb}
\usepackage{amsthm}
\usepackage{tabularx}
\usepackage{ragged2e}
\usepackage{subcaption}
\usepackage{wrapfig}
\usepackage{graphicx}
\usepackage{thmtools}
\usepackage{booktabs}

\title{Towards Eliminating Catastrophic Forgetting in the Curriculum Learning of Math Reasoning Tasks}

\author{\shortstack[l]{Zengyan Yang\textsuperscript{1}, Yangyang Wu\textsuperscript{1}, Kai Huang\textsuperscript{2}, Pengfei Lyu\textsuperscript{2}, Tianyi Zhang\textsuperscript{2}, Mengying Zhu\textsuperscript{1}} \\
\textsuperscript{1}Zhejiang University, \textsuperscript{2}Ant Group
}

\iclrfinalcopy 

\newtheorem{theorem}{Theorem}

\begin{document}

\maketitle

\begin{abstract}
Curriculum learning has found broad application across numerous domains. Nevertheless, its effectiveness is intrinsically curtailed by catastrophic forgetting, driven by the shifts in model parameter distributions between curriculum tasks. In this paper, we investigate the phenomenon of catastrophic forgetting in this training paradigm, building on the established efficacy of curriculum learning. Our theoretical analyses of parameter update dynamics demonstrate that catastrophic forgetting in curriculum learning stems from the divergence of task optima, which is generally essential to the faster convergence of curriculum learning; therefore, forgetting cannot be completely eliminated. Based on this finding, we augment the training process and propose \textsf{IV-EWC}, which incorporates Elastic Weight Consolidation (EWC) into the curriculum learning objective to curb catastrophic forgetting in mathematical reasoning, a prototypical curriculum learning scenario. \textsf{IV-EWC} employs the influence function to construct a representative validation set from the curriculum's training data, which is used to drive dynamic regularization during training. We further present an extended theoretical analysis to show that EWC-based regularization methods mitigate catastrophic forgetting in curriculum learning, thereby providing theoretical support for \textsf{IV-EWC}. Empirical evaluations on three backbone models and three benchmarks indicate that curriculum learning exhibits catastrophic forgetting. \textsf{IV-EWC} alleviates this issue, reducing forgetting by 162\% on average relative to vanilla curriculum learning and yielding positive backward transfer, as evidenced by improved performance on easier tasks after subsequent training on challenging tasks.
\end{abstract}

\section{Introduction}

In the field of machine learning, curriculum learning can help improve generalization, accelerate convergence and find better local minima \citep{Bengio2009CurriculumLearning, Guo2018CurriculumNet, Kumar2010SelfPaced, Hacohen2019OnThePowerOfCL}. Nowadays, it has been widely adopted in the post-training scenario, combined with new methods such as reinforcement learning to improve the reasoning capabilities \citep{Xi2024ReverseCurriculum, Wen2025Light-R1, Parashar2026E2HReasoner} of large language models.

Curriculum learning decomposes the original problem into an ordered sequence of subtasks. The resulting non-stationary training distribution can induce catastrophic forgetting \citep{McCloskey1989CatastrophicInterference, French1999CatastrophicForgetting, Hetherington2014IsThere}: the accuracy on easier tasks drops and the model is inclined to apply strategies learned for challenging tasks to easier tasks, where they may be suboptimal. Yet most methods optimize only the curriculum design for forward progress and overlook backward stability, leaving performance vulnerable to forgetting.

Catastrophic forgetting can be understood as optimization-driven interference in the parameter space. Training on a newly introduced task directs gradient updates toward parameter regions that minimize the new task's loss. When the loss minima across tasks differ from each other, these updates move the model away from regions that previously achieved low loss on earlier tasks, leading to performance degradation. Importantly, forgetting is defined relative to task delineations and their ordering: the extent of interference depends on the task objectives and data distributions that constitute the curriculum. Thus, overparameterization alone does not prevent forgetting; even large language models trained sequentially can exhibit path dependence that favors challenging tasks.

In this study, we conduct theoretical analyses of curriculum learning in an idealized regime in which the loss of the target problem and all intermediate curriculum problems are smooth and strongly convex. Examining the parameter updating dynamics, we find that curriculum learning's benefit derives from accelerated convergence. At the same time, catastrophic forgetting arises from non-coincident optima across tasks. In particular, achieving a reduced iteration count generally requires that the optima of the easier tasks do not coincide with those of the challenging tasks. This entails an inherent trade-off: the very differences that enable faster convergence also induce forgetting, rendering it impossible to completely eliminate forgetting in this setting.

To this end, we address catastrophic forgetting within curriculum learning by intervening in the training process. We introduce Influence-Validation EWC (\textsf{IV-EWC}), which dynamically applies EWC regularization \citep{Kirkpatrick2017EWC} using a validation set selected from the training data of curriculum tasks via the influence function \citep{Koh2017InfluenceFunction}, thereby protecting parameters critical to easier tasks of the curriculum. We focus on mathematical reasoning tasks because they exhibit clear difficulty hierarchies and explicit solution steps, enabling objective curriculum construction and transparent assessment of the reasoning process. Building on the analytical framework of forgetting in curriculum learning, we further show that EWC-based regularization alleviates this phenomenon under the same setting, backing up \textsf{IV-EWC} in theory. To summarize, our main contributions are described as follows.

\begin{itemize}
  \item We identify and formalize the phenomenon of catastrophic forgetting in curriculum learning, providing theoretical analysis from the parameter update perspective in gradient descent to establish that forgetting can be attributed to the differences of optima across curriculum tasks, which is required for fewer iterations in curriculum learning, thus forgetting exists and cannot be eliminated.
  \item We propose \textsf{IV-EWC}, which integrates influence-function-guided selection of representative training samples with a dynamic EWC regularizer, aiming to stabilize parameters important to easier curriculum tasks while adapting to more challenging ones in mathematical reasoning tasks.
  \item Within the same theoretical setting, we investigate EWC-based regularization in curriculum learning and prove that it attenuates catastrophic forgetting by constraining the distribution shifts of the parameters, thereby providing a theoretical justification for \textsf{IV-EWC}.
  \item We conduct extensive experiments with three base models on three math reasoning datasets; the results corroborate the presence of forgetting in curriculum learning and demonstrate that \textsf{IV-EWC} substantially mitigates this degradation.
\end{itemize}

\section{Related Work}

\subsection{Curriculum Learning on Reasoning Tasks}

Curriculum learning \citep{Bengio2009CurriculumLearning} organizes training from easier to more difficult instances or tasks and has inspired self-paced, teacher-student, and automated curricula \citep{Graves2017AutomatedCL, Tambet2020TeacherStudentCL, Jiang2018MentorNet, Hacohen2019OnThePowerOfCL}. These strategies have proved effective across deep learning, including neural machine translation \citep{KocmiBojar2017CurriculumNMT, Zhang2019CurriculumForTranslation, Platanios2019CompetenceCLForTranslation, Liu2020NormCLForTranslation} and computer vision with noisy labels \citep{Guo2018CurriculumNet, Xiao2024CLIP-VG}. Curriculum learning has also been explored in medical report generation \citep{Liu2021CurriculumForMedicalReport}, biological sequence prediction \citep{Zhang2025CurriculumForBiological}, as well as robotic control \citep{Florensa2017ReverseCLForRobot, Wang2019POET, Wang2020EnhancedPOET, Leyendecker2021CurriculumForRobot}. With the emergence of reinforcement learning methods such as DPO \citep{Rafailov2023DPO} and GRPO \citep{Shao2024GRPO}, there has been a growing body of research that combines curriculum learning with reinforcement learning to improve the reasoning capabilities of models. R\textsuperscript{3} \citep{Xi2024ReverseCurriculum} uses outcome supervision and achieves the effect of process supervision by constructing a reverse curriculum sequence. Light-R1 \citep{Wen2025Light-R1} constructs a three-stage curriculum post-training strategy that leverages carefully selected reasoning data to train long-CoT models. E2H Reasoner \citep{Parashar2026E2HReasoner} designs a scheduling strategy that utilizes data with difficulty annotations to automate the curriculum learning process, thereby improving the performance on complex tasks.

\subsection{Mitigating Forgetting in Continual Learning}

When investigating forgetting in curriculum learning, some methods in continual learning \citep{Thrun1995ContinualLearning} can serve as references. iCaRL \citep{Rebuffi2017iCaRL} employs a herding-based strategy to select representative samples for replay while adding a distillation loss to the objective. EWC \citep{Kirkpatrick2017EWC} applies parameter-wise regularization based on the Fisher Information Matrix. Deep Generative Replay \citep{Shin2017DGR} trains another generative model to generate replay samples that mimic past tasks. CLEAR \citep{Rolnick2019CLEAR} leverages off-policy learning and behavioral cloning for replay and on-policy learning for new samples. GEM \citep{LopezPaz2017GEM} utilizes episodic memory to control the gradient update process by projecting the gradient onto the feasible region defined by previous tasks' memories. A-GEM \citep{Chaudhry2019AGEM} simplifies the constraints of GEM and updates gradients according to sampled memories, achieving better efficiency. UPGD \citep{Elsayed2024UPGD} uses utility to assess the importance of each parameter in the training process and parameters with high utility are protected. \citet{Jiang2025FunctionVector} investigate the correlation between the function vector and forgetting, applying a function-vector based regularization to minimize forgetting.

\section{Problem Formulation}
In this section, we establish the effectiveness of curriculum learning for optimization problems with smooth and strongly convex loss functions. It further shows, from the parameter updating process, that forgetting is intrinsic to curriculum learning and impairs its eventual performance.

Let the loss of the original problem be a smooth and strongly convex function $f(\vtheta)$, with the global optimal parameter $\vtheta^*$ and the condition number $\kappa$. Define the loss of the easier curriculum problem as $f_{e}(\vtheta)$, with its global optimal parameter $\vtheta_{e}^{*}$ and its condition number $\kappa_e$. There exists a $\delta>0$ which satisfies $|| \vtheta_e^*-\vtheta^* || \leq \delta$. Both the original and the easier problems are optimized using gradient descent, where $\vtheta_0$ is the initial parameter, $T_f$ and $T_e$ are the iterations when the original problem and the easier curriculum problem converge respectively.

Under standard smoothness and strong convexity assumptions, we first bound the distance between the current iterate and the optimal parameter in Lemma \ref{lem:GDconvergence} via the update recursion, then we quantify the number of iterations required to reduce the parameter error below a prescribed tolerance in Lemma \ref{lem:GDtimes}. These lemmas provide the foundational results for the subsequent theorems on the effectiveness of curriculum learning and the phenomenon of forgetting therein.

\begin{restatable}{lemma}{GDconvergenceLemma}\label{lem:GDconvergence}
For the target problem with a $\mu$-strongly convex and $L$-smooth loss, let $t$ be the optimization step, $\kappa = \frac{L}{\mu}$ be the condition number of the loss, and $\vtheta^t$, $\vtheta^*$, $\vtheta_0$ be the parameter at step $t$, the optimal parameter of the target problem and the initial parameter of the target problem, respectively. The distance between parameters follows:
\begin{equation}
  || \vtheta_t-\vtheta^* ||^2 \leq \left( 1-\frac{1}{\kappa}\right) ^t || \vtheta_0-\vtheta^* ||^2 \label{eq:GDconvergence}
\end{equation}
\end{restatable}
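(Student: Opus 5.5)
The plan is to prove the standard linear convergence of gradient descent with step size $\eta = 1/L$, the natural choice since the lemma fixes no step size. One step gives the one-step contraction
\begin{equation*}
\|\vtheta_{t+1}-\vtheta^*\|^2 \le \left(1-\tfrac{1}{\kappa}\right)\|\vtheta_t-\vtheta^*\|^2,
\end{equation*}
and unrolling over $t$ steps yields the claim. Expanding the update $\vtheta_{t+1}=\vtheta_t-\frac{1}{L}\nabla f(\vtheta_t)$ gives
\begin{equation*}
\|\vtheta_{t+1}-\vtheta^*\|^2 = \|\vtheta_t-\vtheta^*\|^2 - \tfrac{2}{L}\langle \nabla f(\vtheta_t), \vtheta_t-\vtheta^*\rangle + \tfrac{1}{L^2}\|\nabla f(\vtheta_t)\|^2 .
\end{equation*}
The task is to show that the last two terms together are at most $-\frac{\mu}{L}\|\vtheta_t-\vtheta^*\|^2$.

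Two standard inequalities supply this, both using $\nabla f(\vtheta^*)=\vzero$. Strong convexity applied at $\vtheta_t$ toward $\vtheta^*$ gives
\begin{equation*}
f(\vtheta^*) \ge f(\vtheta_t) + \langle \nabla f(\vtheta_t), \vtheta^*-\vtheta_t\rangle + \tfrac{\mu}{2}\|\vtheta_t-\vtheta^*\|^2,
\end{equation*}
which rearranges to
\begin{equation*}
\langle \nabla f(\vtheta_t), \vtheta_t-\vtheta^*\rangle \ge f(\vtheta_t)-f(\vtheta^*) + \tfrac{\mu}{2}\|\vtheta_t-\vtheta^*\|^2 .
\end{equation*}
$L$-smoothness gives the descent bound $f(\vtheta_t-\frac1L\nabla f(\vtheta_t)) \le f(\vtheta_t)-\frac{1}{2L}\|\nabla f(\vtheta_t)\|^2$. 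Since the left side is at least $f(\vtheta^*)$, this yields
\begin{equation*}
\|\nabla f(\vtheta_t)\|^2 \le 2L\bigl(f(\vtheta_t)-f(\vtheta^*)\bigr).
\end{equation*}

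Substituting both bounds, the function-gap terms $\pm\frac{2}{L}(f(\vtheta_t)-f(\vtheta^*))$ cancel exactly. What remains is $-\frac{\mu}{L}\|\vtheta_t-\vtheta^*\|^2$, which is the one-step contraction with factor $1-\frac{1}{\kappa}$. Induction on $t$ then completes the proof. Since $\mu\le L$, the factor lies in $[0,1)$, so no sign issues arise when iterating.

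The main obstacle is bookkeeping rather than depth. The statement leaves the step size implicit, so the proof must declare $\eta=1/L$ at the outset. It must also make sure the constants match: with $\eta=1/L$ the cancellation is exact, whereas other step sizes (e.g.\ $2/(\mu+L)$) would give a different rate.
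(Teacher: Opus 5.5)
Your proof is correct and has the same skeleton as the paper's: step $1/L$, expand the square, one-step contraction by $1-\frac{1}{\kappa}$, then unroll. The difference is in which inequalities control the two extra terms.

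The paper stays at the gradient level. It uses co-coercivity, $\|\nabla f(x)-\nabla f(y)\|^2\le L\langle \nabla f(x)-\nabla f(y),x-y\rangle$, to bound $\frac{1}{L^2}\|\nabla f(\vtheta_t)\|^2$ by half the magnitude of the cross term. It then applies strong monotonicity, $\langle \nabla f(x)-\nabla f(y),x-y\rangle\ge\mu\|x-y\|^2$, to the remaining $-\frac{1}{L}\langle\nabla f(\vtheta_t),\vtheta_t-\vtheta^*\rangle$.

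You instead go through function values. You combine the first-order strong-convexity inequality with the descent lemma, so that the gaps $\pm\frac{2}{L}\bigl(f(\vtheta_t)-f(\vtheta^*)\bigr)$ cancel exactly.

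Your ingredients are more elementary. The descent lemma needs only a Lipschitz gradient, whereas co-coercivity is the Baillon--Haddad consequence of smoothness and convexity together, which the paper asserts without proof.

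In exchange, the paper's route works verbatim for any step $\eta\in(0,2/L)$, giving the factor $1-\eta\mu(2-\eta L)$. Your argument with a general step leaves a residual $-2\eta(1-L\eta)\bigl(f(\vtheta_t)-f(\vtheta^*)\bigr)$, which is nonpositive only for $\eta\le 1/L$. At $\eta=1/L$, which is all the lemma needs, both routes give exactly $1-\mu/L$.
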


The proof of Lemma \ref{lem:GDconvergence} is detailed in Appendix \ref{sec:GDconvergenceProof}, where we first analyze the distance between consecutive iterates using the smoothness and strong convexity and then derive the final estimate for the distance from the current iterate to the optimal parameter. 

\begin{restatable}{lemma}{GDtimesLemma}\label{lem:GDtimes}
For the target problem with a $\mu$-strongly convex and $L$-smooth loss, define the convergence condition $|| \vtheta_T-\vtheta^* || \leq \varepsilon$ where $\vtheta^T$ and $\vtheta^*$ are the parameter at convergence and the optimal parameter of the target problem respectively, let $C > 0$ be a constant and $\kappa$ be the condition number of the loss. The number of iterations required by gradient descent can be expressed as:
\begin{equation} 
  T \approx C \kappa \log \left(\frac{|| \vtheta_0-\vtheta^* ||}{\varepsilon}\right) \label{eq:GDtimes}
\end{equation}
\end{restatable}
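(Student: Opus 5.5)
The plan is to derive the iteration count directly from Lemma~\ref{lem:GDconvergence}. We find the smallest $T$ for which the right-hand side of \eqref{eq:GDconvergence} is at most $\varepsilon^2$. Once that holds, $\|\vtheta_T-\vtheta^*\|^2 \le \varepsilon^2$ follows, and so does the convergence condition $\|\vtheta_T-\vtheta^*\|\le\varepsilon$. So the whole argument reduces to solving the scalar inequality
\begin{equation*}
\left(1-\frac{1}{\kappa}\right)^{T}\|\vtheta_0-\vtheta^*\|^2 \le \varepsilon^2
\end{equation*}
for $T$. We may assume $\|\vtheta_0-\vtheta^*\|>\varepsilon$, since otherwise $T=0$ and there is nothing to prove.

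First, take logarithms. Because $\kappa\ge 1$, we have $0\le 1-1/\kappa<1$. The case $\kappa=1$ is trivial: with step size $1/L$ the iterate reaches the optimum in one step. Otherwise $\log(1-1/\kappa)<0$, and the inequality becomes
\begin{equation*}
T \ \ge\ \frac{2\log\left(\|\vtheta_0-\vtheta^*\|/\varepsilon\right)}{-\log\left(1-1/\kappa\right)}.
\end{equation*}

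Second, control the denominator with the elementary bounds
\begin{equation*}
\frac{1}{\kappa}\ \le\ -\log\left(1-\frac{1}{\kappa}\right)\ \le\ \frac{1}{\kappa-1}.
\end{equation*}
The lower bound comes from $\log(1-x)\le -x$. The upper bound comes from $-\log(1-x)\le x/(1-x)$. Applying the lower bound on $-\log(1-1/\kappa)$ shows that
\begin{equation*}
T_{\mathrm{suff}} := \left\lceil 2\kappa\log\left(\|\vtheta_0-\vtheta^*\|/\varepsilon\right)\right\rceil
\end{equation*}
iterations always suffice. Applying the upper bound shows that the minimal $T$ guaranteed by the bound of Lemma~\ref{lem:GDconvergence} is at least $2(\kappa-1)\log(\|\vtheta_0-\vtheta^*\|/\varepsilon)$. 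The guaranteed iteration count is therefore sandwiched between two quantities of order $\kappa\log(\|\vtheta_0-\vtheta^*\|/\varepsilon)$. This justifies the stated approximation with a constant $C$; the analysis gives $C=2$ up to the factor $(\kappa-1)/\kappa$ and the ceiling.

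The main obstacle is interpretive rather than technical. The statement uses ``$\approx$'' with an unspecified $C$, and Lemma~\ref{lem:GDconvergence} is only an upper bound on the error. The argument therefore establishes that $T$ is the iteration count \emph{certified} by the contraction rate, not a lower bound for every instance. I will make this explicit and absorb the gap between $1/\kappa$ and $-\log(1-1/\kappa)$, together with the ceiling, into the constant $C$. This is accurate for large $\kappa$, where $-\log(1-1/\kappa)=\frac{1}{\kappa}+O(\kappa^{-2})$. If a cleaner form is wanted, the proof can simply state that
\begin{equation*}
T=\left\lceil \kappa\log\left(\|\vtheta_0-\vtheta^*\|^2/\varepsilon^2\right)\right\rceil
\end{equation*}
suffices, which is the displayed expression with $C=2$.
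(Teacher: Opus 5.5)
Your proposal is correct and follows the same route as the paper. You take logarithms in the bound of Lemma~\ref{lem:GDconvergence} and control $\ln\bigl(\kappa/(\kappa-1)\bigr)$ with the elementary inequality $\ln x \le x-1$, which yields the paper's $T \ge 2(\kappa-1)\ln\bigl(\|\vtheta_0-\vtheta^*\|/\varepsilon\bigr)$. Your additional bound $-\ln(1-1/\kappa)\ge 1/\kappa$ gives the sufficiency direction $T=\lceil 2\kappa\ln(\|\vtheta_0-\vtheta^*\|/\varepsilon)\rceil$. The paper's one-sided argument needs this direction to claim the $\Theta(\kappa\log(\cdot))$ rate but never states it. Your separate treatment of $\kappa=1$ likewise covers a case the paper's formula leaves undefined.
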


The proof of Lemma \ref{lem:GDtimes} is detailed in Appendix \ref{sec:GDtimesProof}, which relies on Lemma \ref{lem:GDconvergence} to obtain a refined upper bound on the parameter distance and derive an estimate of the number of iterations required.

From the perspective of parameter updating, we utilize Lemma \ref{lem:GDtimes} to estimate the iterations required for curriculum learning and for non-curriculum training under the same convergence condition in Theorem \ref{thm:CLeffectiveness}. Under the assumptions that the loss of the easier curriculum problem has a lower condition number than the loss of the original problem and that their optimal parameters are close in distance, we establish that initializing on the easier problem and then transitioning to the original problem accelerates convergence, which is the effectiveness of curriculum learning.

\begin{restatable}{theorem}{CLeffectivenessTheorem}
  Given the convergence condition of the easier curriculum problem $|| \vtheta_{T_e} - \vtheta_e^* || \leq \varepsilon_e$ and the convergence condition of the original problem $|| \vtheta_T-\vtheta^* || \leq \varepsilon$, for curriculum satisfying $|| \vtheta_e^* - \vtheta^* || \ll || \vtheta_0-\vtheta^* || - \varepsilon_e$ and $\kappa_e<\kappa$, curriculum learning requires fewer iterations to converge compared with learning at once.
\label{thm:CLeffectiveness}
\end{restatable}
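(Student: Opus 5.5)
We compare two iteration counts using Lemma \ref{lem:GDtimes}, assuming the same constant $C$ in both stages. Learning at once starts from $\vtheta_0$ on $f$ and needs
\begin{equation*}
T_f \approx C\kappa \log\left(\frac{\|\vtheta_0-\vtheta^*\|}{\varepsilon}\right).
\end{equation*}
Curriculum learning first runs gradient descent on $f_e$ from $\vtheta_0$ until $\|\vtheta_{T_e}-\vtheta_e^*\|\le\varepsilon_e$. It then restarts gradient descent on $f$ from $\vtheta_{T_e}$. The first stage costs $T_e \approx C\kappa_e\log\left(\|\vtheta_0-\vtheta_e^*\|/\varepsilon_e\right)$. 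The second stage costs $T_f' \approx C\kappa\log\left(\|\vtheta_{T_e}-\vtheta^*\|/\varepsilon\right)$. We must show $T_e+T_f' < T_f$.

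\textbf{Bounding the second stage.} By the triangle inequality,
\begin{equation*}
\|\vtheta_{T_e}-\vtheta^*\|\le \|\vtheta_{T_e}-\vtheta_e^*\|+\|\vtheta_e^*-\vtheta^*\|\le \varepsilon_e+\delta.
\end{equation*}
Hence
\begin{equation*}
T_f' \lesssim C\kappa\log\left(\frac{\varepsilon_e+\delta}{\varepsilon}\right),
\end{equation*}
and the savings over $T_f$ in this stage are
\begin{equation*}
T_f - T_f' \gtrsim C\kappa\log\left(\frac{\|\vtheta_0-\vtheta^*\|}{\varepsilon_e+\delta}\right).
\end{equation*}
This quantity is positive and large because $\delta\ll\|\vtheta_0-\vtheta^*\|-\varepsilon_e$ gives $\varepsilon_e+\delta<\|\vtheta_0-\vtheta^*\|$, and the condition is meant to make the ratio substantially larger than $1$.

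\textbf{Bounding the first stage.} Again by the triangle inequality, $\|\vtheta_0-\vtheta_e^*\|\le\|\vtheta_0-\vtheta^*\|+\delta$, so
\begin{equation*}
T_e\lesssim C\kappa_e\log\left(\frac{\|\vtheta_0-\vtheta^*\|+\delta}{\varepsilon_e}\right).
\end{equation*}
It then suffices to show
\begin{equation*}
\kappa_e\log\frac{D+\delta}{\varepsilon_e} < \kappa\log\frac{D}{\varepsilon_e+\delta},
\end{equation*}
where $D=\|\vtheta_0-\vtheta^*\|$.

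\textbf{Main obstacle.} This last comparison does not follow from $\kappa_e<\kappa$ alone. The log factor on the left has the smaller denominator $\varepsilon_e$, so it is larger than the one on the right. I would use the smallness assumption $\delta\ll D-\varepsilon_e$ to argue that, to leading order, both logarithms equal $\log(D/\varepsilon_e)$. Write $\log\frac{D+\delta}{\varepsilon_e}=\log\frac{D}{\varepsilon_e}+O(\delta/D)$, and expand $\log\frac{D}{\varepsilon_e+\delta}$ similarly. The inequality then reduces to
\begin{equation*}
(\kappa-\kappa_e)\log\frac{D}{\varepsilon_e} > O\left(\frac{\kappa\delta}{\min(D,\varepsilon_e)}\right),
\end{equation*}
which holds when $\delta$ is small enough relative to the gap $\kappa-\kappa_e$.

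This step is only first-order. It matches the "$\approx$" and "$\ll$" level of rigor of Lemma \ref{lem:GDtimes} and the theorem statement, so I would present it as an asymptotic comparison rather than an exact inequality. I would also state explicitly the implicit regime that $\varepsilon_e$ is comparable to $\varepsilon$ and $\delta$ is small relative to $\varepsilon_e$. If that regime fails, I would fall back on a cruder argument. When the easy stage is solved to a coarse tolerance, $T_e$ carries the cheap factor $\kappa_e$, while the expensive factor $\kappa$ only multiplies the small logarithm $\log\frac{\varepsilon_e+\delta}{\varepsilon}$.
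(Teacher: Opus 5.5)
Your skeleton is the paper's. The triangle inequality bounds the warm start by $\varepsilon_e+\delta$, and Lemma~\ref{lem:GDtimes} is applied to each stage. Everything then reduces to comparing the savings $\kappa\log\bigl(D/(\varepsilon_e+\delta)\bigr)$ with the easy-stage cost $\kappa_e\log\bigl(\|\vtheta_0-\vtheta_e^*\|/\varepsilon_e\bigr)$, where $D=\|\vtheta_0-\vtheta^*\|$.

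The two proofs part ways at the last step. The paper notes that the first term of its inequality, $C\kappa\log\bigl((\varepsilon_e+\delta)/D\bigr)$, is negative, and then concludes from $\kappa_e<\kappa$ that the inequality \emph{always holds}. Your main obstacle is exactly why that inference fails. Within the paper's own iteration-count model, the statement is false under the two stated hypotheses alone. Take $D=10^6$, $\delta=1$, $\varepsilon=\varepsilon_e=10^{-6}$ and $\kappa_e=0.9\kappa$. Then $\delta\ll D-\varepsilon_e$ and $\kappa_e<\kappa$ both hold. Yet the easy-stage cost is at least $0.9\kappa\ln\bigl((10^6-1)/10^{-6}\bigr)\approx 24.9\kappa$, while the savings are only about $\kappa\ln 10^6\approx 13.8\kappa$. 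So the curriculum needs roughly $11C\kappa$ more iterations. An extra quantitative assumption of the kind you add ($\delta$ small relative to $\varepsilon_e$ and to the gap $\kappa-\kappa_e$) is therefore unavoidable, and your treatment is more accurate than the paper's.

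Three refinements. First, the comparison need not be only first-order. Applying $\log(1+x)\le x$ to both logarithms gives the sufficient condition $(\kappa-\kappa_e)\log(D/\varepsilon_e)>\kappa_e\delta/D+\kappa\delta/\varepsilon_e$, which is exact relative to Lemma~\ref{lem:GDtimes}.

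Second, say precisely why $\varepsilon_e$ versus $\varepsilon$ matters. The tolerance $\varepsilon$ drops out of your reduced inequality only because you bounded the second stage by $C\kappa\log\bigl((\varepsilon_e+\delta)/\varepsilon\bigr)$. When $\varepsilon_e+\delta<\varepsilon$ this is negative, so it is not a valid bound on an iteration count. In that case the second stage is free and the claim becomes $T_e<T_f$, which fails when, for instance, $\delta\approx 0$ and $\varepsilon_e$ is sufficiently small compared with $\varepsilon$. The paper's proof makes the same silent assumption. What you actually need is $\varepsilon_e+\delta\ge\varepsilon$.

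Third, your closing fallback is not an independent argument. A coarse $\varepsilon_e$ is just one way of making $\delta\ll\varepsilon_e$. When $\delta\gg\varepsilon_e$ and $\kappa_e/\kappa$ is close to $1$, the claim is false, as the example above shows, so no fallback can cover that regime.
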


The detailed proof of Theorem \ref{thm:CLeffectiveness} is given in Appendix \ref{sec:CLeffectivenessProof}. First we employ Lemma \ref{lem:GDtimes} to establish the relation of the iteration counts and the parameter distance during curriculum learning, then we formulate the iterations saved to show the effectiveness of curriculum learning.

Theorem \ref{thm:CLeffectiveness} demonstrates that appropriately designed curricula can reduce the number of iterations required to converge. Complementing this, we take a specific curriculum as an example to analyze why and when forgetting emerges during curriculum learning in Theorem \ref{thm:CLforget}, which shows that discrepancies between optima of curriculum tasks inevitably produce forgetting in curriculum learning. This forgetting is essential to the construction of curricula in Theorem \ref{thm:CLeffectiveness} with losses satisfying increasing condition numbers, which drive accelerated convergence. Hence, achieving acceleration via curriculum learning necessarily entails some degree of forgetting.

\begin{theorem}
  For the curricula with loss functions $f_t(\vtheta) = \frac{L_t}{2}||\vtheta - \vtheta_t^*||^2$, where $\vtheta_t^*$ is the optimal parameter, $t = 1, 2, ..., T$, $L_t > L_{t - 1} > 0$, $||\vtheta_t^* - \vtheta_{t - 1}^*||^2 > 0$ and $||\vtheta_{t - 1}^*||^2 > 0$, when optimized starting from $\vtheta_0 = \vzero$ with learning rate $\eta > 0$, forgetting exists and cannot be eliminated.
\label{thm:CLforget}
\end{theorem}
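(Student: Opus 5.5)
We read ``forgetting'' as follows. After the curriculum moves from task $t-1$ to task $t$, the loss $f_{t-1}$ evaluated at the current iterate rises above the value it had when training on task $t-1$ ended. ``Cannot be eliminated'' means this increase is strictly positive for every admissible choice of $\eta$ and of the stage lengths. The quadratic losses let us write gradient descent in closed form, so the plan is to compute the iterates exactly and then compare the earlier-task losses before and after each transition.

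\textbf{Step 1 (closed-form dynamics).} On stage $t$ the update is $\vtheta \leftarrow \vtheta - \eta L_t(\vtheta - \vtheta_t^*)$. Hence after $k$ steps
\begin{equation*}
\vtheta^{(k)} - \vtheta_t^* = (1-\eta L_t)^k\,(\vtheta^{(0)}_t - \vtheta_t^*),
\end{equation*}
where $\vtheta^{(0)}_t$ is the iterate handed over from stage $t-1$. The first stage starts from $\vtheta_0=\vzero$, so $\vtheta^{(0)}_1 - \vtheta_1^* = -\vtheta_1^*$. Inductively, every iterate lies on explicit segments or lines between consecutive optima. We take $0<\eta L_t<2$ so that each stage converges, matching Lemma~\ref{lem:GDconvergence}, whose rate here is $|1-\eta L_t|$. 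If a stage is run to convergence, the hand-over point is (approximately) $\vtheta_{t-1}^*$.

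\textbf{Step 2 (loss increase on the previous task).} Let $\vtheta_s$ denote the iterate at the end of stage $t-1$ and $\vtheta_{\mathrm{new}}$ an iterate during stage $t$. Then
\begin{equation*}
f_{t-1}(\vtheta_{\mathrm{new}}) - f_{t-1}(\vtheta_s) = \tfrac{L_{t-1}}{2}\left(\|\vtheta_{\mathrm{new}}-\vtheta_{t-1}^*\|^2 - \|\vtheta_s-\vtheta_{t-1}^*\|^2\right).
\end{equation*}
Since $\vtheta_{\mathrm{new}} \to \vtheta_t^*$ geometrically, the first norm tends to $\|\vtheta_t^*-\vtheta_{t-1}^*\|^2>0$. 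The second norm is bounded by Lemma~\ref{lem:GDconvergence}: at most $(1-\eta L_{t-1})^{2k}$ times the initial gap, which can be made small. So whenever stage $t-1$ has converged to tolerance $\varepsilon_{t-1} < \|\vtheta_t^*-\vtheta_{t-1}^*\|$ and stage $t$ runs long enough, the increase is strictly positive. In the idealized exact-convergence case it equals $\frac{L_{t-1}}{2}\|\vtheta_t^*-\vtheta_{t-1}^*\|^2>0$. The hypothesis $\|\vtheta_{t-1}^*\|>0$ is used at $t=2$: it guarantees the first stage actually moves away from $\vtheta_0=\vzero$, so $f_1$ attains a nontrivial decrease that is then partially undone.

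\textbf{Step 3 (impossibility of elimination).} Forgetting would vanish only if $f_{t-1}$ at the final iterate matched its best attained value. This would require the final iterate to stay near $\vtheta_{t-1}^*$, contradicting convergence to $\vtheta_t^*\neq\vtheta_{t-1}^*$ on the current task. The gap is independent of $\eta$ and of stage lengths: no learning rate avoids it, because any convergent schedule ends near $\vtheta_t^*$. We then link this to Theorem~\ref{thm:CLeffectiveness}. Setting $\vtheta_t^*=\vtheta_{t-1}^*$ would remove forgetting, but it trivializes the curriculum, so distinct optima are intrinsic to the setting.

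The main obstacle is making ``forgetting'' precise in a way that is robust to finite stage lengths and to overshooting when $\eta L_t>1$. With oscillation, an intermediate iterate of stage $t$ could momentarily pass closer to $\vtheta_{t-1}^*$. I would handle this by stating the result asymptotically in the stage length, or for the converged iterate, and by using the closed form of Step 1 to bound the residual terms explicitly.
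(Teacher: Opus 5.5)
Your argument is correct under the reading you end with: every stage is run to convergence, and stage $t-1$ finishes within $\varepsilon_{t-1}<\|\vtheta_t^*-\vtheta_{t-1}^*\|$ of its optimum. It does, however, take a different route from the paper, both in the training dynamics and in what counts as forgetting.

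\textbf{The paper's route.} It follows IPM, with one proximal step per task: $\vtheta_t=\alpha_t\vtheta_{t-1}+(1-\alpha_t)\vtheta_t^*$ with $\alpha_t=1/(1+\eta L_t)$. This is a convex combination, so it is stable for every $\eta>0$, and no restriction $\eta L_t<2$ is needed (the statement imposes none). Forgetting is defined as the average suboptimality of \emph{all} tasks at the final iterate, $\mathrm{Fgt}=\frac{1}{2T}\sum_{t=1}^{T}L_t\|\vtheta_T-\vtheta_t^*\|^2$. The paper then uses $\vtheta_T\approx\vtheta_T^*$ to get $\frac{1}{T}\sum_t\frac{L_t}{2}\|\vtheta_T^*-\vtheta_t^*\|^2>0$.

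\textbf{Your route.} You run multi-step gradient descent per stage and measure the per-transition increase $f_{t-1}(\vtheta_{\mathrm{new}})-f_{t-1}(\vtheta_s)$. This is closer to the usual continual-learning notion. It also gives an explicit per-transition amount, $\frac{L_{t-1}}{2}\|\vtheta_t^*-\vtheta_{t-1}^*\|^2$, with finite-time control from Lemma~\ref{lem:GDconvergence}.

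\textbf{What your choice costs.} A difference can change sign. Take one dimension with $\vtheta_1^*=1$, $\vtheta_2^*=2$, $\eta L_1=0.1$, $\eta L_2=0.2$ and one step per stage. The iterates go $0\to0.1\to0.48$, and $f_1$ \emph{decreases}. So your opening claim, that the increase is positive for every step size and every stage length, is false. It must be replaced by the converged statement you finish with.

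\textbf{What the paper's choice buys.} Its metric avoids this problem entirely. It is a sum of nonnegative terms that cannot all vanish, since $\vtheta_{T-1}^*\neq\vtheta_T^*$. Hence it is positive for any final iterate, step size and schedule; the approximation $\vtheta_T\approx\vtheta_T^*$ is needed only for the closed form. The obstacle in your last paragraph (finite stages, overshoot when $\eta L_t>1$) therefore never arises there. The same averaged end-of-curriculum quantity is also what Theorem~\ref{thm:EWCeffectiveness} reuses, through $\vtheta_T^{opt}$. Your per-transition measure would not feed into that later EWC comparison without reworking it.
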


\begin{proof}

Forgetting is defined as the average loss of previous tasks. Let the parameter after learning the last task be $\vtheta_T$, and forgetting can be formulated as:
\begin{equation}
  \mathrm{Fgt} = \frac{1}{T}\sum_{t = 1}^{T} \left(f_t(\vtheta_T) - f_t^*(\vtheta_t^*)\right) \label{eq:IPMforget}
\end{equation}

In accordance with IPM \citep{Cai2025IPM}, the parameter update for optimizing the $t$-th task with learning rate $\eta$ is given by:
\begin{equation}
    \vtheta_t = \mathrm{prox}_{\eta f_t}(\vtheta_t) = \argmin_{\vtheta \in \sR^d} \left(\frac{1}{2\eta} || \vtheta - \vtheta_{t - 1} ||^2 + f_t(\vtheta) \right) \label{eq:IPMupdate}
\end{equation}

Using Eq. \ref{eq:IPMupdate}, for the specified curriculum tasks we have:
\begin{equation}
  \vtheta_t = \argmin_{\vtheta \in \sR^d} \left(\frac{1}{2\eta} ||\vtheta - \vtheta_{t - 1}||^2 + \frac{L_t}{2}||\vtheta - \vtheta_t^*||^2 \right) \label{eq:IPMCurriculum}
\end{equation}

The optimization problem in Eq. \ref{eq:IPMCurriculum} is quadratic and possesses a global minimum. Setting its derivative to zero yields
\begin{align}
    \left(\frac{1}{\eta} + L_t\right) \vtheta_t &= \frac{\vtheta_{t - 1}}{\eta} + L_t \vtheta_t^* \nonumber \\
    \vtheta_t &= \frac{\vtheta_{t - 1} + \eta L_t \vtheta_t^*}{1 + \eta L_t} \label{eq:IPMCase}
\end{align}

To simplify the expression for $\vtheta_t$, we define $\alpha_t = \frac{1}{1 + \eta L_t}$. Eq. \ref{eq:IPMCase} can be rewritten as:
\begin{equation}
  \vtheta_t = \alpha_t \vtheta_{t - 1} + (1 - \alpha_t) \vtheta_t^* \label{eq:CurriculumUpdate}
\end{equation}

From Eq. \ref{eq:CurriculumUpdate}, the process of parameter updating consists of two parts. $\alpha_t \vtheta_{t - 1}$ is the starting point of the update representing the regularization loss, while $(1 - \alpha_t) \vtheta_t^*$ corresponds to the target of the update representing the target loss.

The curriculum design satisfies an ascending $L_t$ and a constant $\eta$, thus $\alpha_t$ decreases with respect to $t$. As the curriculum concludes, $\vtheta_t$ approaches $\vtheta_t^*$.

We now turn to the forgetting in the curriculum. From Eq. \ref{eq:IPMforget}, we have:
\begin{equation}
  \mathrm{Fgt} = \frac{1}{T}\sum_{t = 1}^{T} \left( \frac{L_t}{2} ||\vtheta_T - \vtheta_t^*||^2 \right) \label{eq:Forget}
\end{equation}

Based on the analysis of Eq. \ref{eq:CurriculumUpdate}, since at the end of the curriculum $\vtheta_T \approx \vtheta_T^*$, we have
\begin{equation}
  \mathrm{Fgt}^{vanilla} \approx \frac{1}{T}\sum_{t = 1}^{T} \left( \frac{L_t}{2} ||\vtheta_T^* - \vtheta_t^*||^2 \right) \label{eq:ForgetVanilla}
\end{equation}

Following the curriculum design we have $L_t > 0$ and $||\vtheta_T^* - \vtheta_t^*||^2 > 0$. Consequently, $\mathrm{Fgt} > 0$ always holds, and forgetting cannot be eliminated.

\end{proof}

\section{\textsf{IV-EWC} Method}

In this section, we introduce \textsf{IV-EWC} based on the preceding analysis of the effectiveness and forgetting in curriculum learning.
We first detail the augmented training strategy of \textsf{IV-EWC}. We then theoretically analyze the effect of incorporating the Elastic Weight Consolidation (EWC) loss term, which utilizes the Fisher Information Matrix computed from the previous training data to provide parameter-wise regularization, characterizing the ability of EWC-based methods to mitigate catastrophic forgetting in curriculum learning.

\subsection{Influence-Validation EWC}

As formalized in Eq. \ref{eq:ForgetVanilla}, forgetting in curriculum learning arises because different tasks induce distinct optima for the model parameters. Since tasks in the curriculum generally do not share the same optima, forgetting cannot be completely eliminated through curriculum design alone.
Nevertheless, constraining parameter updates during training can limit drift toward the optima of the challenging tasks, providing a new perspective to eliminate catastrophic forgetting.
Motivated by this observation and the relatedness among tasks in the curriculum, \textsf{IV-EWC} employs the influence function to select a progressively expanding validation set from the curriculum's training data to compute the Fisher Information Matrix and apply EWC regularization, thereby attenuating the impact of noisy samples that contribute little to an informative regularization signal.

In curriculum learning, challenging tasks generally subsume the knowledge required for easier tasks. Excessive regularization during training on the challenging tasks can bias the model toward simple representations that favor the easier tasks, thereby limiting its ability to capture the additional complexity of the challenging tasks. To mitigate this effect, we employ a dynamic regularization coefficient $\lambda = \frac{\gamma}{t}$, where $\lambda$ is the regularization weight of EWC loss, $\gamma > 0$ is a hyperparameter controlling the initial regularization weight, and $t$ denotes the task index starting from 0. Regularization starts from task number 1 and progressively reduces as the curriculum advances, enabling more effective parameter updating while preserving previously acquired knowledge during the training of challenging tasks. The loss function of the $t$-th task during training is
\begin{equation}
  L(\vtheta_t) = 
  \begin{cases}
  L_0(\vtheta_0), & t = 0 \\
  L_t(\vtheta_t) + \sum_{i} \lambda \mF_{\vtheta_i} ||\vtheta_{t, i} - \vtheta_{t - 1, i}||^2, & t > 0
  \end{cases}
\end{equation}
where $\vtheta_t$ is the model parameter trained during task $t$, $\vtheta_{t-1}$ is the model parameter trained after task $t - 1$, $\vtheta_0$ is the model parameter during task $0$ training, $L_t$ is the task loss of task $t$, $L_0$ is the task loss of task $0$, $\mF$ is the Fisher Information Matrix computed using the validation set after task $t - 1$ and $i$ labels the model parameters.

\subsection{Theoretical Analysis}

To assess the effectiveness of \textsf{IV-EWC}, we examine EWC regularization in curriculum learning under the setup of Theorem \ref{thm:CLforget}. Using forgetting defined in Eq. \ref{eq:Forget}, Theorem \ref{thm:EWCeffectiveness} shows that EWC-based regularization reduces forgetting and can eliminate it entirely with a suitably tuned regularization weight, providing a formal justification for \textsf{IV-EWC}.

\begin{theorem}
  For the curricula with loss functions $f_t(\vtheta) = \frac{L_t}{2}||\vtheta - \vtheta_t^*||^2$, where $\vtheta_t^*$ is the optimal parameter, $t = 1, 2, ..., T$, $L_t > L_{t - 1} > 0$, $||\vtheta_t^* - \vtheta_{t - 1}^*||^2 > 0$ and $||\vtheta_{t - 1}^*||^2 > 0$, when the training data of task $t$ follows a Gaussian distribution $\mathcal{N}(\vtheta_t^*, \frac{1}{L_t})$ and the optimization starts from $\vtheta_0 = \vzero$ with learning rate $\eta > 0$, incorporating an EWC loss calculated from data across previous tasks with the regularization weight $\lambda$ ($0 < \lambda \leq 1$) is able to alleviate forgetting.
\label{thm:EWCeffectiveness}
\end{theorem}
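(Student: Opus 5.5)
We follow the proof of Theorem \ref{thm:CLforget}: we compute the proximal (IPM) update in closed form, now with an EWC term added, and then compare the resulting forgetting with $\mathrm{Fgt}^{vanilla}$ in Eq. \ref{eq:ForgetVanilla}.

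\textbf{Step 1: the EWC regularizer.} If task $t$ has data distributed as $\mathcal{N}(\vtheta_t^*, \frac{1}{L_t})$, then the negative log-likelihood is, up to constants, $\frac{L_t}{2}||\vtheta-\vtheta_t^*||^2 = f_t(\vtheta)$. Its Fisher Information Matrix is therefore $L_t I$. We pool data across the previous tasks, using the Fisher of each task $i<t$. The EWC term at task $t$ becomes
\begin{equation*}
\frac{\lambda}{2}\sum_{i<t} L_i ||\vtheta-\vtheta_{t-1}||^2 = \frac{\lambda S_{t-1}}{2}||\vtheta-\vtheta_{t-1}||^2, \qquad S_{t-1}=\sum_{i<t}L_i.
\end{equation*}
If one prefers anchoring each previous task at its own optimum $\vtheta_i^*$, the algebra is analogous but produces a weighted average of the anchors. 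We commit to the anchor $\vtheta_{t-1}$, which matches the loss defined for \textsf{IV-EWC}.

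\textbf{Step 2: the regularized update.} We minimize
\begin{equation*}
\frac{1}{2\eta}||\vtheta-\vtheta_{t-1}||^2 + \frac{\lambda S_{t-1}}{2}||\vtheta-\vtheta_{t-1}||^2 + \frac{L_t}{2}||\vtheta-\vtheta_t^*||^2 .
\end{equation*}
Setting the gradient to zero, exactly as in Eq. \ref{eq:IPMCase}, gives
\begin{equation*}
\vtheta_t=\beta_t\vtheta_{t-1}+(1-\beta_t)\vtheta_t^*, \qquad \beta_t=\frac{1+\eta\lambda S_{t-1}}{1+\eta\lambda S_{t-1}+\eta L_t}>\alpha_t .
\end{equation*}
Unrolling from $\vtheta_0=\vzero$ expresses $\vtheta_T$ as a convex-type combination $\sum_t w_t\vtheta_t^*$ with $w_t=(1-\beta_t)\prod_{s>t}\beta_s$. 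The vanilla weights $w_t^{van}$ are the same expression with $\alpha$ in place of $\beta$. Because $\beta_s>\alpha_s$, the EWC update shifts mass from the last task toward earlier tasks.

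\textbf{Step 3: comparing forgetting.} Define
\begin{equation*}
\mathrm{Fgt}(\vtheta)=\frac{1}{T}\sum_t\frac{L_t}{2}||\vtheta-\vtheta_t^*||^2 .
\end{equation*}
This is a strongly convex quadratic minimized at $\bar\vtheta=\sum_t L_t\vtheta_t^*/\sum_t L_t$, and it satisfies
\begin{equation*}
\mathrm{Fgt}(\vtheta)=\mathrm{Fgt}(\bar\vtheta)+\frac{S_T}{2T}||\vtheta-\bar\vtheta||^2 .
\end{equation*}
So it suffices to show $||\vtheta_T^{EWC}-\bar\vtheta||<||\vtheta_T^{van}-\bar\vtheta||$. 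As in Theorem \ref{thm:CLforget}, the vanilla endpoint satisfies $\vtheta_T^{van}\approx\vtheta_T^*$, which is the extreme point of the weighting. The EWC weights $w_t$ are closer to the normalized $L_t$ weights defining $\bar\vtheta$. Concretely, $1-\beta_T=\eta L_T/(1+\eta\lambda S_{T-1}+\eta L_T)$ is strictly smaller than $1-\alpha_T$, and the retained mass moves onto the earlier optima.

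A cleaner route for Step 3 is a monotonicity argument. We view $\mathrm{Fgt}(\vtheta_T(\lambda))$ as a function of $\lambda\in(0,1]$ and show that its derivative at $\lambda=0$ is negative. This reduces to checking that $\partial_\lambda\vtheta_T$ points from $\vtheta_T^{van}\approx\vtheta_T^*$ toward $\bar\vtheta$, that is, toward the average of the earlier optima. The final claim is that $\mathrm{Fgt}^{EWC}<\mathrm{Fgt}^{vanilla}$, with vanishing forgetting at best only in the limiting case.

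\textbf{Main obstacle.} Step 3 is the hardest part. The strict inequality is not automatic for arbitrary geometry of the $\vtheta_t^*$, since shifting weight could overshoot $\bar\vtheta$. I would first work under the same approximation used in Theorem \ref{thm:CLforget}, namely $\vtheta_T^{van}\approx\vtheta_T^*$. I would then show that with $0<\lambda\le 1$ the shift is bounded: $\lambda S_{T-1}$ is at most comparable to $L_T$ only when the reweighting stays within the segment toward $\bar\vtheta$. If that approach fails, I would fall back to the derivative-at-zero argument and conclude only that forgetting is alleviated for sufficiently small $\lambda$ in $(0,1]$.
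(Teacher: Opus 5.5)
There is a genuine gap, and it starts in Step 1, not Step 3. With the penalty you commit to, the conclusion is false in general, so neither Step 3 nor the derivative-at-zero fallback can be completed.

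Your choice mirrors the practical \textsf{IV-EWC} loss. However, anchoring the whole accumulated penalty $\frac{\lambda S_{t-1}}{2}\|\vtheta-\vtheta_{t-1}\|^2$ at the previous iterate merges it with the proximal term. Your update is then exactly the vanilla IPM step with the smaller learning rate $\eta/(1+\eta\lambda S_{t-1})$, and nothing in it pulls toward the earlier optima beyond the trajectory itself. Your unrolled weights satisfy $\sum_t w_t=1-\prod_s\beta_s$, so increasing the $\beta_s$ also moves mass onto $\vtheta_0=\vzero$. The claim that the $w_t$ approach the normalized $L_t$ weights is therefore unsupported.

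Two one-dimensional examples show the failure:
\begin{itemize}
\item Take $T=2$, $L=(1,2)$, $\vtheta^*=(10,11)$, $\eta=0.1$. The vanilla endpoint of your prox recursion is $\approx 2.59$, while the EWC endpoint with $\lambda=1$ is $\approx 2.46$, farther from $\bar{\vtheta}=32/3$.
\item Let $\eta\to\infty$, where $\vtheta_T^{van}\approx\vtheta_T^*$ is legitimate. Your recursion becomes $\vtheta_t=(L_t\vtheta_t^*+\lambda S_{t-1}\vtheta_{t-1})/(L_t+\lambda S_{t-1})$. With $T=3$, $L=(1,2,3)$, $\vtheta^*=(1,3,5/2)$ and $\lambda=1/2$ it gives $\vtheta_3=38/15$, whereas $\bar{\vtheta}=29/12$ and $\vtheta_3^*=5/2$. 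EWC strictly increases forgetting.
\end{itemize}
In both examples this already happens for every small $\lambda>0$. Separately, Step 3 compares an exactly unrolled EWC endpoint with the approximated vanilla endpoint $\vtheta_T^*$; the two endpoints must be treated the same way.

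The missing idea is the option you set aside: anchor each earlier task at its own optimum,
\[
f_T^{EWC}(\vtheta)=\frac{L_T}{2}\|\vtheta-\vtheta_T^*\|^2+\frac{\lambda}{2}\sum_{t<T}L_t\|\vtheta-\vtheta_t^*\|^2 .
\]
Then, consistently with $\vtheta_T^{van}\approx\vtheta_T^*$ being the minimizer of $f_T$, take $\vtheta_T^{EWC}$ to be the minimizer of $f_T^{EWC}$. This is the paper's route, and it needs no unrolling.

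Set $\bar{\vtheta}_{<T}=\sum_{t<T}L_t\vtheta_t^*/S_{T-1}$ and $s(\lambda)=\lambda S_{T-1}/(L_T+\lambda S_{T-1})$. Then $\vtheta_T^{EWC}=\vtheta_T^*+s(\lambda)(\bar{\vtheta}_{<T}-\vtheta_T^*)$ and $\bar{\vtheta}=\vtheta_T^*+s(1)(\bar{\vtheta}_{<T}-\vtheta_T^*)$. So the three points are collinear, and
\[
\|\vtheta_T^{EWC}-\bar{\vtheta}\|=\bigl(s(1)-s(\lambda)\bigr)\,\|\bar{\vtheta}_{<T}-\vtheta_T^*\|<s(1)\,\|\bar{\vtheta}_{<T}-\vtheta_T^*\|=\|\vtheta_T^*-\bar{\vtheta}\|
\]
for $0<\lambda\le 1$, provided $\bar{\vtheta}_{<T}\neq\vtheta_T^*$.

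Combined with your decomposition (the paper's Eq.~\ref{eq:SimplifyForget}), this yields $\mathrm{Fgt}^{EWC}<\mathrm{Fgt}^{vanilla}$, with the minimal value $\mathrm{Fgt}(\bar{\vtheta})$ reached at $\lambda=1$. It also explains the restriction $\lambda\le 1$: for $\lambda>1$ the point passes $\bar{\vtheta}$ and the comparison is no longer automatic, which is exactly the overshoot you worried about. On the second example above this endpoint is $22/9$, at distance $1/36$ from $\bar{\vtheta}$ versus $1/12$ for $\vtheta_3^*$.

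This collinearity is what actually underlies the paper's one-line claim $\Delta>0$. Note also that $\bar{\vtheta}_{<T}\neq\vtheta_T^*$ does not follow from the stated hypotheses (take $\vtheta^*=(1,3,7/3)$, $L=(1,2,3)$). In that case one only gets equality.
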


\begin{proof}

To better compare curriculum learning with EWC regularization and vanilla curriculum learning, we first simplify Eq. \ref{eq:Forget}. Under the specified curriculum setting, $\mathrm{Fgt}$ has a global minimum, where the optimal $\vtheta_T^{opt}$ is given by
\begin{equation}
  \vtheta_T^{opt} = \frac{\sum_{t = 1}^{T}L_t \vtheta_t^*}{\sum_{t = 1}^{T}L_t}
\end{equation}

Then forgetting of the last task in the curriculum can be simplified as follows
\begin{align}
  \mathrm{Fgt} &= \frac{1}{2T} \left( \sum_{t = 1}^{T}L_t ||\vtheta_T||^2 - 2\sum_{t = 1}^{T}L_t \langle\vtheta^{opt}_T, \vtheta_T \rangle + \sum_{t = 1}^{T}L_t ||\vtheta^*_t||^2 \right) \nonumber \\
  &= \frac{1}{2T} \sum_{t = 1}^{T}L_t || \vtheta_T - \vtheta^{opt}_T ||^2 + C' \label{eq:SimplifyForget}
\end{align}
where $\langle \cdot \rangle$ is the dot product of two vectors, $C' = \frac{1}{2T} \left( \sum_{t = 1}^{T}L_t ||\vtheta^*_t||^2 - \sum_{t = 1}^{T}L_t ||\vtheta_T^{opt}||^2 \right)$ is a constant defined by the curricula.

Under the assumption of the $t$-th task's training data distribution $\mathcal{N}(\vtheta_t^*, \frac{1}{L_t})$, the Fisher Information Matrix used in EWC can be simplified to the second derivative of the loss, which is $L_t$. Adding an EWC loss across previous tasks to the original task yields the following new loss function for task $t$:
\begin{equation}
  f_t^{EWC}(\vtheta) = \frac{L_t}{2}||\vtheta - \vtheta_t^*||^2 + \frac{\lambda}{2} \sum_{t' = 1}^{t - 1} \left( L_{t'} ||\vtheta - \vtheta^*_{L_{t'}}||^2 \right)
\end{equation}

The loss of the last task in the curriculum with EWC is $f_T^{EWC}(\vtheta)$, setting its derivative to zero gives
\begin{equation}
  \vtheta_T^{EWC} = \frac{L_T \vtheta_T^* + \lambda \sum_{t' = 1}^{T - 1}L_{t'}\vtheta_{t'}^*}{L_T + \lambda \sum_{t' = 1}^{T - 1}L_{t'}}
\end{equation}

Forgetting of the last task with EWC regularization is
\begin{equation}
  \mathrm{Fgt}^{EWC} = \frac{1}{T}\sum_{t = 1}^{T} \left( \frac{L_t}{2} ||\vtheta_T^{EWC} - \vtheta_t^*||^2 \right) \label{eq:ForgetEWC}
\end{equation}

Comparing Eq. \ref{eq:ForgetVanilla} with Eq. \ref{eq:ForgetEWC}, let $\Delta = \mathrm{Fgt}^{vanilla} - \mathrm{Fgt}^{EWC}$. Using Eq. \ref{eq:SimplifyForget} to simplify $\Delta$ we have
\begin{equation}
  \Delta \approx \frac{1}{2T} \sum_{t = 1}^{T}L_t \left( ||\vtheta_T^* - \vtheta^{opt}_T||^2 - ||\vtheta_T^{EWC} - \vtheta^{opt}_T||^2 \right) \label{eq:ForgetDeltaSimple}
\end{equation}

When $0 < \lambda \leq 1$, $\Delta > 0$ is granted, since $||\vtheta_T^{EWC} - \vtheta_T^{opt}||^2 \geq 0$ and $||\vtheta_T^* - \vtheta_T^{EWC}||^2 > 0 $. Curriculum learning using EWC demonstrates reduced forgetting. If $\lambda$ is appropriately chosen, specifically $\lambda = 1$, forgetting can be reduced to zero.

\end{proof}

\section{Experiments}

In this section we present empirical evidence that substantiates the preceding theoretical analysis of forgetting in curriculum learning. We further benchmark \textsf{IV-EWC} against four continual learning methods designed to mitigate forgetting in this setting.

\subsection{Experimental Settings}

\begin{table}[t]
\caption{Evaluation on three base models across three math reasoning tasks. \textbf{Bold} and \underline{underlined} values indicate the \textbf{best} and \underline{second-best} results, respectively.}
\label{tb:Compare}
\begin{center}
\small
\begin{tabularx}{\textwidth}{@{}>{\hsize=1.96\hsize}X|>{\hsize=0.84\hsize\Centering}X>{\hsize=0.84\hsize\Centering}X|>{\hsize=0.84\hsize\Centering}X>{\hsize=0.84\hsize\Centering}X|>{\hsize=0.84\hsize\Centering}X>{\hsize=0.84\hsize\Centering}X@{}}
  \toprule
   & \multicolumn{2}{c|}{GSM8K} & \multicolumn{2}{c|}{MATH} & \multicolumn{2}{c}{cn\_k12} \\
   & \multicolumn{1}{c}{Forget} & \multicolumn{1}{c|}{ACC} & \multicolumn{1}{c}{Forget} & \multicolumn{1}{c|}{ACC} & \multicolumn{1}{c}{Forget} & \multicolumn{1}{c}{ACC} \\ \midrule
  Qwen2.5-3B-Instruct &  0.0106 & 0.8084 &  0.0238 & 0.7191 &  0.0146 & 0.4421 \\
  + Random Replay     &  0.0153 & 0.7989 &  0.0496 & 0.6950 &  0.0137 & 0.4380 \\
  + EWC               &  0.0050 & \underline{0.8258} & -0.0031 & \underline{0.7406} &  0.0172 & \underline{0.4435} \\
  + A-GEM             &  0.0067 & 0.8054 &  0.0011 & 0.7203 &  0.0221 & 0.4377 \\
  + UPGD              & \underline{-0.0008} & 0.8127 & \underline{-0.0037} & 0.7366 &  \underline{0.0060} & 0.4422 \\
  + \textsf{IV-EWC}   & \textbf{-0.0103} & \textbf{0.8298} & \textbf{-0.0075} & \textbf{0.7461} & \textbf{-0.0095} & \textbf{0.4446} \\ \midrule
  Qwen2.5-7B-Instruct &  0.0062 & 0.8998 &  0.0114 & 0.8389 &  0.0157 & 0.4907 \\
  + Random Replay     &  0.0091 & 0.9025 &  0.0141 & 0.8410 &  0.0064 & 0.4944 \\
  + EWC               &  \underline{0.0021} & 0.9018 &  \underline{0.0053} & \underline{0.8417} & \underline{-0.0001} & 0.4950 \\
  + A-GEM             &  0.0036 & 0.8828 &  0.0224 & 0.8116 &  0.0093 & 0.4930 \\
  + UPGD              &  0.0054 & \textbf{0.9081} &  0.0108 & 0.8265 &  0.0006 & \underline{0.4991} \\
  + \textsf{IV-EWC}   & \textbf{-0.0066} & \underline{0.9042} & \textbf{-0.0028} & \textbf{0.8464} & \textbf{-0.0113} & \textbf{0.5005} \\ \midrule
  Llama-3.2-3B-Instruct &  0.0047 & \textbf{0.7483} &  0.0183 & 0.5446 &  0.0147 & 0.3532 \\
  + Random Replay     &  0.0067 & 0.7454 &  0.0293 & 0.5346 &  0.0221 & 0.3502 \\
  + EWC               &  0.0058 & 0.7434 &  \underline{0.0026} & \underline{0.5471} & 0.0244 & \underline{0.3538} \\
  + A-GEM             &  \underline{0.0014} & 0.7398 &  0.0081 & 0.5316 &  \underline{0.0143} & 0.3530 \\
  + UPGD              &  0.0030 & 0.7445 &  0.0111 & 0.5459 &  0.0193 & 0.3516 \\
  + \textsf{IV-EWC}   & \textbf{-0.0028} & \underline{0.7465} & \textbf{-0.0007} & \textbf{0.5512} & \textbf{-0.0141} & \textbf{0.3610} \\ \bottomrule
\end{tabularx}
\end{center}
\end{table}

\textbf{Datasets}.
Following previous works \citep{Parashar2026E2HReasoner, Xi2024ReverseCurriculum}, we selected two widely used math reasoning datasets: GSM8K \citep{Cobbe2021GSM8KDataset} and MATH \citep{Hendrycks2021MATHDataset}. We used the classified version of GSM8K created by \citet{Parashar2026E2HReasoner}, where the difficulties of questions are evaluated by zero-shot performance. We also created a cn\_k12 dataset from the NuminaMath-CoT \citep{LI2024NuminaMath} dataset to test the model's ability to solve problems with diverse formats. Detailed dataset descriptions are provided in Appendix \ref{sec:Dataset}.

\textbf{Metrics}.
Following \citet{Jiang2025FunctionVector}, we adopted the following metrics: (1) $\boldsymbol{\mathrm{FP}} = \frac{1}{T} \sum_{j = 1}^{T} a_T^j$, which is the final average performance across all tasks learned by the model, $T$ denotes the index of the last learned task and $a_T^j$ refers to the accuracy on the test set of task $j$ evaluated after learning task $T$. (2) $\boldsymbol{\mathrm{AP}} = \frac{1}{T} \sum_{j = 1}^{T} a_{j}^j$, which summarizes the average performance measured immediately after the learning of each task. (3) $\boldsymbol{\mathrm{Forget}} = \frac{FP - AP}{AP}$, which is the relative divergence between $\mathrm{FP}$ and $\mathrm{AP}$, evaluating overall performance differences between tasks. (4) $\boldsymbol{\mathrm{ACC}}$, which represents the overall accuracy of the model on the whole test set spanning all curriculum tasks, unlike $\mathrm{FP}$, it accounts for the uneven distribution of test sample difficulty that can arise in certain datasets.

\textbf{Baselines}.
To investigate the effectiveness of \textsf{IV-EWC}, we compared it with five baselines: (1) \textbf{vanilla curriculum}, where the model is trained directly on a sequence of curriculum tasks with the given learning rate. (2) \textbf{random replay}, where the training data includes a fixed number of selected samples from the training data of the previous task starting from the second task. (3) \textbf{EWC} \citep{Kirkpatrick2017EWC}, where an EWC loss is added to the original loss function and it uses the same regularization weight as \textsf{IV-EWC}. (4) \textbf{A-GEM} \citep{Chaudhry2019AGEM}, which utilizes reference samples selected from the training set of the previous task to control gradient updates; the number of reference samples is the same as random replay. (5) \textbf{UPGD} \citep{Elsayed2024UPGD}, which uses utility to measure the significance of each parameter and instruct the learning process.

\textbf{Implementation Details}.
In the experiments we adopted Qwen2.5-3B-Instruct, Qwen2.5-7B-Instruct and Llama-3.2-3B-Instruct as base models. The hyperparameter settings in the experiments are detailed in Appendix \ref{sec:HyperParameter}. All algorithms were implemented in Python and executed using the Python 3.12 interpreter. The 3B model experiments were carried out on one NVIDIA RTX PRO 6000 and the 7B model experiments were carried out on two NVIDIA RTX PRO 6000s. To ensure reliability, experiment data are presented as the mean of four independent runs.

\subsection{Comparison Study}

Table \ref{tb:Compare} summarizes performance for three base models over three tasks. The results indicate forgetting within the curriculum learning process, with greater model capacity corresponding to less forgetting and higher accuracy. Several approaches exhibit negative forgetting, where the performance on previously learned, easier tasks improves following training on more challenging tasks. This phenomenon is partly attributable to the inherent representation sharing in large language models and to the strong inter-level task relatedness induced by curriculum learning. \textsf{IV-EWC} achieves superior performance relative to all baselines on both forgetting and overall accuracy in nearly all tested configurations, with more substantial gains in smaller models. Negative forgetting is also observed in all scenarios using \textsf{IV-EWC}, confirming its effectiveness.

\begin{figure}[t]
\centering
\begin{subfigure}[b]{0.329\textwidth}
  \centering
  \includegraphics[height=3.07cm]{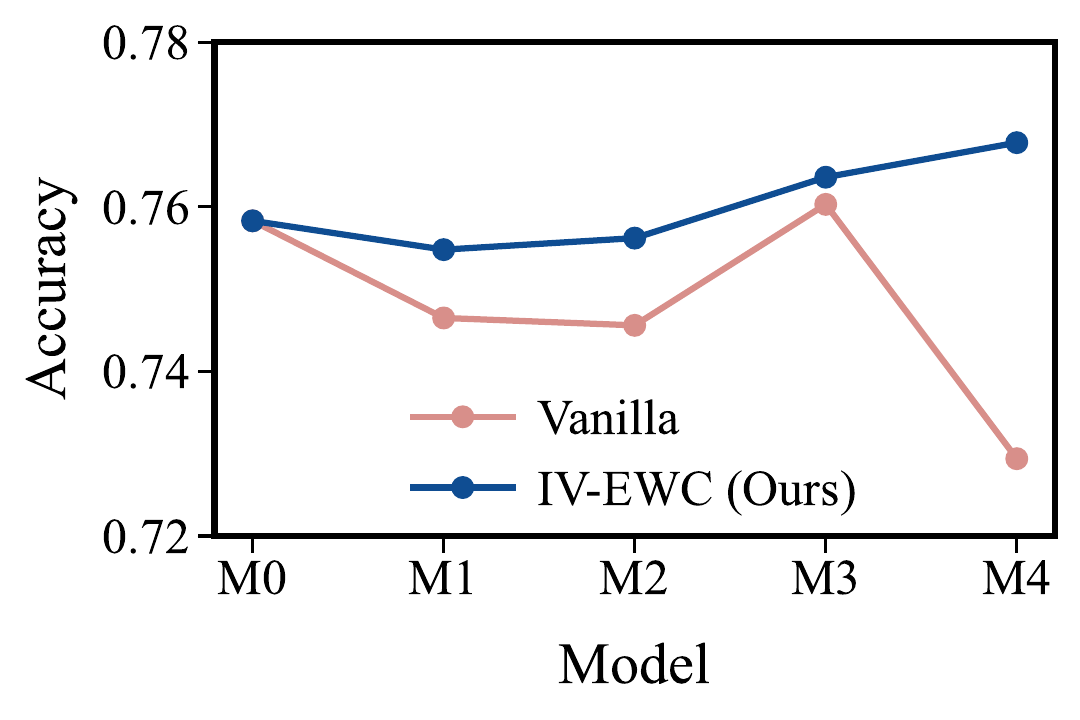}
  \caption{Easy Task Accuracy}
\end{subfigure}
\hfill
\begin{subfigure}[b]{0.329\textwidth}
  \centering
  \includegraphics[height=3.07cm]{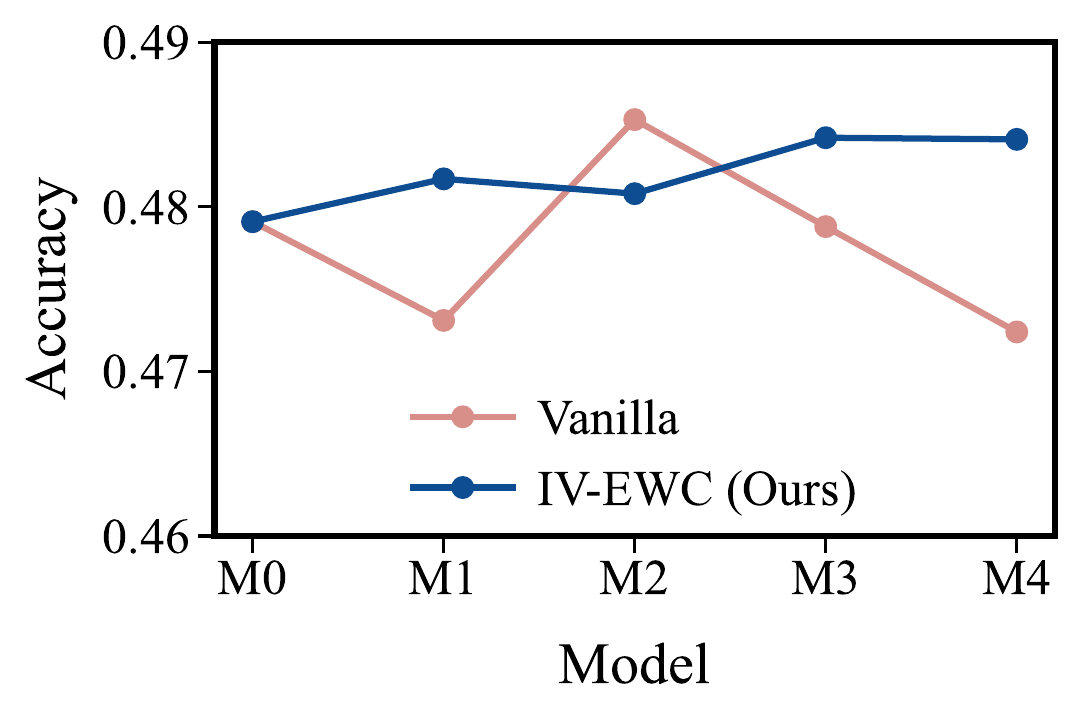}
  \caption{Hard Task Accuracy}
\end{subfigure}
\hfill
\begin{subfigure}[b]{0.329\textwidth}
  \centering
  \includegraphics[height=3.07cm]{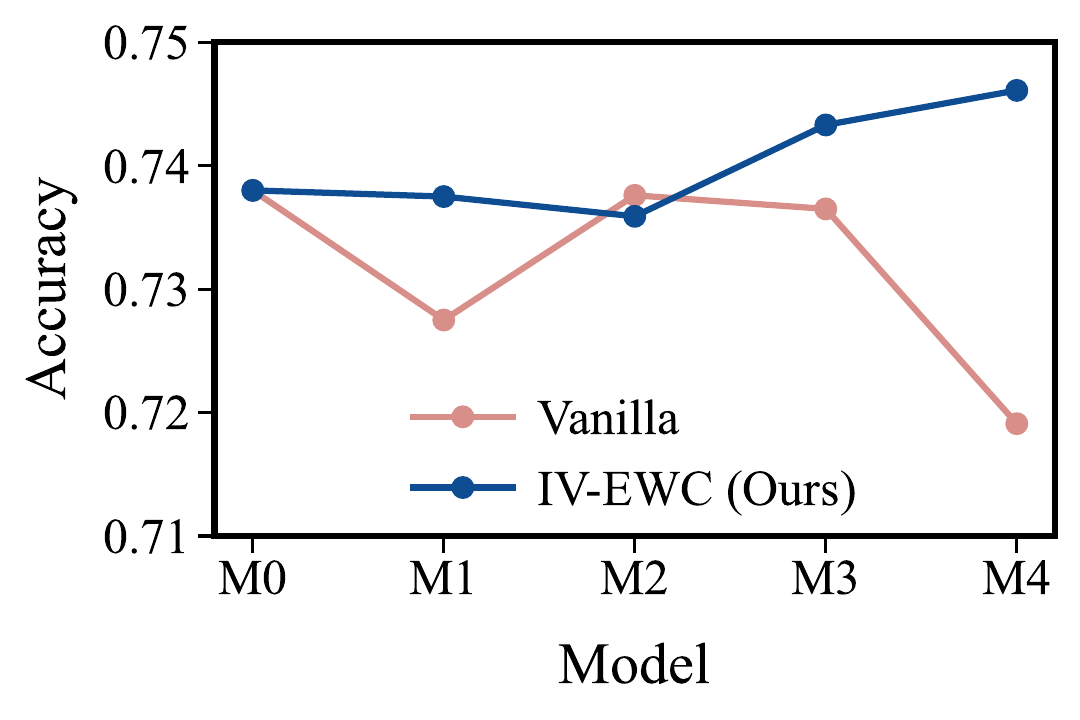}
  \caption{Overall Accuracy}
\end{subfigure}
\caption{Accuracy of models trained after each task using vanilla curriculum and \textsf{IV-EWC} on the MATH dataset based on Qwen2.5-3B-Instruct. Easy task accuracy denotes the average accuracy of level 2 or lower, hard task accuracy is the average accuracy of level 3 and higher.}
\label{fg:TrainingProcess}
\end{figure}

To better understand the curriculum learning process, we collected evaluation data on the test sets from models trained after each task. Figure \ref{fg:TrainingProcess} shows the grouped accuracy of vanilla curriculum and \textsf{IV-EWC} on the MATH dataset with Qwen2.5-3B-Instruct. While accuracy on the easy tasks deteriorates during vanilla curriculum learning, \textsf{IV-EWC} preserves performance on these tasks. On challenging tasks, vanilla curriculum learning exhibits pronounced fluctuations in accuracy, whereas \textsf{IV-EWC} maintains stable performance with a slight upward trend. Contrary to intuition, training on the hard task does not consistently yield accuracy gains on that task. We hypothesize that this pattern arises from the complex optimization dynamics of large language models and nontrivial inter-task dependencies. In comparison with the vanilla baseline, two key advantages are observed for \textsf{IV-EWC}: improved learning of new tasks, manifested as increasing accuracy on the hard tasks, and diminished forgetting, reflected in stable performance on easy tasks. More details about the performance during curriculum learning can be found in Appendix \ref{sec:CompareExp}.

\subsection{Ablation Study}

\begin{table}[t]
\caption{Evaluation of the replay method with varying replay data using Qwen2.5-3B-Instruct.}
\centering
\small
\begin{tabularx}{\textwidth}{@{}>{\hsize=1.96\hsize}X|>{\hsize=0.84\hsize\Centering}X>{\hsize=0.84\hsize\Centering}X|>{\hsize=0.84\hsize\Centering}X>{\hsize=0.84\hsize\Centering}X|>{\hsize=0.84\hsize\Centering}X>{\hsize=0.84\hsize\Centering}X@{}}
  \toprule
   & \multicolumn{2}{c|}{GSM8K} & \multicolumn{2}{c|}{MATH} & \multicolumn{2}{c}{cn\_k12} \\
   & \multicolumn{1}{c}{Forget} & \multicolumn{1}{c|}{ACC} & \multicolumn{1}{c}{Forget} & \multicolumn{1}{c|}{ACC} & \multicolumn{1}{c}{Forget} & \multicolumn{1}{c}{ACC} \\ \midrule
  Vanilla Replay & 0.0088 & 0.8223 & 0.0121 & 0.7375 & 0.0143 & 0.4785 \\
  + Select Across Levels & 0.0042 & 0.8336 & 0.0105 & 0.7359 & 0.0091 & 0.4750 \\
  + Influence Function & 0.0026 & 0.8336 & 0.0058 & 0.7360 & 0.0075 & 0.4775 \\ \bottomrule
\end{tabularx}
\label{tb:Ablation}
\end{table}

To assess the effectiveness of the validation set used in \textsf{IV-EWC}, an ablation experiment was conducted. This experiment used the replay method with Qwen2.5-3B-Instruct. To avoid overfitting, the learning rate was set to 3e-8, 5.5e-8 and 1e-7 for GSM8K, MATH and cn\_k12 datasets respectively. The replay sample count is incremented by 100 with each subsequent task, yielding 100, 200, 300, and 400 samples for Tasks 1 to 4, respectively. Vanilla replay is the baseline method, which randomly selects 100 samples from task $t - 1$ as replay samples for task $t$; replay samples are repeated to match the size of the training set. Select across levels method keeps the previously selected replay samples and adds 100 random samples of the previous task into replay before the current task. Influence function method directly uses the validation set of the corresponding task in \textsf{IV-EWC} for replay. The experiment results are illustrated in Table \ref{tb:Ablation}.

Keeping replay samples across levels decreases forgetting by roughly 0.0038 (34.0\% of vanilla replay) on average. On this basis, using the validation set as replay samples can further decrease forgetting by about 0.0026 (22.7\% of vanilla replay) on average. Results indicate that replaying an aggregated set of training examples drawn across curriculum tasks, alongside replaying examples selected via influence functions, better preserves knowledge acquired in earlier tasks and thereby reduces forgetting; employing both strategies in combination yields the strongest performance.

\subsection{Case Study}

\begin{figure}[t]
\begin{center}
\includegraphics[height=6cm]{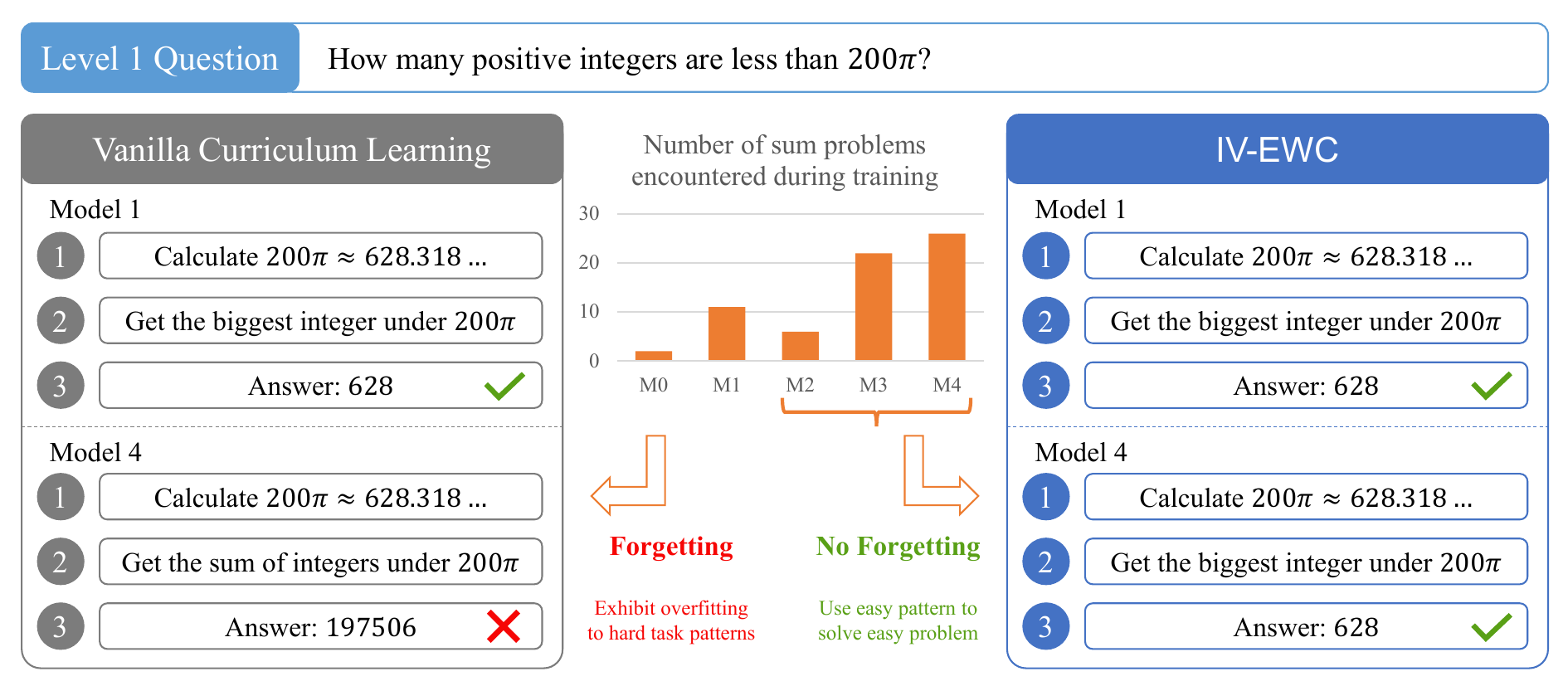}
\end{center}
\caption{Case study on a MATH test example comparing Qwen2.5-3B-Instruct models trained via vanilla curriculum learning and via \textsf{IV-EWC}. Following training on tasks from Level 2 to Level 4, the vanilla curriculum model shows overfitting to hard-task patterns, while the model trained with \textsf{IV-EWC} avoids such overfitting.}
\label{fg:CaseStudy}
\end{figure}

To examine forgetting in curriculum learning and assess how \textsf{IV-EWC} mitigates it, we collected responses of models at different training stages. Figure \ref{fg:CaseStudy} presents side-by-side the responses of the Qwen2.5-3B-Instruct model, trained on MATH with vanilla curriculum learning versus \textsf{IV-EWC}, for a representative problem from the MATH test set. After training on the Level 1 task, both models acquire the appropriate strategy for solving the task. After learning on the task with Level 4 data, the model trained with vanilla curriculum learning erroneously applies the heuristic for sum problems to the example, yielding an incorrect prediction. This misgeneralization is partly attributable to the training data distribution: during the training on tasks from Level 2 to Level 4, the model encounters more sum problems. Because vanilla curriculum learning introduces no explicit regularization, the model overfits to patterns characteristic of the more difficult tasks. In contrast, the model trained via \textsf{IV-EWC} resolves the task using the task-relevant pattern, suggesting that regularization limits overfitting to the solution approaches of challenging curriculum tasks and thus mitigates forgetting. Additional cases from the study are provided in Appendix \ref{sec:CaseStudy}.

\section{Conclusion}
In this paper we employ both theoretical and empirical approaches to analyze the phenomenon of catastrophic forgetting in curriculum learning and propose a novel framework \textsf{IV-EWC}, which incorporates EWC regularization into curriculum learning. \textsf{IV-EWC} utilizes the influence function to select training samples, forming a validation set for regularization. A dynamic regularization mechanism is further applied to facilitate effective learning. Theoretical investigation of EWC-based methods provides a foundation for \textsf{IV-EWC} in eliminating forgetting within curriculum learning. Empirical results under multiple settings demonstrate that \textsf{IV-EWC} consistently outperforms baselines, highlighting its effectiveness.

\bibliography{arxiv_reference}

@inproceedings{Bengio2009CurriculumLearning,
  author = {Bengio, Yoshua and Louradour, J\'{e}r\^{o}me and Collobert, Ronan and Weston, Jason},
  title = {Curriculum learning},
  year = {2009},
  isbn = {9781605585161},
  publisher = {Association for Computing Machinery},
  address = {New York, NY, USA},
  url = {https://doi.org/10.1145/1553374.1553380},
  doi = {10.1145/1553374.1553380},
  booktitle = {Proceedings of the 26th Annual International Conference on Machine Learning},
  pages = {41-48},
  numpages = {8},
  location = {Montreal, Quebec, Canada},
  series = {ICML '09}
}

@inproceedings{Hacohen2019OnThePowerOfCL,
  title={On The Power of Curriculum Learning in Training Deep Networks},
  author={Guy Hacohen and Daphna Weinshall},
  booktitle={International Conference on Machine Learning},
  year={2019}
}

@article{Rafailov2023DPO,
  title={Direct preference optimization: Your language model is secretly a reward model},
  author={Rafailov, Rafael and Sharma, Archit and Mitchell, Eric and Manning, Christopher D and Ermon, Stefano and Finn, Chelsea},
  journal={Advances in neural information processing systems},
  volume={36},
  pages={53728--53741},
  year={2023}
}

@misc{Shao2024GRPO,
  title={DeepSeekMath: Pushing the Limits of Mathematical Reasoning in Open Language Models}, 
  author={Zhihong Shao and Peiyi Wang and Qihao Zhu and Runxin Xu and Junxiao Song and Xiao Bi and Haowei Zhang and Mingchuan Zhang and Y. K. Li and Y. Wu and Daya Guo},
  year={2024},
  eprint={2402.03300},
  archivePrefix={arXiv},
  primaryClass={cs.CL},
  url={https://arxiv.org/abs/2402.03300}, 
}

@inproceedings{Parashar2026E2HReasoner,
  title={Curriculum Reinforcement Learning from Easy to Hard Tasks Improves LLM Reasoning},
  author={Parashar, Shubham and Gui, Shurui and Li, Xiner and Ling, Hongyi and Vemuri, Sushil and Olson, Blake and Li, Eric and Zhang, Yu and Caverlee, James and Kalathil, Dileep and others},
  booktitle={International Conference on Learning Representations},
  volume={2026},
  pages={156944--156972},
  year={2026}
}

@inproceedings{Xi2024ReverseCurriculum,
  author = {Xi, Zhiheng and Chen, Wenxiang and Hong, Boyang and Jin, Senjie and Zheng, Rui and He, Wei and Ding, Yiwen and Liu, Shichun and Guo, Xin and Wang, Junzhe and Guo, Honglin and Shen, Wei and Fan, Xiaoran and Zhou, Yuhao and Dou, Shihan and Wang, Xiao and Zhang, Xinbo and Sun, Peng and Gui, Tao and Zhang, Qi and Huang, Xuanjing},
  title = {Training large language models for reasoning through reverse curriculum reinforcement learning},
  year = {2024},
  publisher = {JMLR.org},
  booktitle = {Proceedings of the 41st International Conference on Machine Learning},
  articleno = {2217},
  numpages = {19},
  location = {Vienna, Austria},
  series = {ICML'24}
}

@inproceedings{Wen2025Light-R1,
    title = "Light-R1: Curriculum {SFT}, {DPO} and {RL} for Long {COT} from Scratch and Beyond",
    author = "Wen, Liang  and
      Cai, Yunke  and
      Xiao, Fenrui  and
      He, Xin  and
      An, Qi  and
      Duan, Zhenyu  and
      Du, Yimin  and
      Liu, Junchen  and
      Tang, Lifu  and
      Lv, Xiaowei  and
      Zou, Haosheng  and
      Deng, Yongchao  and
      Jia, Shousheng  and
      Zhang, Xiangzheng",
    editor = "Rehm, Georg  and
      Li, Yunyao",
    booktitle = "Proceedings of the 63rd Annual Meeting of the Association for Computational Linguistics (Volume 6: Industry Track)",
    month = jul,
    year = "2025",
    address = "Vienna, Austria",
    publisher = "Association for Computational Linguistics",
    url = "https://aclanthology.org/2025.acl-industry.24/",
    doi = "10.18653/v1/2025.acl-industry.24",
    pages = "318--327",
    ISBN = "979-8-89176-288-6",
}

@article{Xiao2024CLIP-VG,
  author={Xiao, Linhui and Yang, Xiaoshan and Peng, Fang and Yan, Ming and Wang, Yaowei and Xu, Changsheng},
  journal={IEEE Transactions on Multimedia}, 
  title={CLIP-VG: Self-Paced Curriculum Adapting of CLIP for Visual Grounding}, 
  year={2024},
  volume={26},
  number={},
  pages={4334-4347},
  doi={10.1109/TMM.2023.3321501}
}

@inproceedings{Liu2021CurriculumForMedicalReport,
    title = "Competence-based Multimodal Curriculum Learning for Medical Report Generation",
    author = "Liu, Fenglin  and
      Ge, Shen  and
      Wu, Xian",
    editor = "Zong, Chengqing  and
      Xia, Fei  and
      Li, Wenjie  and
      Navigli, Roberto",
    booktitle = "Proceedings of the 59th Annual Meeting of the Association for Computational Linguistics and the 11th International Joint Conference on Natural Language Processing (Volume 1: Long Papers)",
    month = aug,
    year = "2021",
    address = "Online",
    publisher = "Association for Computational Linguistics",
    url = "https://aclanthology.org/2021.acl-long.234/",
    doi = "10.18653/v1/2021.acl-long.234",
    pages = "3001--3012",
}

@inproceedings{Zhang2019CurriculumForTranslation,
    title = "Curriculum Learning for Domain Adaptation in Neural Machine Translation",
    author = "Zhang, Xuan  and
      Shapiro, Pamela  and
      Kumar, Gaurav  and
      McNamee, Paul  and
      Carpuat, Marine  and
      Duh, Kevin",
    editor = "Burstein, Jill  and
      Doran, Christy  and
      Solorio, Thamar",
    booktitle = "Proceedings of the 2019 Conference of the North {A}merican Chapter of the Association for Computational Linguistics: Human Language Technologies, Volume 1 (Long and Short Papers)",
    month = jun,
    year = "2019",
    address = "Minneapolis, Minnesota",
    publisher = "Association for Computational Linguistics",
    url = "https://aclanthology.org/N19-1189/",
    doi = "10.18653/v1/N19-1189",
    pages = "1903--1915",
}

@inproceedings{Platanios2019CompetenceCLForTranslation,
    title = "Competence-based Curriculum Learning for Neural Machine Translation",
    author = "Platanios, Emmanouil Antonios  and
      Stretcu, Otilia  and
      Neubig, Graham  and
      Poczos, Barnabas  and
      Mitchell, Tom",
    editor = "Burstein, Jill  and
      Doran, Christy  and
      Solorio, Thamar",
    booktitle = "Proceedings of the 2019 Conference of the North {A}merican Chapter of the Association for Computational Linguistics: Human Language Technologies, Volume 1 (Long and Short Papers)",
    month = jun,
    year = "2019",
    address = "Minneapolis, Minnesota",
    publisher = "Association for Computational Linguistics",
    url = "https://aclanthology.org/N19-1119/",
    doi = "10.18653/v1/N19-1119",
    pages = "1162--1172",
}

@inproceedings{Liu2020NormCLForTranslation,
    title = "Norm-Based Curriculum Learning for Neural Machine Translation",
    author = "Liu, Xuebo  and
      Lai, Houtim  and
      Wong, Derek F.  and
      Chao, Lidia S.",
    editor = "Jurafsky, Dan  and
      Chai, Joyce  and
      Schluter, Natalie  and
      Tetreault, Joel",
    booktitle = "Proceedings of the 58th Annual Meeting of the Association for Computational Linguistics",
    month = jul,
    year = "2020",
    address = "Online",
    publisher = "Association for Computational Linguistics",
    url = "https://aclanthology.org/2020.acl-main.41/",
    doi = "10.18653/v1/2020.acl-main.41",
    pages = "427--436",
}

@inproceedings{Leyendecker2021CurriculumForRobot,
  author={Leyendecker, Lars and Schmitz, Markus and Zhou, Hans Aoyang and Samsonov, Vladimir and Rittstieg, Marius and Lütticke, Daniel},
  booktitle={2021 Fifth IEEE International Conference on Robotic Computing (IRC)}, 
  title={Deep Reinforcement Learning for Robotic Control in High-Dexterity Assembly Tasks - A Reward Curriculum Approach}, 
  year={2021},
  volume={},
  number={},
  pages={35-42},
  doi={10.1109/IRC52146.2021.00012}
}

@inproceedings{Zhang2025CurriculumForBiological,
  author = {Zhang, Xiang and Wei, Jiaqi and Qiu, Zijie and Xu, Sheng and Dong, Nanqing and Gao, Zhiqiang and Sun, Siqi},
  title = {Curriculum learning for biological sequence prediction: the case of de novo peptide sequencing},
  year = {2025},
  publisher = {JMLR.org},
  booktitle = {Proceedings of the 42nd International Conference on Machine Learning},
  articleno = {3062},
  numpages = {19},
  location = {Vancouver, Canada},
  series = {ICML'25}
}

@inproceedings{Guo2018CurriculumNet,
  author="Guo, Sheng and Huang, Weilin and Zhang, Haozhi and Zhuang, Chenfan and Dong, Dengke and Scott, Matthew R. and Huang, Dinglong",
  editor="Ferrari, Vittorio and Hebert, Martial and Sminchisescu, Cristian and Weiss, Yair",
  title="CurriculumNet: Weakly Supervised Learning from Large-Scale Web Images",
  booktitle="Computer Vision -- ECCV 2018",
  year="2018",
  publisher="Springer International Publishing",
  address="Cham",
  pages="139--154",
  isbn="978-3-030-01249-6"
}

@inproceedings{Kumar2010SelfPaced,
author = {Kumar, M. Pawan and Packer, Benjamin and Koller, Daphne},
title = {Self-paced learning for latent variable models},
year = {2010},
publisher = {Curran Associates Inc.},
address = {Red Hook, NY, USA},
booktitle = {Proceedings of the 24th International Conference on Neural Information Processing Systems - Volume 1},
pages = {1189-1197},
numpages = {9},
location = {Vancouver, British Columbia, Canada},
series = {NIPS'10}
}

@inproceedings{Jiang2018MentorNet,
  title = {{M}entor{N}et: Learning Data-Driven Curriculum for Very Deep Neural Networks on Corrupted Labels},
  author = {Jiang, Lu and Zhou, Zhengyuan and Leung, Thomas and Li, Li-Jia and Fei-Fei, Li},
  booktitle = {Proceedings of the 35th International Conference on Machine Learning},
  pages = {2304--2313},
  year = {2018},
  editor = {Dy, Jennifer and Krause, Andreas},
  volume = {80},
  series = {Proceedings of Machine Learning Research},
  month = {10--15 Jul},
  publisher = {PMLR},
  url = {https://proceedings.mlr.press/v80/jiang18c.html},
}

@inproceedings{Florensa2017ReverseCLForRobot,
  title = {Reverse Curriculum Generation for Reinforcement Learning},
  author = {Florensa, Carlos and Held, David and Wulfmeier, Markus and Zhang, Michael and Abbeel, Pieter},
  booktitle = {Proceedings of the 1st Annual Conference on Robot Learning},
  pages = {482--495},
  year = {2017},
  editor = {Levine, Sergey and Vanhoucke, Vincent and Goldberg, Ken},
  volume = {78},
  series = {Proceedings of Machine Learning Research},
  month = {13--15 Nov},
  publisher = {PMLR},
  url = {https://proceedings.mlr.press/v78/florensa17a.html},
}

@inproceedings{KocmiBojar2017CurriculumNMT,
    title = "Curriculum Learning and Minibatch Bucketing in Neural Machine Translation",
    author = "Kocmi, Tom  and
      Bojar, Ond{\v{r}}ej",
    editor = "Mitkov, Ruslan  and
      Angelova, Galia",
    booktitle = "Proceedings of the International Conference Recent Advances in Natural Language Processing, {RANLP} 2017",
    month = sep,
    year = "2017",
    address = "Varna, Bulgaria",
    publisher = "INCOMA Ltd.",
    url = "https://aclanthology.org/R17-1050/",
    doi = "10.26615/978-954-452-049-6_050",
    pages = "379--386",
}

@inproceedings{Graves2017AutomatedCL,
author = {Graves, Alex and Bellemare, Marc G. and Menick, Jacob and Munos, R{\'e}mi and Kavukcuoglu, Koray},
title = {Automated curriculum learning for neural networks},
year = {2017},
publisher = {JMLR.org},
booktitle = {Proceedings of the 34th International Conference on Machine Learning - Volume 70},
pages = {1311-1320},
numpages = {10},
location = {Sydney, NSW, Australia},
series = {ICML'17}
}

@article{Tambet2020TeacherStudentCL,
  author={Matiisen, Tambet and Oliver, Avital and Cohen, Taco and Schulman, John},
  journal={IEEE Transactions on Neural Networks and Learning Systems}, 
  title={Teacher-Student Curriculum Learning}, 
  year={2020},
  volume={31},
  number={9},
  pages={3732-3740},
  doi={10.1109/TNNLS.2019.2934906}
}

@misc{Wang2019POET,
  title={Paired Open-Ended Trailblazer (POET): Endlessly Generating Increasingly Complex and Diverse Learning Environments and Their Solutions}, 
  author={Rui Wang and Joel Lehman and Jeff Clune and Kenneth O. Stanley},
  year={2019},
  eprint={1901.01753},
  archivePrefix={arXiv},
  primaryClass={cs.NE},
  url={https://arxiv.org/abs/1901.01753}, 
}

@inproceedings{Wang2020EnhancedPOET,
author = {Wang, Rui and Lehman, Joel and Rawal, Aditya and Zhi, Jiale and Li, Yulun and Clune, Jeff and Stanley, Kenneth O.},
title = {Enhanced POET: open-ended reinforcement learning through unbounded invention of learning challenges and their solutions},
year = {2020},
publisher = {JMLR.org},
booktitle = {Proceedings of the 37th International Conference on Machine Learning},
articleno = {922},
numpages = {12},
series = {ICML'20}
}

@article{French1999CatastrophicForgetting,
    title = {Catastrophic forgetting in connectionist networks},
    journal = {Trends in Cognitive Sciences},
    volume = {3},
    number = {4},
    pages = {128-135},
    year = {1999},
    issn = {1364-6613},
    doi = {https://doi.org/10.1016/S1364-6613(99)01294-2},
    url = {https://www.sciencedirect.com/science/article/pii/S1364661399012942},
    author = {Robert M. French}
}

@inproceedings{Thrun1995ContinualLearning,
author = {Thrun, Sebastian},
title = {Is Learning The n-th Thing Any Easier Than Learning The First?},
year = {1995},
publisher = {MIT Press},
address = {Cambridge, MA, USA},
booktitle = {Proceedings of the 9th International Conference on Neural Information Processing Systems},
pages = {640-646},
numpages = {7},
location = {Denver, Colorado},
series = {NIPS'95}
}

@incollection{McCloskey1989CatastrophicInterference,
  title = {Catastrophic Interference in Connectionist Networks: The Sequential Learning Problem},
  editor = {Gordon H. Bower},
  booktitle = {Psychology of Learning and Motivation},
  publisher = {Academic Press},
  volume = {24},
  pages = {109-165},
  year = {1989},
  issn = {0079-7421},
  doi = {https://doi.org/10.1016/S0079-7421(08)60536-8},
  url = {https://www.sciencedirect.com/science/article/pii/S0079742108605368},
  author = {Michael McCloskey and Neal J. Cohen}
}

@inproceedings{Hetherington2014IsThere,
  title={Is There "Catastrophic Interference" in Connectionist Networks?},
  author={Hetherington, Phil A and Seidenberg, Mark S},
  booktitle={11th Annual Conference Cognitive Science Society Pod},
  pages={26--33},
  year={2014},
  organization={Psychology Press}
}

@inproceedings{Rebuffi2017iCaRL,
  author={Rebuffi, Sylvestre-Alvise and Kolesnikov, Alexander and Sperl, Georg and Lampert, Christoph H.},
  booktitle={2017 IEEE Conference on Computer Vision and Pattern Recognition (CVPR)}, 
  title={iCaRL: Incremental Classifier and Representation Learning}, 
  year={2017},
  volume={},
  number={},
  pages={5533-5542},
  doi={10.1109/CVPR.2017.587}
}

@article{Kirkpatrick2017EWC,
  author = {James Kirkpatrick  and Razvan Pascanu  and Neil Rabinowitz  and Joel Veness  and Guillaume Desjardins  and Andrei A. Rusu  and Kieran Milan  and John Quan  and Tiago Ramalho  and Agnieszka Grabska-Barwinska  and Demis Hassabis  and Claudia Clopath  and Dharshan Kumaran  and Raia Hadsell },
  title = {Overcoming catastrophic forgetting in neural networks},
  journal = {Proceedings of the National Academy of Sciences},
  volume = {114},
  number = {13},
  pages = {3521-3526},
  year = {2017},
  doi = {10.1073/pnas.1611835114},
  URL = {https://www.pnas.org/doi/abs/10.1073/pnas1611835114},
  eprint = {https://www.pnas.org/doi/pdf/10.1073/pnas.1611835114},
}

@inproceedings{LopezPaz2017GEM,
author = {Lopez-Paz, David and Ranzato, Marc'Aurelio},
title = {Gradient episodic memory for continual learning},
year = {2017},
isbn = {9781510860964},
publisher = {Curran Associates Inc.},
address = {Red Hook, NY, USA},
booktitle = {Proceedings of the 31st International Conference on Neural Information Processing Systems},
pages = {6470-6479},
numpages = {10},
location = {Long Beach, California, USA},
series = {NIPS'17}
}

@inproceedings{Chaudhry2019AGEM,
  title={Efficient Lifelong Learning with A-GEM},
  author={Arslan Chaudhry and Marc'Aurelio Ranzato and Marcus Rohrbach and Mohamed Elhoseiny},
  booktitle={International Conference on Learning Representations},
  year={2019},
}

@inproceedings{Elsayed2024UPGD,
  title={Addressing Loss of Plasticity and Catastrophic Forgetting in Continual Learning},
  author={Mohamed Elsayed and A. Rupam Mahmood},
  booktitle={International Conference on Learning Representations},
  year={2024},
}

@inproceedings{Shin2017DGR,
author = {Shin, Hanul and Lee, Jung Kwon and Kim, Jaehong and Kim, Jiwon},
title = {Continual learning with deep generative replay},
year = {2017},
isbn = {9781510860964},
publisher = {Curran Associates Inc.},
address = {Red Hook, NY, USA},
booktitle = {Proceedings of the 31st International Conference on Neural Information Processing Systems},
pages = {2994-3003},
numpages = {10},
location = {Long Beach, California, USA},
series = {NIPS'17}
}

@inproceedings{Rolnick2019CLEAR,
author = {Rolnick, David and Ahuja, Arun and Schwarz, Jonathan and Lillicrap, Timothy P. and Wayne, Greg},
title = {Experience replay for continual learning},
year = {2019},
publisher = {Curran Associates Inc.},
address = {Red Hook, NY, USA},
booktitle = {Proceedings of the 33rd International Conference on Neural Information Processing Systems},
articleno = {32},
numpages = {11}
}

@inproceedings{Jiang2025FunctionVector,
  title={Unlocking the power of function vectors for characterizing and mitigating catastrophic forgetting in continual instruction tuning},
  author={Jiang, Gangwei and Li, Zhaoyi and Xue, Siqiao and ZHOU, JUN and Song, Linqi and Lian, Defu and Wei, Ying and others},
  booktitle={International Conference on Learning Representations},
  volume={2025},
  pages={47218--47243},
  year={2025}
}

@inproceedings{Hendrycks2021MATHDataset,
  title={Measuring Mathematical Problem Solving With the MATH Dataset},
  author={Hendrycks, Dan and Burns, Collin and Kadavath, Saurav and Arora, Akul and Basart, Steven and Tang, Eric and Song, Dawn and Steinhardt, Jacob},
  booktitle={Thirty-fifth Conference on Neural Information Processing Systems Datasets and Benchmarks Track (Round 2)},
  year={2021}
}

@misc{Cobbe2021GSM8KDataset,
  title={Training Verifiers to Solve Math Word Problems}, 
  author={Karl Cobbe and Vineet Kosaraju and Mohammad Bavarian and Mark Chen and Heewoo Jun and Lukasz Kaiser and Matthias Plappert and Jerry Tworek and Jacob Hilton and Reiichiro Nakano and Christopher Hesse and John Schulman},
  year={2021},
  eprint={2110.14168},
  archivePrefix={arXiv},
  primaryClass={cs.LG},
  url={https://arxiv.org/abs/2110.14168}, 
}

@misc{LI2024NuminaMath,
  author = {Jia Li and Edward Beeching and Lewis Tunstall and Ben Lipkin and Roman Soletskyi and Shengyi Costa Huang and Kashif Rasul and Longhui Yu and Albert Jiang and Ziju Shen and Zihan Qin and Bin Dong and Li Zhou and Yann Fleureau and Guillaume Lample and Stanislas Polu},
  title = {NuminaMath},
  year = {2024},
  publisher = {Numina},
  journal = {Hugging Face repository},
  howpublished = {\url{[https://huggingface.co/datasets/AI-MO/NuminaMath-CoT](https://github.com/project-numina/aimo-progress-prize/blob/main/report/numina_dataset.pdf)}}
}

@inproceedings{Cai2025IPM,
  title={Last iterate convergence of incremental methods as a model of forgetting},
  author={Cai, Xufeng and Diakonikolas, Jelena},
  booktitle={International Conference on Learning Representations},
  volume={2025},
  pages={102613--102647},
  year={2025}
}

@inproceedings{Koh2017InfluenceFunction,
author = {Koh, Pang Wei and Liang, Percy},
title = {Understanding black-box predictions via influence functions},
year = {2017},
publisher = {JMLR.org},
booktitle = {Proceedings of the 34th International Conference on Machine Learning - Volume 70},
pages = {1885-1894},
numpages = {10},
location = {Sydney, NSW, Australia},
series = {ICML'17}
}

@article{Agarwal2017LiSSA,
author = {Agarwal, Naman and Bullins, Brian and Hazan, Elad},
title = {Second-order stochastic optimization for machine learning in linear time},
year = {2017},
issue_date = {January 2017},
publisher = {JMLR.org},
volume = {18},
number = {1},
issn = {1532-4435},
journal = {J. Mach. Learn. Res.},
month = jan,
pages = {4148-4187},
numpages = {40}
}
\bibliographystyle{arxiv_reference_style}

\appendix
\section{Additional Theoretical Analysis}
\subsection{Parameter Distance of Gradient Descent}
\label{sec:GDconvergenceProof}
\GDconvergenceLemma*
\begin{proof}

Define the loss of the target problem as $f$, it is $\mu$-strongly convex and $L$-smooth so the following inequalities hold for any $x$ and $y$:
\begin{equation}
  \langle \nabla f(x) - \nabla f(y), x - y\rangle \geq \mu ||x - y||^2 \label{eq:Convex}
\end{equation}
\begin{equation}
  ||\nabla f(x) - \nabla f(y)||^2 \leq L \langle  \nabla f(x) - \nabla f(y), x - y\rangle  \label{eq:Smooth}
\end{equation}

Consider gradient descent with constant step size $\frac{1}{L}$, the parameter update follows:
\begin{gather}
\vtheta_{t+1} = \vtheta_t - \frac{1}{L} \nabla f(\vtheta_t) \nonumber \\
|| \vtheta_{t+1} - \vtheta^* ||^2 = || \vtheta_t - \frac{1}{L}\nabla f(\vtheta_t) - \vtheta^* ||^2 \label{eq:ParamDist}
\end{gather}

Note that $\nabla f(\vtheta^*) = 0$, Eq. \ref{eq:ParamDist} can be written as:
\begin{align}
|| \vtheta_{t+1} - \vtheta^* ||^2 &= || \vtheta_t - \vtheta^* - \frac{1}{L} \left(\nabla f(\vtheta_t) - \nabla f(\vtheta^*)\right) ||^2 \nonumber \\
&= || \vtheta_t - \vtheta^* ||^2 - \frac{2}{L} \langle  \nabla f(\vtheta_t) - \nabla f(\vtheta^*), \vtheta_t - \vtheta^*\rangle  + \frac{1}{L^2}|| \nabla f(\vtheta_t) - \nabla f(\vtheta^*) ||^2
\end{align}

Using Eq. \ref{eq:Smooth} to replace $\frac{1}{L^2}|| \nabla f(\vtheta_t) - \nabla f(\vtheta^*) ||^2$ we have:
\begin{equation}
|| \vtheta_{t+1} - \vtheta^* ||^2 \leq || \vtheta_t - \vtheta^* ||^2 - \frac{1}{L} \langle  \nabla f(\vtheta_t) - \nabla f(\vtheta^*), \vtheta_t - \vtheta^*\rangle 
\end{equation}

Using Eq. \ref{eq:Convex} to replace $\frac{1}{L} \langle \nabla f(\vtheta_t) - \nabla f(\vtheta^*), \vtheta_t - \vtheta^*\rangle $, we have:
\begin{align}
|| \vtheta_{t+1} - \vtheta^* ||^2 &\leq || \vtheta_t - \vtheta^* ||^2 - \frac{\mu}{L} || \vtheta_t - \vtheta^* ||^2 \nonumber \\
&= \left( 1 - \frac{\mu}{L} \right) || \vtheta_t - \vtheta^* ||^2 \nonumber \\
&= \left( 1 - \frac{1}{\kappa} \right) || \vtheta_t - \vtheta^* ||^2
\end{align}

When the iteration proceeds from $1$ to $t$, the distance between parameters satisfies:
\begin{align}
|| \vtheta_{t} - \vtheta^* ||^2 &\leq \left( 1 - \frac{1}{\kappa} \right) || \vtheta_{t - 1} - \vtheta^* ||^2 \nonumber \\
&\leq \left( 1 - \frac{1}{\kappa} \right)^2 || \vtheta_{t - 2} - \vtheta^* ||^2 \nonumber \\
&\leq ... \nonumber \\
&\leq \left( 1 - \frac{1}{\kappa} \right)^{t - 1} || \vtheta_1 - \vtheta^* ||^2 \nonumber \\
&\leq \left( 1 - \frac{1}{\kappa} \right)^t || \vtheta_0 - \vtheta^* ||^2
\end{align}

\end{proof}

\subsection{Iteration Counts of Gradient Descent}
\label{sec:GDtimesProof}
\GDtimesLemma*

\begin{proof}
From the upper bound of parameter distance in Lemma \ref{lem:GDconvergence}, when the gradient descent with constant step size $\frac{1}{L}$ converges at iteration $T$, the following should hold:
\begin{equation}
  || \vtheta_T-\vtheta^* ||^2 \leq \left( 1-\frac{1}{\kappa}\right) ^T || \vtheta_0-\vtheta^* ||^2
\end{equation}

Taking square roots for both sides yields
\begin{equation}
|| \vtheta_T - \vtheta^* || \leq \left( 1-\frac{1}{\kappa}\right)^{\frac{T}{2}} || \vtheta_0 - \vtheta^* ||
\end{equation}

The convergence condition is defined as $|| \vtheta_T-\vtheta^* || \leq \varepsilon$. To achieve this the following should hold:
\begin{gather}
\left( 1-\frac{1}{\kappa}\right)^{\frac{T}{2}} || \vtheta_0 - \vtheta^* || \leq \varepsilon \nonumber \\
\left( 1-\frac{1}{\kappa}\right)^{\frac{T}{2}} \leq \frac{\varepsilon}{|| \vtheta_0 - \vtheta^* ||}
\end{gather}

Taking the logarithm of both sides, we have
\begin{equation}
\frac{T}{2} \ln \left( 1-\frac{1}{\kappa}\right) \leq \ln \left( \frac{\varepsilon}{|| \vtheta_0 - \vtheta^* ||} \right)
\end{equation}

Since $\kappa \geq 1$, we have $0 \leq 1 - \frac{1}{\kappa} < 1$, taking the negative of both sides gives
\begin{gather}
\frac{T}{2} \ln \left( \frac{\kappa}{\kappa - 1}\right) \geq \ln \left( \frac{|| \vtheta_0 - \vtheta^* ||}{\varepsilon} \right) \nonumber \\
\frac{T}{2} \geq \frac{\ln \left( \frac{|| \vtheta_0 - \vtheta^* ||}{\varepsilon} \right)}{\ln \left( \frac{\kappa}{\kappa - 1}\right)}
\label{eq:Iter}
\end{gather}

Since $\ln (x) \leq x - 1$ holds for any $x$,  $\ln \left( \frac{\kappa}{\kappa - 1}\right) \leq \frac{\kappa}{\kappa - 1} - 1 = \frac{1}{\kappa - 1}$, and Eq. \ref{eq:Iter} can be written as:
\begin{align}
\frac{T}{2} &\geq \frac{\ln \left( \frac{|| \vtheta_0 - \vtheta^* ||}{\varepsilon} \right)}{\frac{1}{\kappa - 1}} \nonumber \\
T &\geq 2 (\kappa - 1) \ln \left( \frac{|| \vtheta_0 - \vtheta^* ||}{\varepsilon} \right)
\end{align}

The iteration complexity to reach $|| \vtheta_T-\vtheta^* || \leq \varepsilon$ is $\Theta \left(\kappa \log \left(\frac{|| \vtheta_0 - \vtheta^* ||}{\varepsilon}\right)\right)$, with the hidden constants depending on the step size choice and whether one measures distance or function suboptimality. It is in correspondence with Eq. \ref{eq:GDtimes}.

\end{proof}

\subsection{Effectiveness of Curriculum Learning}
\label{sec:CLeffectivenessProof}
\CLeffectivenessTheorem*

\begin{proof}
After convergence on the easier curriculum task, the distance between the current model parameter and the optimal parameter of the original problem is bounded via the triangle inequality by the sum of the distances to the curriculum optimum and between the two optima:
\begin{equation}
  || \vtheta_{T_e}-\vtheta^* || \leq || \vtheta_{T_e}-\vtheta^*_e || + || \vtheta^*_e-\vtheta^* || \leq \varepsilon_e + \delta 
\label{eq:DistMiddle}
\end{equation}
where $\delta = || \vtheta^*_e-\vtheta^* ||$ denotes the distance between the two optima. The distance in Eq. \ref{eq:DistMiddle} equals the distance between the initial and optimal parameters for the original problem when its optimization starts in the curriculum.

Next we consider the numbers of iterations $T_e$ and $T_f$ for the easier problem and the original problem in the curriculum respectively. Using Lemma \ref{lem:GDtimes}, we have:
\begin{equation}
  T_e \approx C \kappa_e \log \left(\frac{|| \vtheta_0-\vtheta^*_e ||}{\varepsilon_e}\right)
\end{equation}
\begin{equation}
  T_f \approx C \kappa \log \left(\frac{|| \vtheta_{T_e}-\vtheta^* ||}{\varepsilon}\right) \lesssim C \kappa \log \left(\frac{\varepsilon_e + \delta}{\varepsilon}\right)
\end{equation}

The total number of iterations in curriculum learning includes the iterations of the easier curriculum problem and the original problem, which is
\begin{equation}
  T_{curriculum} = T_e + T_f
\end{equation}

Vanilla curriculum learning directly optimizes the original problem. The number of iterations $T_{vanilla}$ satisfies Lemma \ref{lem:GDtimes}. Curriculum learning's effectiveness lies in its faster convergence, i.e., fewer iterations. To prove the effectiveness of curriculum learning, the following should hold:
\begin{equation}
  T_{curriculum} - T_{vanilla} < 0
\end{equation}

That is to prove:
\begin{equation}
  C \kappa_e \log \left(\frac{|| \vtheta_0-\vtheta^*_e ||}{\varepsilon_e}\right) + C \kappa \log \left(\frac{\varepsilon_e + \delta}{\varepsilon}\right) - C \kappa \log \left(\frac{|| \vtheta_0-\vtheta^* ||}{\varepsilon}\right) < 0
\end{equation}

Expanding the logarithm, that is to prove:
\begin{equation}
  C \kappa \log \left(\frac{\varepsilon_e + \delta}{|| \vtheta_0-\vtheta^* ||}\right) + C \kappa_e \log \left(\frac{|| \vtheta_0-\vtheta^*_e ||}{\varepsilon_e}\right) < 0 \label{eq:CurriculumConvergeTime}
\end{equation}
the left-hand side of the inequality comprises two terms: the first reflects the reduction in iteration count after introducing curriculum learning, the second corresponds to the iterations required to learn the easier curriculum problem.

According to the curriculum design $|| \vtheta_e^* - \vtheta^* || \ll || \vtheta_0-\vtheta^* || - \varepsilon_e$, using $\delta$ to substitute the left part we have $\varepsilon_e + \delta \ll || \vtheta_0-\vtheta^* ||$. The first term on the left side of Eq. \ref{eq:CurriculumConvergeTime} is negative. Furthermore, the curriculum satisfies $\kappa_e<\kappa$ so Eq. \ref{eq:CurriculumConvergeTime} always holds; the given curriculum converges faster than learning at once.

\end{proof}

\section{Dataset Details}
\label{sec:Dataset}

In this section, we provide more information about datasets used in the experiments. Three math reasoning datasets are adopted to evaluate the model, whose details are described as follows.

The GSM8K dataset \citep{Cobbe2021GSM8KDataset} contains easy mathematical questions with their contexts from daily lives. In the experiments we adopt the version by \citet{Parashar2026E2HReasoner}, where the questions are graded and split into four levels. The training set contains 7473 samples with 1630, 1747, 2327, 1769 samples at Level 0 to Level 3 respectively. The test set contains 1319 samples with 206, 300, 354, 459 samples at Level 0 to Level 3 respectively.

The MATH dataset proposed by \citet{Hendrycks2021MATHDataset} contains five difficulty levels. The majority of questions in the MATH dataset do not have a context, which makes understanding easier. The training set includes 7498 samples: 564 at Level 0, 1348 at Level 1, 1592 at Level 2, 1690 at Level 3 and 2304 at Level 4. The test set includes 5000 samples: 437 at Level 0, 894 at Level 1, 1131 at Level 2, 1214 at Level 3 and 1324 at Level 4. The training distribution is skewed toward challenging examples, which promotes the acquisition of richer representations required to handle such cases.

The cn\_k12 dataset is created by selecting samples from the cn\_k12 subset of NuminaMath-CoT \citep{LI2024NuminaMath}. To align with existing datasets, we cleaned the cn\_k12 subset by removing samples with links, non-ASCII characters, inconsistent question types, omitted or incomplete answers. The cleaned data was then categorized into four difficulty levels by answer length, namely: Level 1 (1-179 characters), Level 2 (180-259 characters), Level 3 (260-369 characters), and Level 4 (370-800 characters). Samples with answers longer than 800 characters are dropped as outliers. We sampled 2000 training and 700 testing instances from each level to form the final cn\_k12 dataset.

\section{Hyperparameter Details}
\label{sec:HyperParameter}

\begin{figure}[t]
\centering
\begin{subfigure}[b]{0.329\textwidth}
  \centering
  \includegraphics[height=3.07cm]{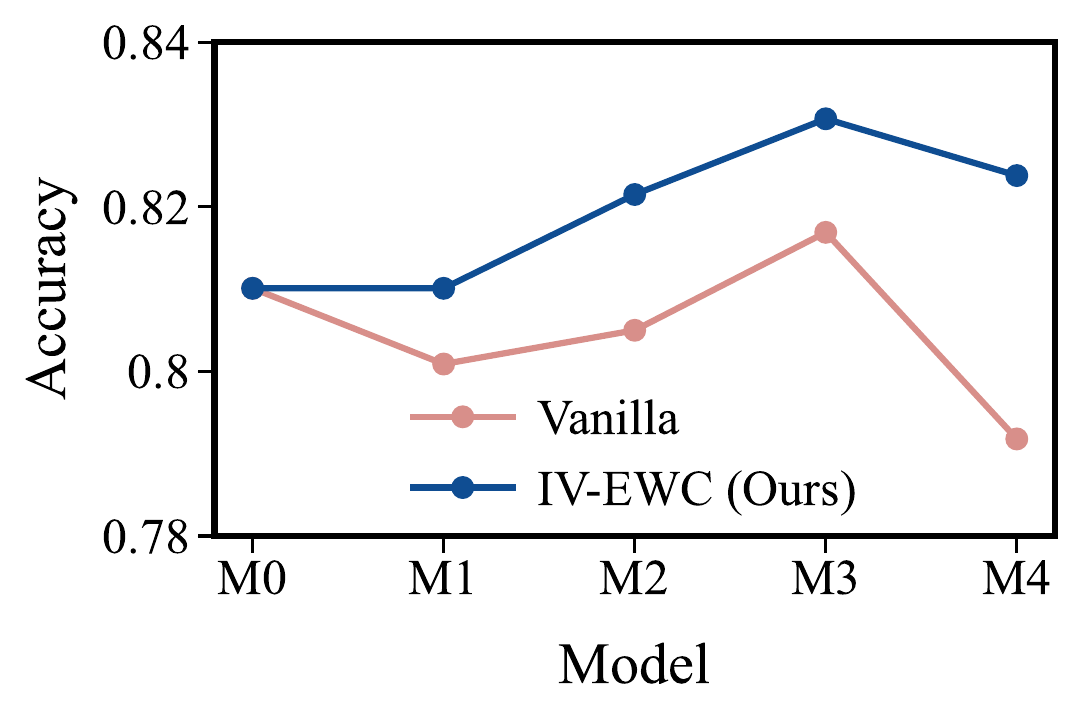}
  \caption{Level 0}
  \hfill
\end{subfigure}
\begin{subfigure}[b]{0.329\textwidth}
  \centering
  \includegraphics[height=3.07cm]{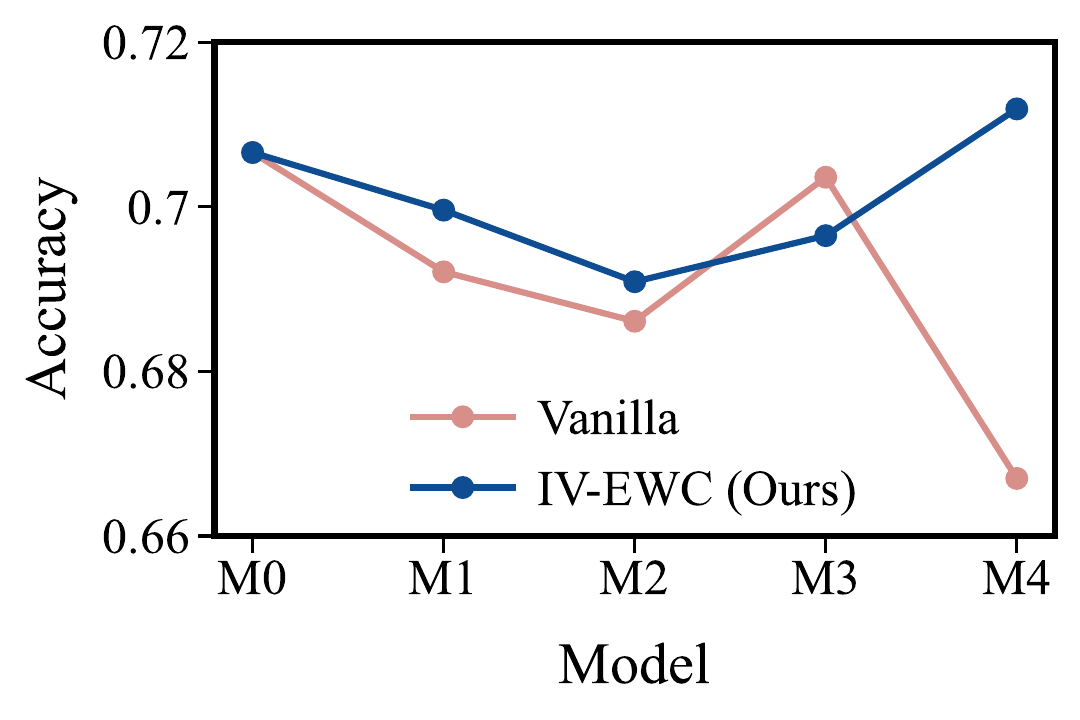}
  \caption{Level 1}
  \hfill
\end{subfigure}
\begin{subfigure}[b]{0.329\textwidth}
  \centering
  \includegraphics[height=3.07cm]{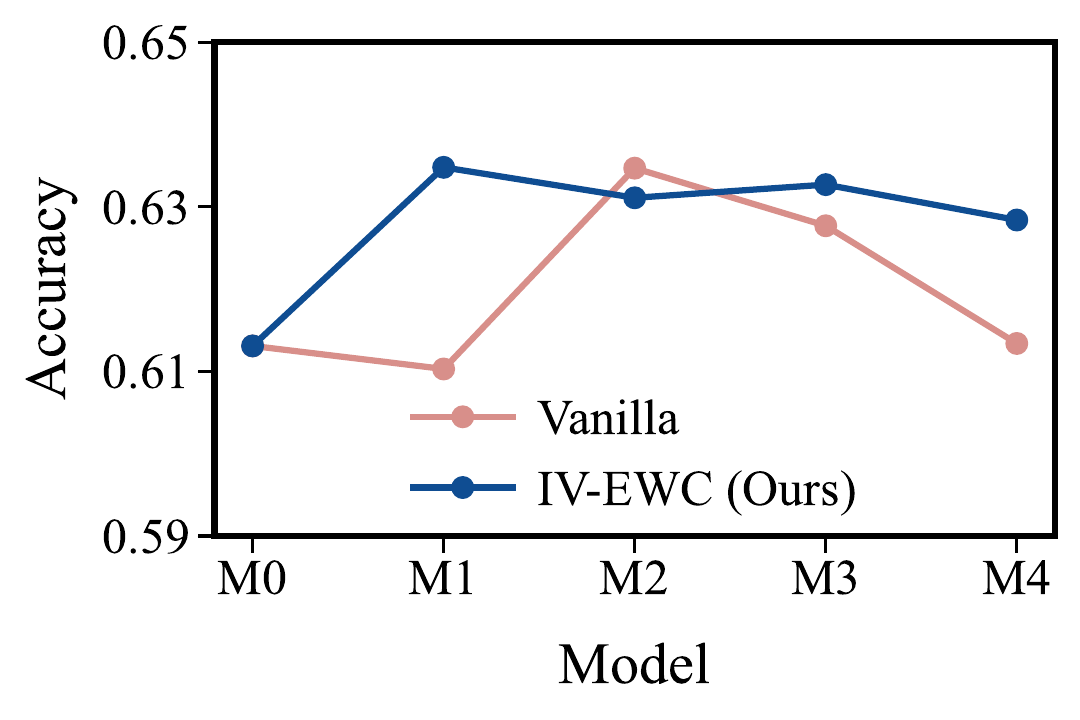}
  \caption{Level 2}
  \hfill
\end{subfigure}
\\
\begin{subfigure}[b]{0.329\textwidth}
  \centering
  \includegraphics[height=3.07cm]{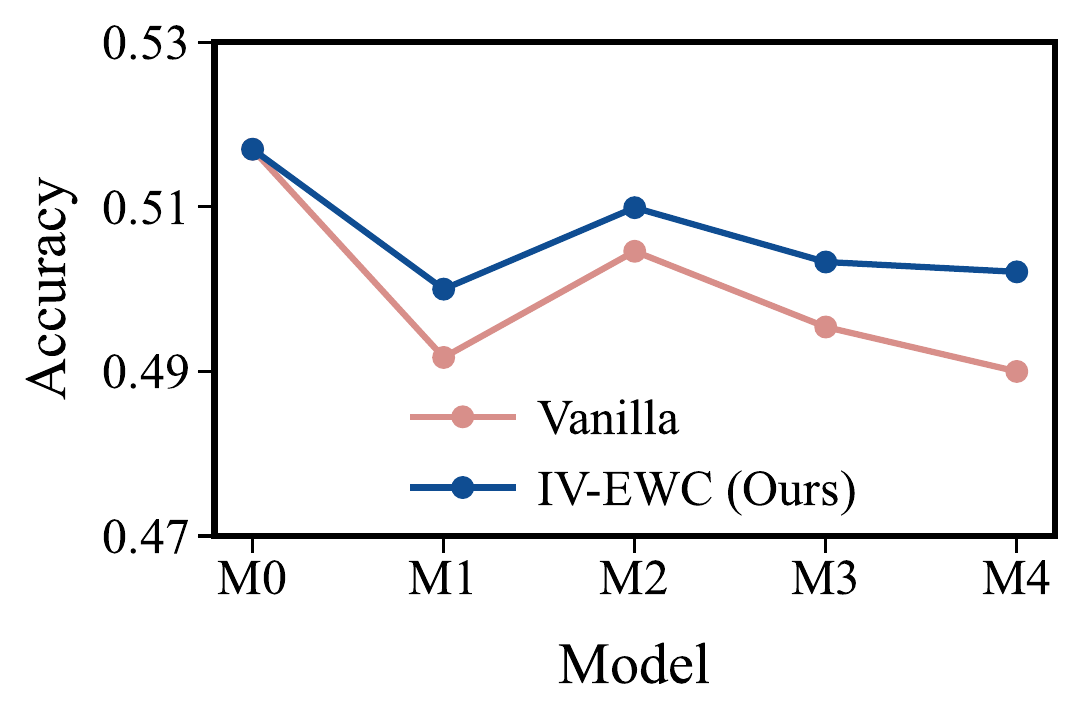}
  \caption{Level 3}
  \hfill
\end{subfigure}
\begin{subfigure}[b]{0.329\textwidth}
  \centering
  \includegraphics[height=3.07cm]{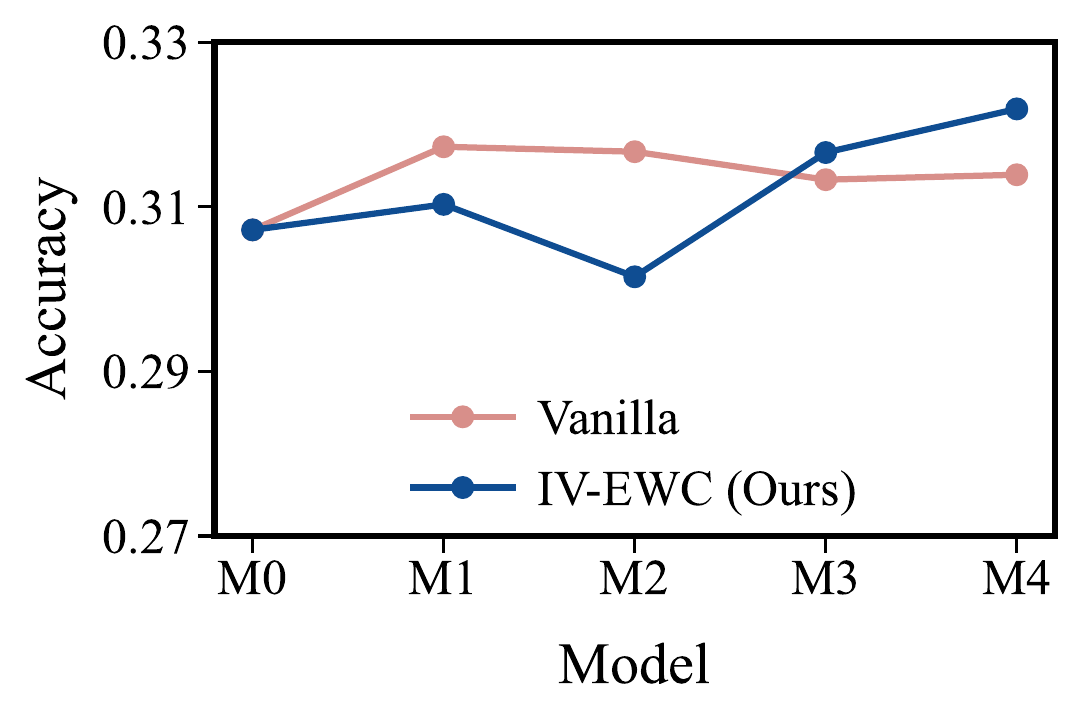}
  \caption{Level 4}
  \hfill
\end{subfigure}
\begin{subfigure}[b]{0.329\textwidth}
  \centering
  \includegraphics[height=3.07cm]{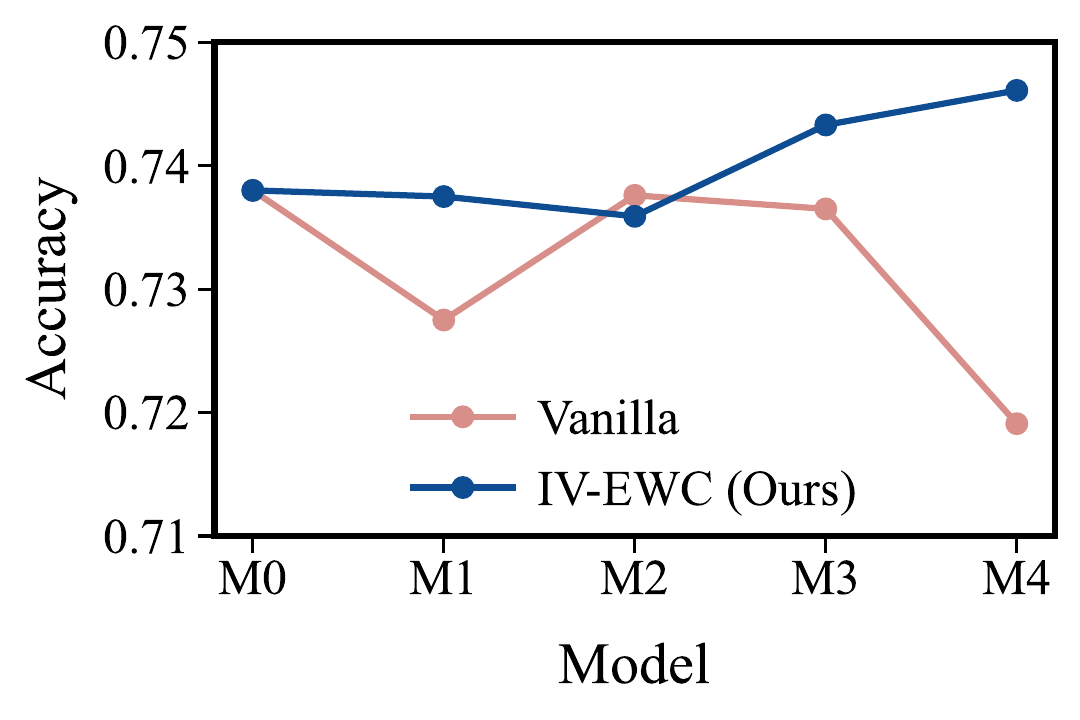}
  \caption{Overall}
  \hfill
\end{subfigure}
\caption{Accuracy of models trained after each task using vanilla curriculum (dashed line) and \textsf{IV-EWC} (solid line) on the MATH dataset based on Qwen2.5-3B-Instruct, evaluated on the test sets of the five curriculum tasks.}
\label{fg:TrainingProcessMATH3B}
\end{figure}

In this section, we list all hyperparameter settings for methods in the experiments. Moreover, we tuned the learning rates for different datasets on diverse models, the adopted learning rates are also reported in this section.

For replay-based methods (random replay and A-GEM), the number of selected replay samples or reference samples was 100, in correspondence with the validation set increment 100 of \textsf{IV-EWC}. For regularization-based methods (EWC and \textsf{IV-EWC}), we keep the same regularization weight across methods, adding dynamic regularization to EWC. The initial regularization weight $\gamma$ was set to 1.2e8. For UPGD method, we set hyperparameter sigma to zero, disabling random noise to align with other methods. For \textsf{IV-EWC}, LiSSA recursion depth used to compute influence was fixed at 50. To find the appropriate selections of hyperparameters for \textsf{IV-EWC}, a parameter sensitivity analysis was carried out with details in Appendix \ref{sec:ParameterSensitivity}.

To avoid overfitting to the math reasoning tasks, which is reflected by the sharp decrease in all curriculum tasks' accuracy, we adjust the learning rate and reduce it for easier datasets and smaller models. For the GSM8K dataset, the learning rates were 1.0e-7 for Qwen2.5-3B-Instruct, 1.5e-7 for Qwen2.5-7B-Instruct, and 3.0e-8 for Llama-3.2-3B-Instruct. For the MATH dataset, the learning rates were 5.5e-8 for Qwen2.5-3B-Instruct, 8.0e-8 for Qwen2.5-7B-Instruct, and 1.0e-8 for Llama-3.2-3B-Instruct. For the cn\_k12 dataset, the learning rate was set to 1.5e-7 for all base models.

\section{Additional Results during Curriculum Learning}
\label{sec:CompareExp}

In this section, we provide detailed accuracy results during the curriculum learning process of models. The results include accuracies of models trained after each curriculum task, evaluated on the test sets of all curriculum tasks. To ensure clarity, the data are sorted by the trained base model.


Figure \ref{fg:TrainingProcessMATH3B} shows the results of vanilla curriculum learning and \textsf{IV-EWC} on the MATH dataset with Qwen2.5-3B-Instruct. In the later stages of training, \textsf{IV-EWC} surpasses the vanilla baseline in overall accuracy. During curriculum learning for mathematical reasoning, training on a single task does not necessarily translate into substantial accuracy gains for that task and can also alter performance on other tasks, which is in correspondence with the results of grouped accuracy.

\begin{figure}[t]
\centering
\begin{subfigure}[b]{0.329\textwidth}
  \centering
  \includegraphics[height=3.07cm]{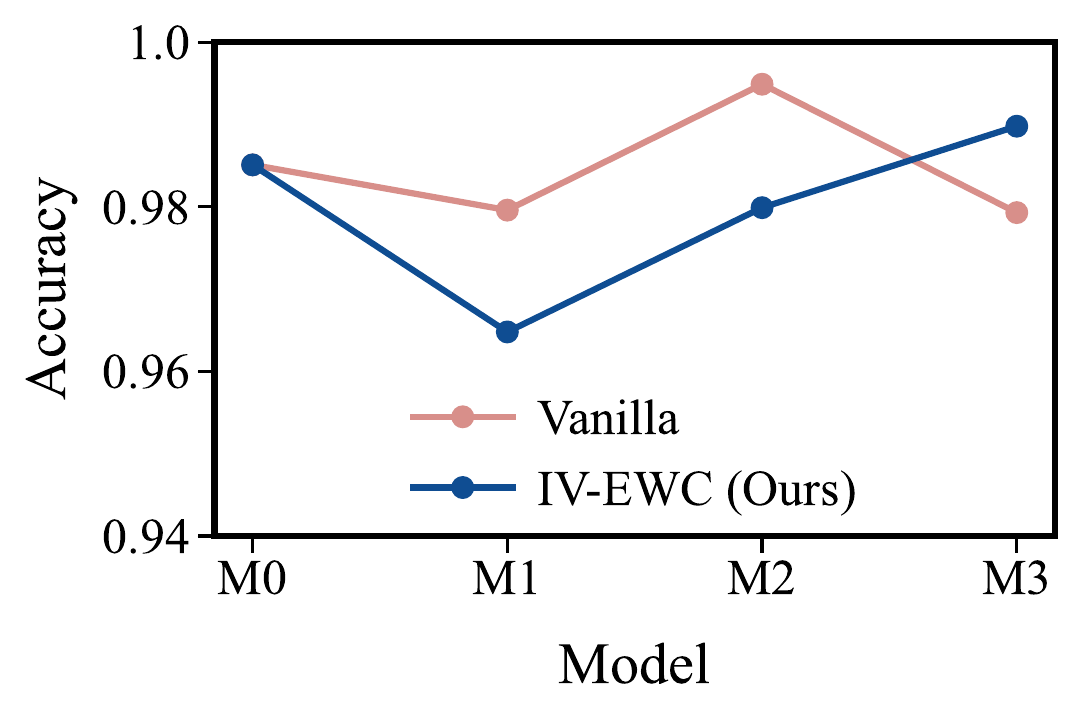}
  \caption{Level 0}
  \hfill
\end{subfigure}
\begin{subfigure}[b]{0.329\textwidth}
  \centering
  \includegraphics[height=3.07cm]{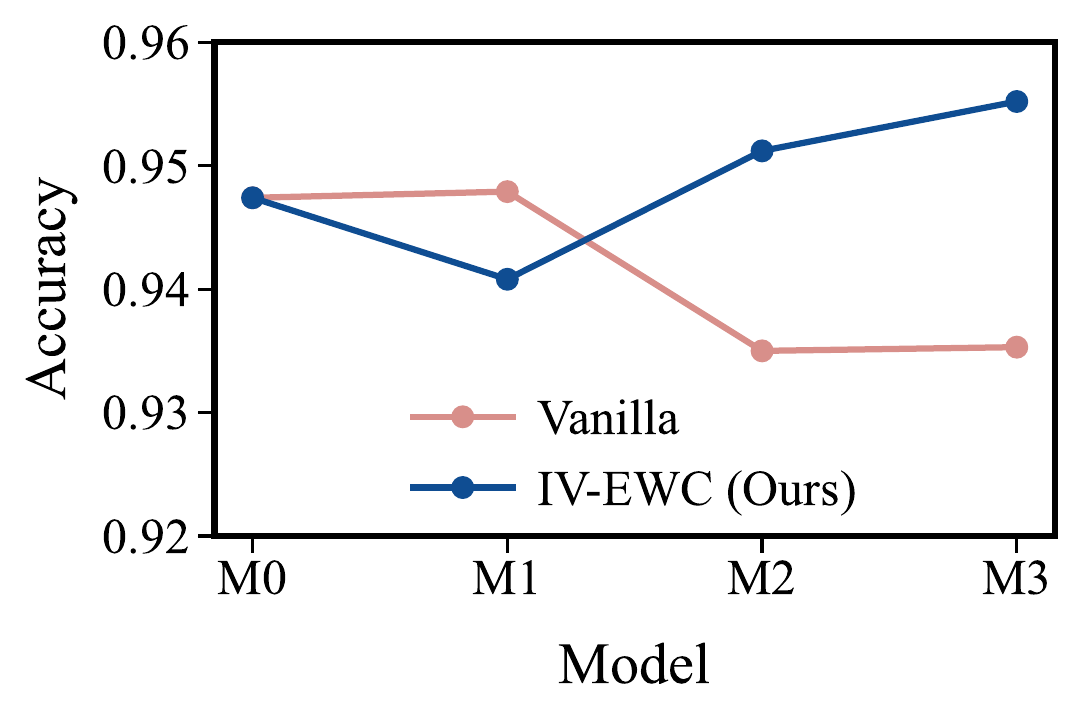}
  \caption{Level 1}
  \hfill
\end{subfigure}
\begin{subfigure}[b]{0.329\textwidth}
  \centering
  \includegraphics[height=3.07cm]{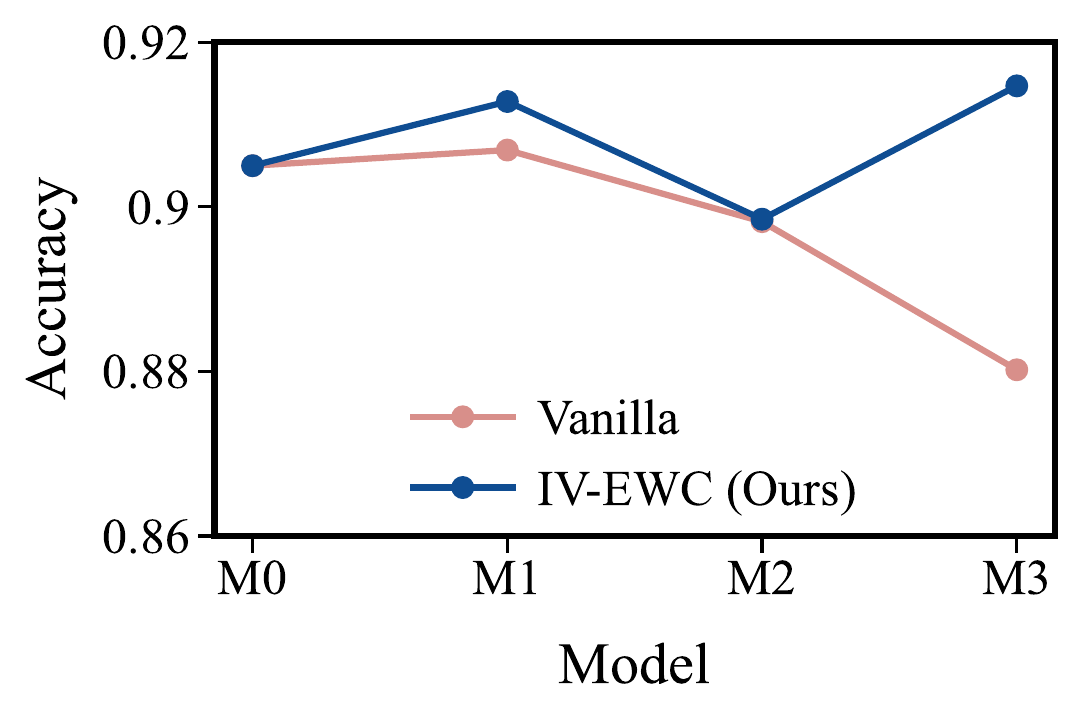}
  \caption{Level 2}
  \hfill
\end{subfigure}
\\
\begin{subfigure}[b]{0.329\textwidth}
  \centering
  \includegraphics[height=3.07cm]{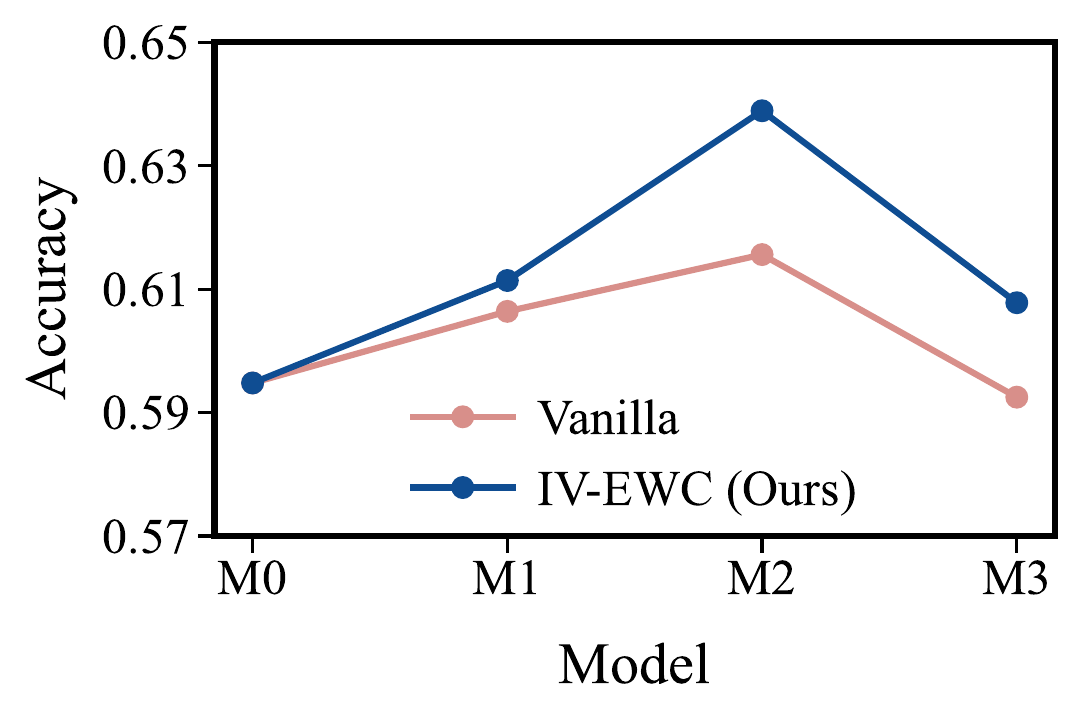}
  \caption{Level 3}
  \hfill
\end{subfigure}
\begin{subfigure}[b]{0.329\textwidth}
  \centering
  \includegraphics[height=3.07cm]{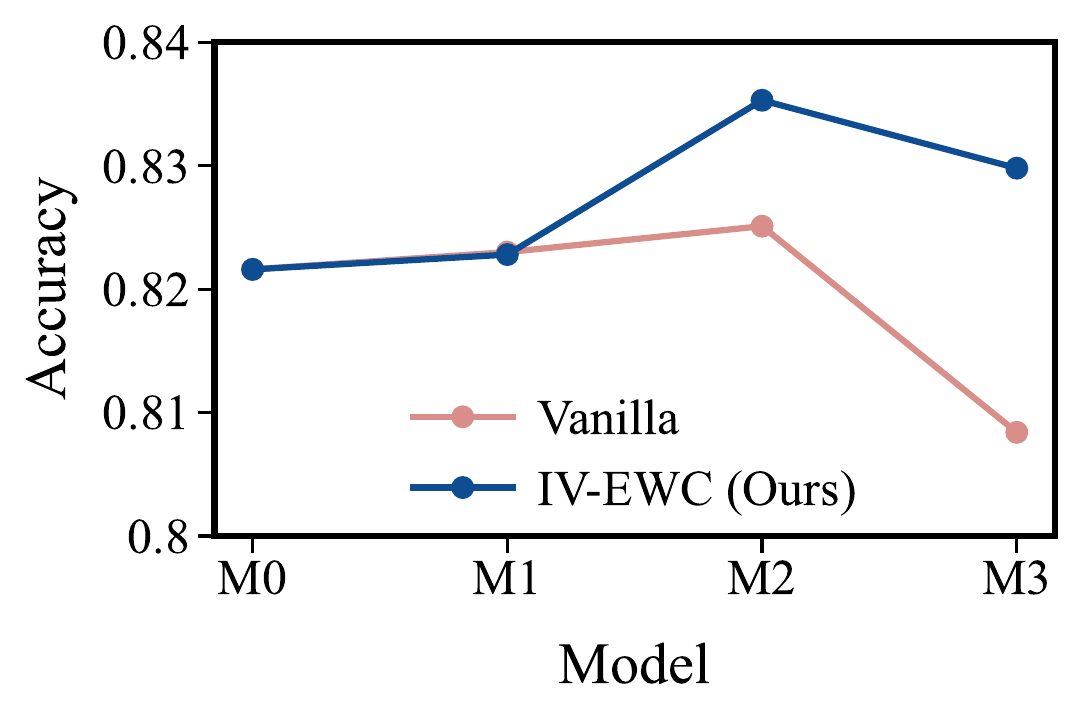}
  \caption{Overall}
  \hfill
\end{subfigure}
\caption{Accuracy of models trained after each task using vanilla curriculum (dashed line) and \textsf{IV-EWC} (solid line) on GSM8K dataset based on Qwen2.5-3B-Instruct, evaluated on the test sets of the four curriculum tasks.}
\label{fg:TrainingProcessGSM8K3B}
\end{figure}

\begin{figure}[t]
\centering
\begin{subfigure}[b]{0.329\textwidth}
  \centering
  \includegraphics[height=3.07cm]{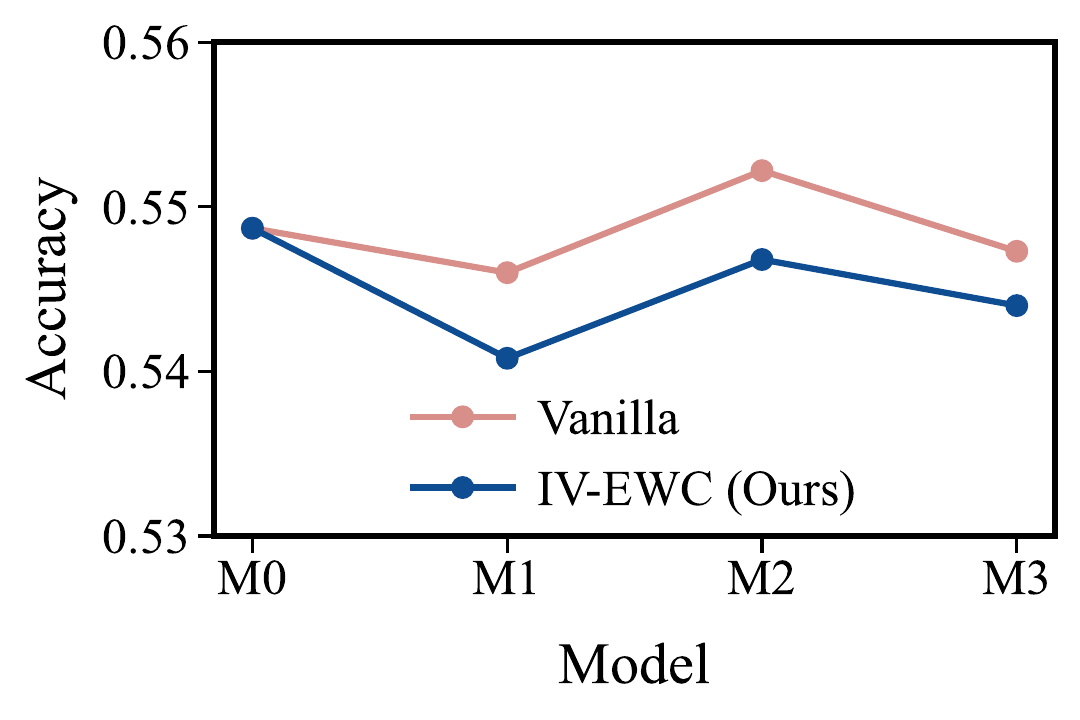}
  \caption{Level 0}
  \hfill
\end{subfigure}
\begin{subfigure}[b]{0.329\textwidth}
  \centering
  \includegraphics[height=3.07cm]{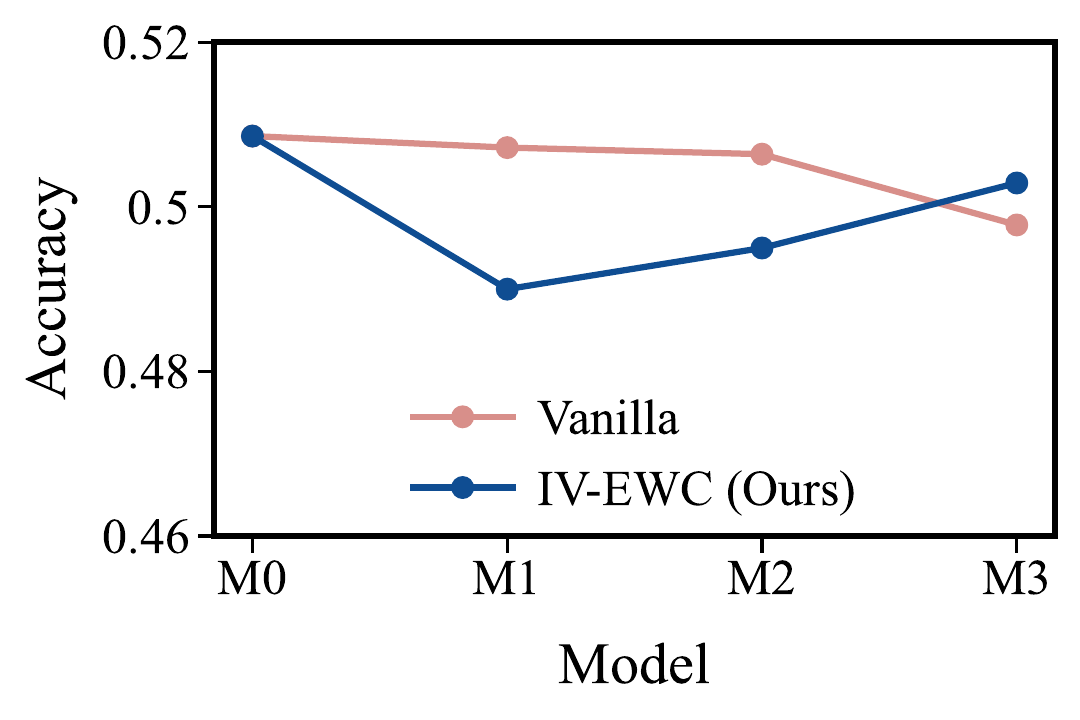}
  \caption{Level 1}
  \hfill
\end{subfigure}
\begin{subfigure}[b]{0.329\textwidth}
  \centering
  \includegraphics[height=3.07cm]{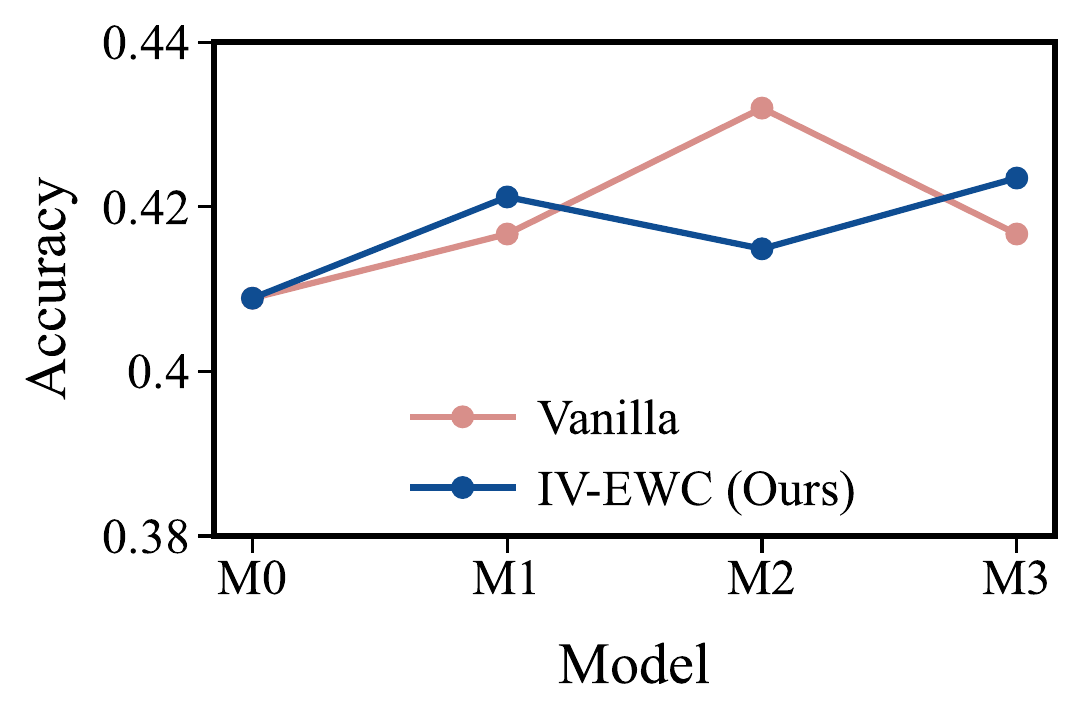}
  \caption{Level 2}
  \hfill
\end{subfigure}
\\
\begin{subfigure}[b]{0.329\textwidth}
  \centering
  \includegraphics[height=3.07cm]{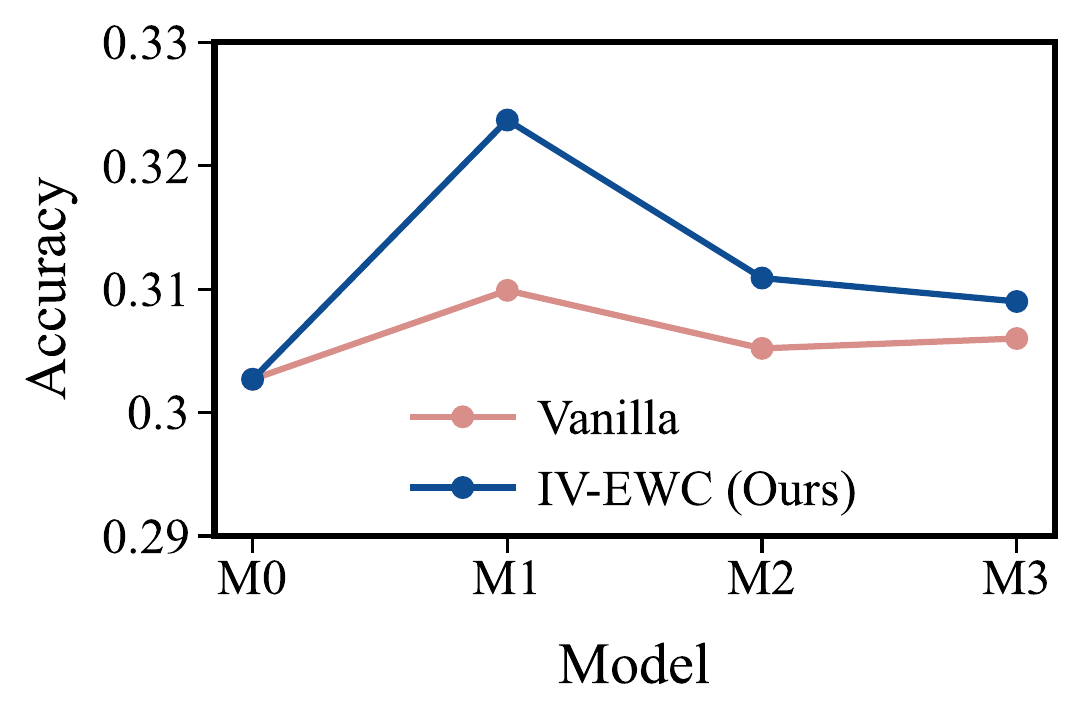}
  \caption{Level 3}
  \hfill
\end{subfigure}
\begin{subfigure}[b]{0.329\textwidth}
  \centering
  \includegraphics[height=3.07cm]{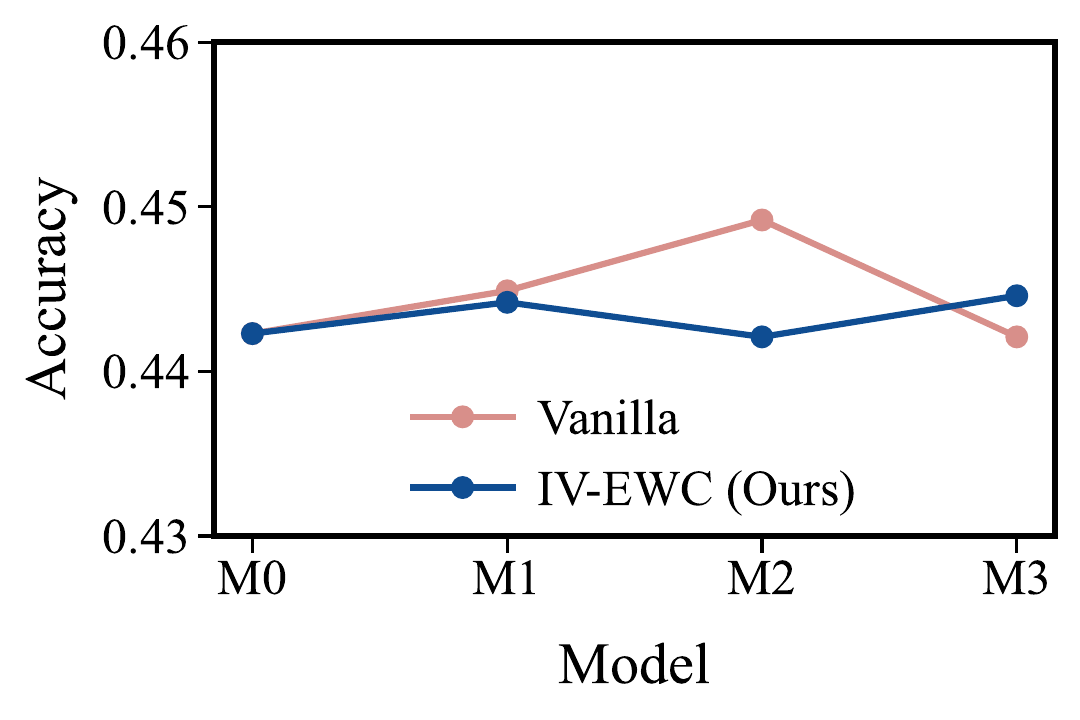}
  \caption{Overall}
  \hfill
\end{subfigure}
\caption{Accuracy of models trained after each task using vanilla curriculum (dashed line) and \textsf{IV-EWC} (solid line) on cn\_k12 dataset based on Qwen2.5-3B-Instruct, evaluated on the test sets of the four curriculum tasks.}
\label{fg:TrainingProcessCNK123B}
\end{figure}

Figure \ref{fg:TrainingProcessGSM8K3B} reports GSM8K results with Qwen2.5-3B-Instruct for the vanilla curriculum baseline and \textsf{IV-EWC}. Given the relative simplicity of GSM8K, curriculum learning on this dataset is prone to overfitting. Across the tasks of Level 1 and Level 2, the baseline accuracy declines as training proceeds, while \textsf{IV-EWC} keeps accuracy variations within a narrow range.


Figure \ref{fg:TrainingProcessCNK123B} presents results for the vanilla curriculum and \textsf{IV-EWC} on the cn\_k12 dataset using Qwen2.5-3B-Instruct. Both methods exhibit forgetting, evidenced by declining accuracy on the easier tasks. For the harder tasks, accuracy improves gradually as the curriculum progresses, indicating that curriculum learning facilitates acquisition of new tasks. At the conclusion of the curriculum, \textsf{IV-EWC} outperforms the vanilla approach, with gains primarily on the harder tasks.

\begin{figure}[t]
\centering
\begin{subfigure}[b]{0.329\textwidth}
  \centering
  \includegraphics[height=3.07cm]{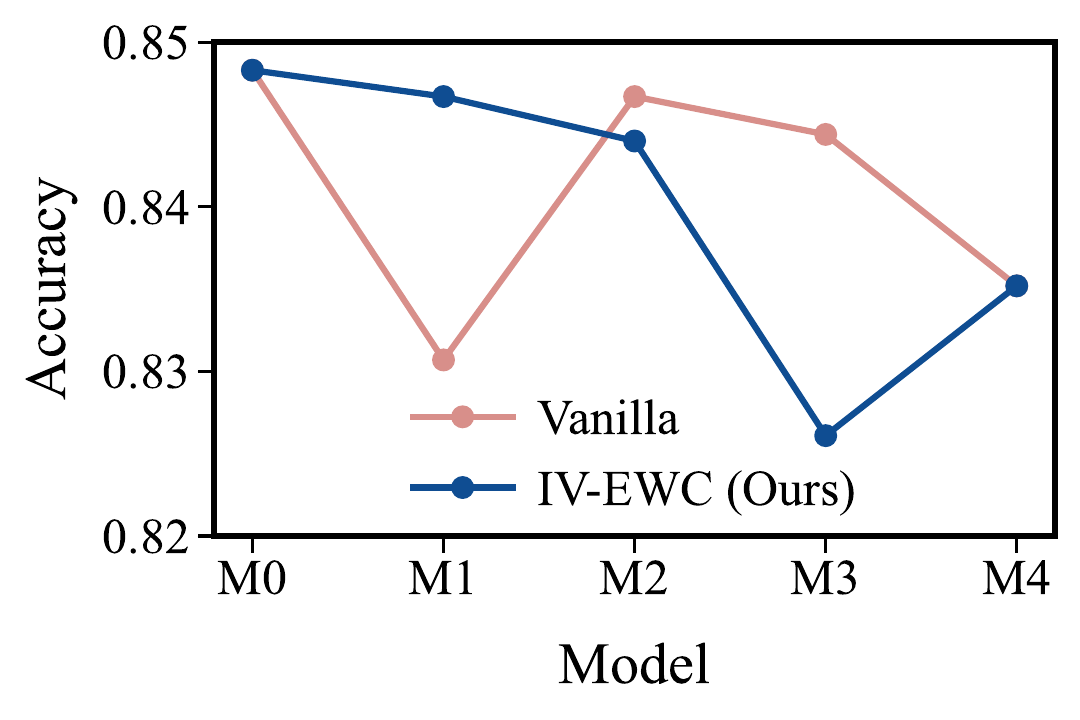}
  \caption{Level 0}
  \hfill
\end{subfigure}
\begin{subfigure}[b]{0.329\textwidth}
  \centering
  \includegraphics[height=3.07cm]{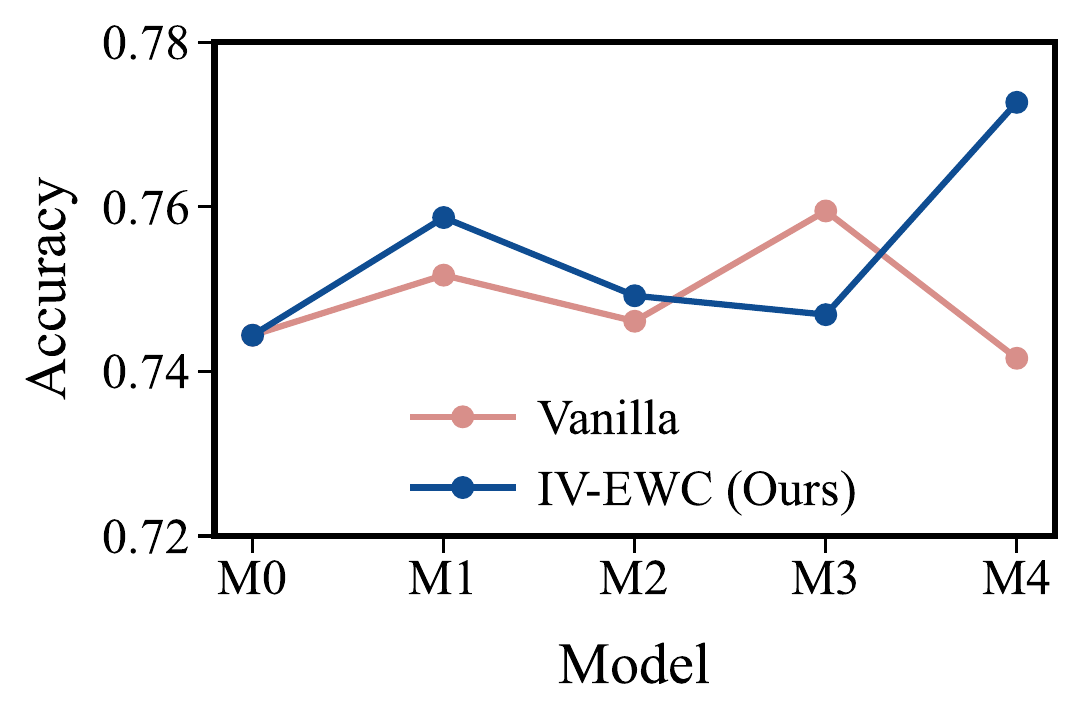}
  \caption{Level 1}
  \hfill
\end{subfigure}
\begin{subfigure}[b]{0.329\textwidth}
  \centering
  \includegraphics[height=3.07cm]{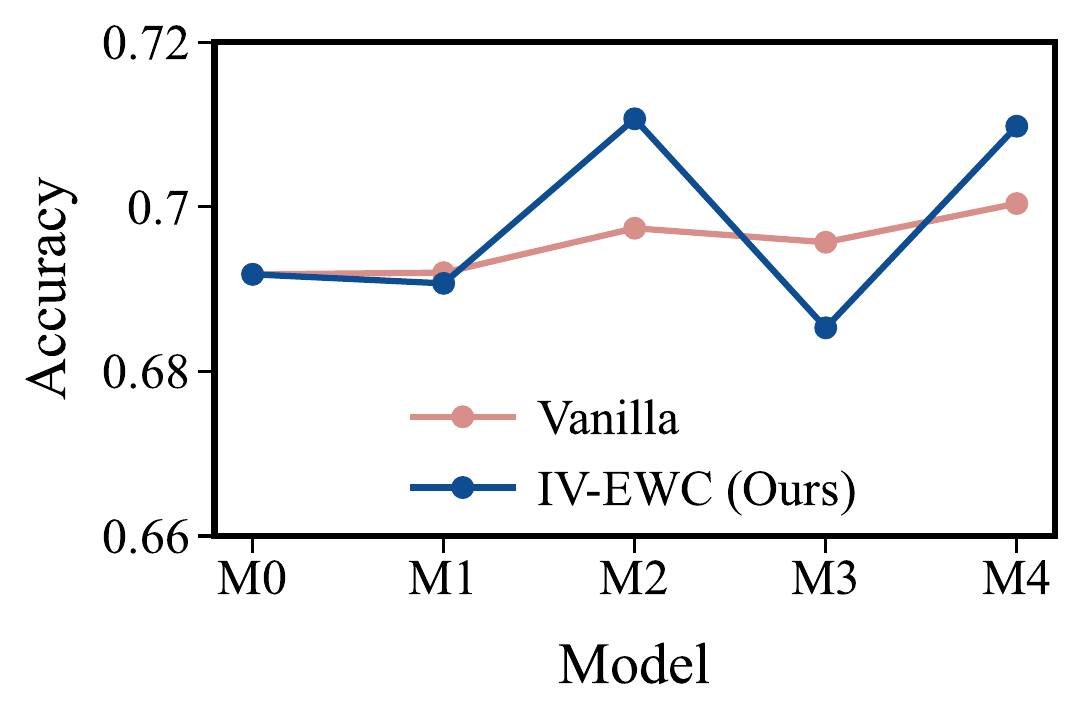}
  \caption{Level 2}
  \hfill
\end{subfigure}
\\
\begin{subfigure}[b]{0.329\textwidth}
  \centering
  \includegraphics[height=3.07cm]{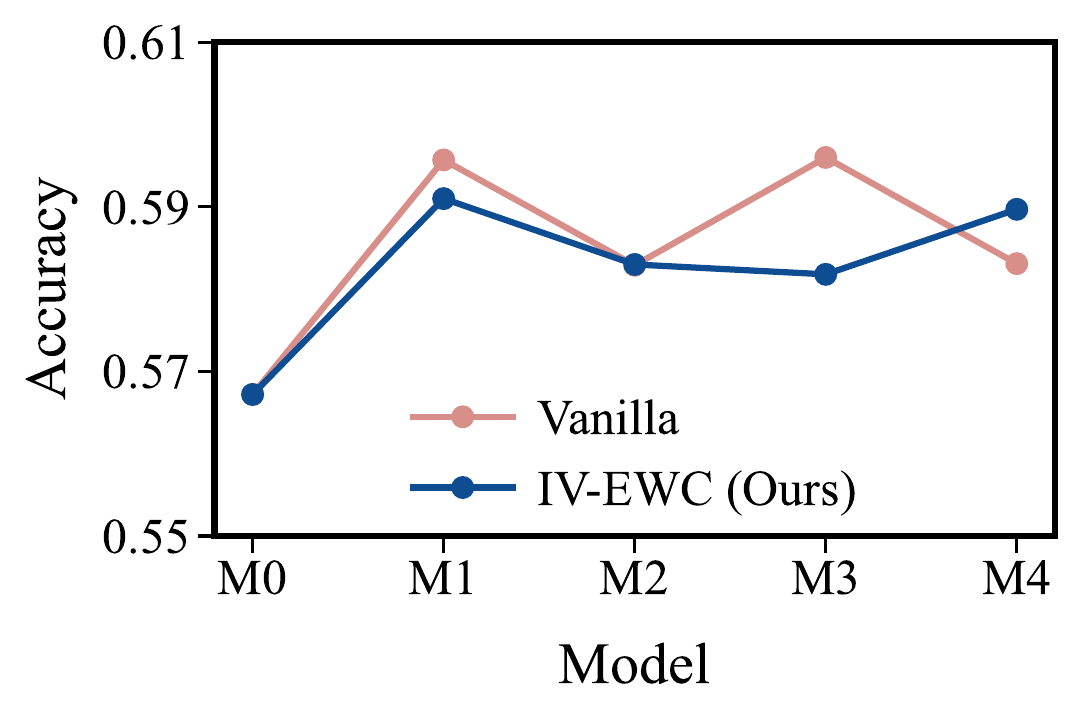}
  \caption{Level 3}
  \hfill
\end{subfigure}
\begin{subfigure}[b]{0.329\textwidth}
  \centering
  \includegraphics[height=3.07cm]{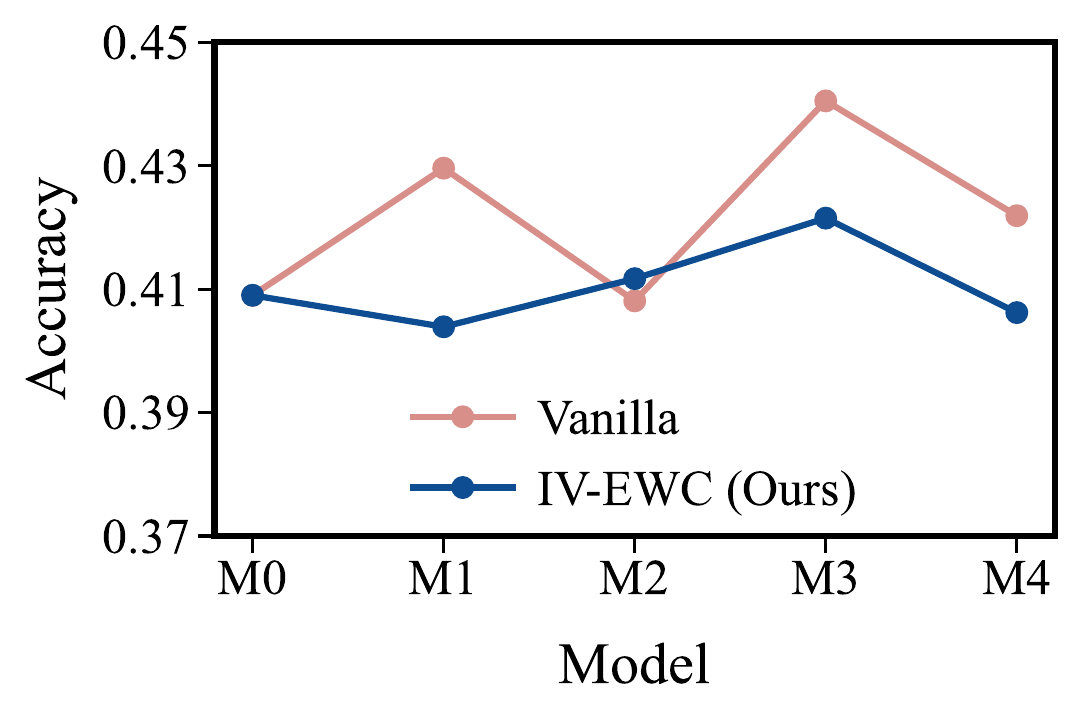}
  \caption{Level 4}
  \hfill
\end{subfigure}
\begin{subfigure}[b]{0.329\textwidth}
  \centering
  \includegraphics[height=3.07cm]{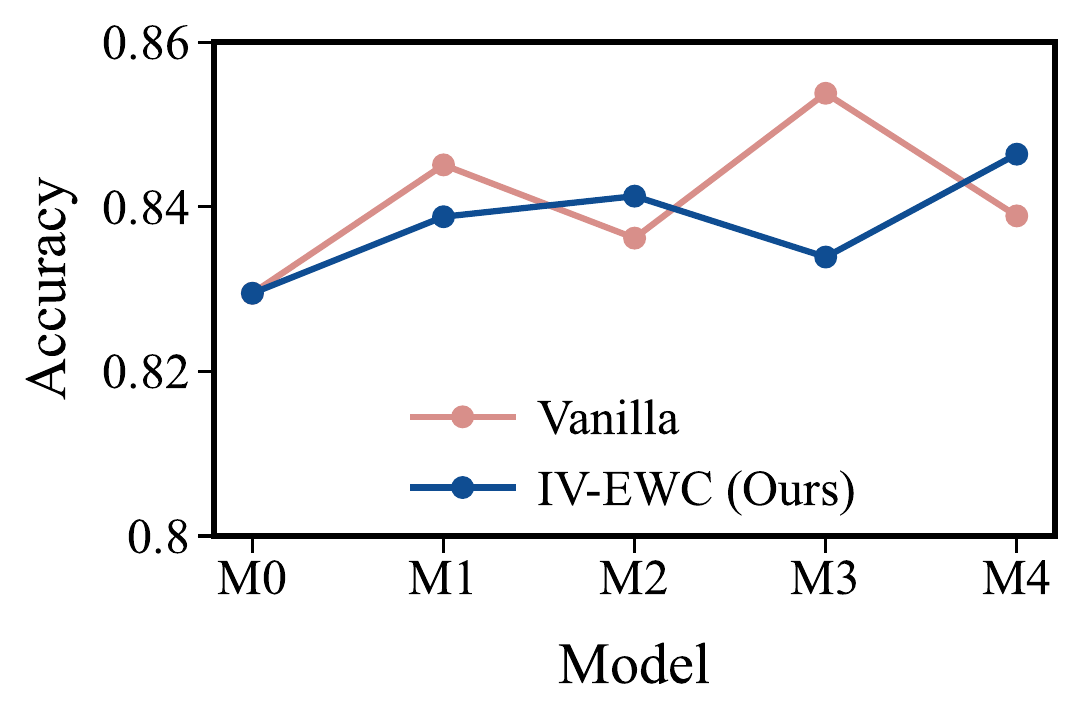}
  \caption{Overall}
  \hfill
\end{subfigure}
\caption{Accuracy of models trained after each task using vanilla curriculum (dashed line) and \textsf{IV-EWC} (solid line) on the MATH dataset based on Qwen2.5-7B-Instruct, evaluated on the test sets of the five curriculum tasks.}
\label{fg:TrainingProcessMATH7B}
\end{figure}

\begin{figure}[t]
\centering
\begin{subfigure}[b]{0.329\textwidth}
  \centering
  \includegraphics[height=3.07cm]{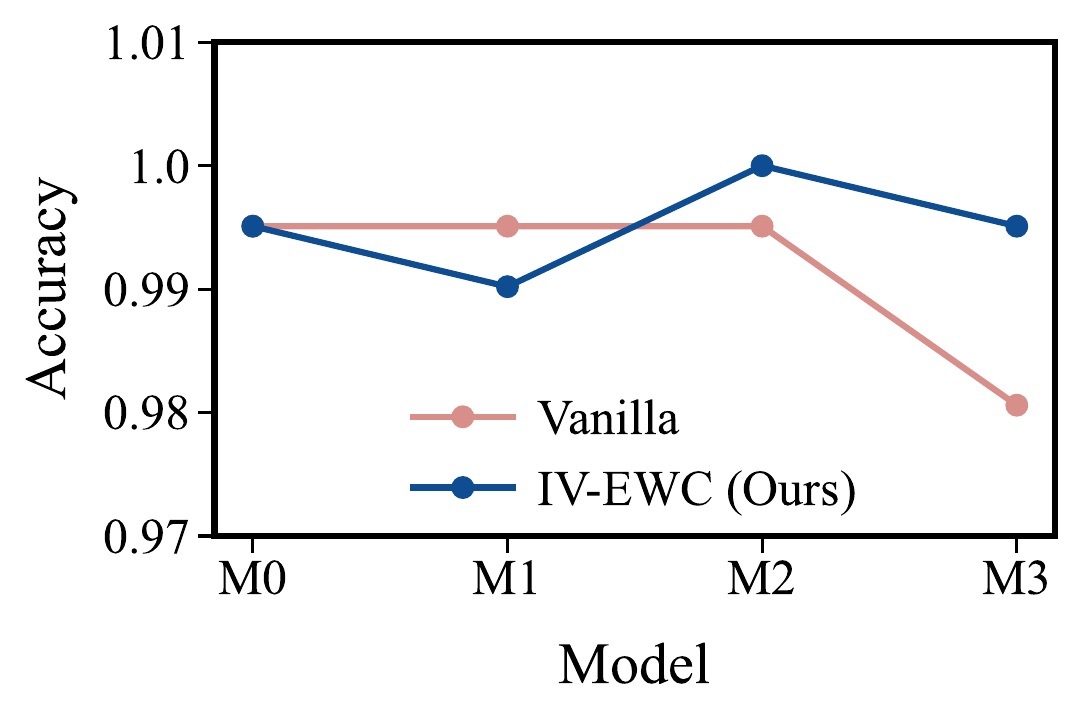}
  \caption{Level 0}
  \hfill
\end{subfigure}
\begin{subfigure}[b]{0.329\textwidth}
  \centering
  \includegraphics[height=3.07cm]{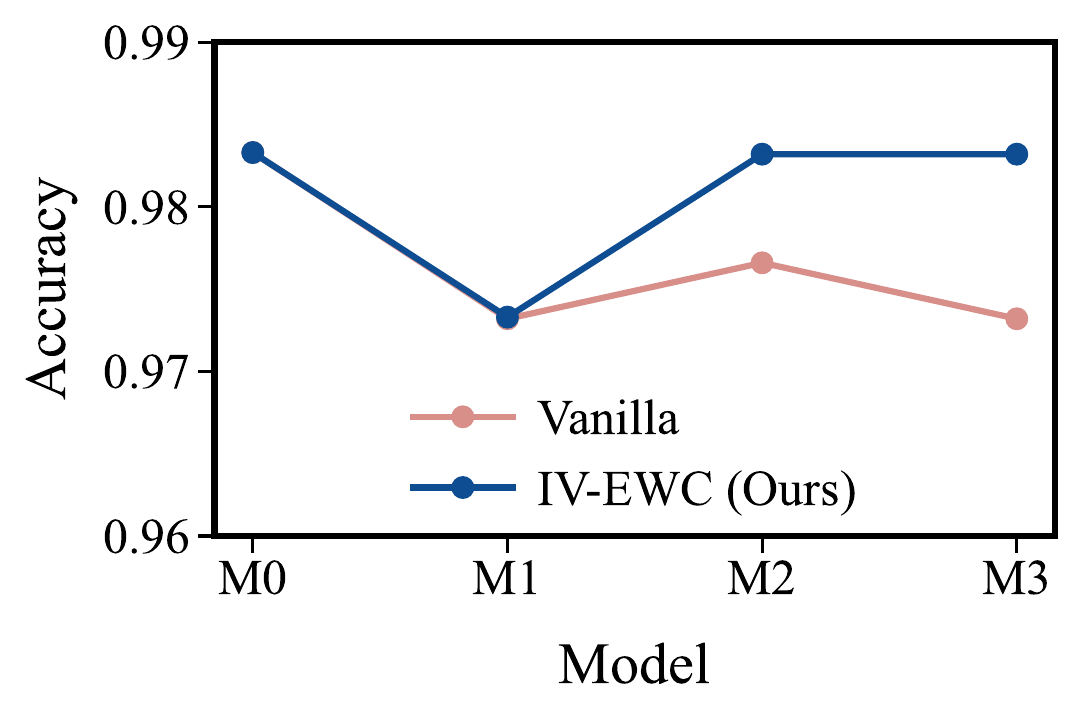}
  \caption{Level 1}
  \hfill
\end{subfigure}
\begin{subfigure}[b]{0.329\textwidth}
  \centering
  \includegraphics[height=3.07cm]{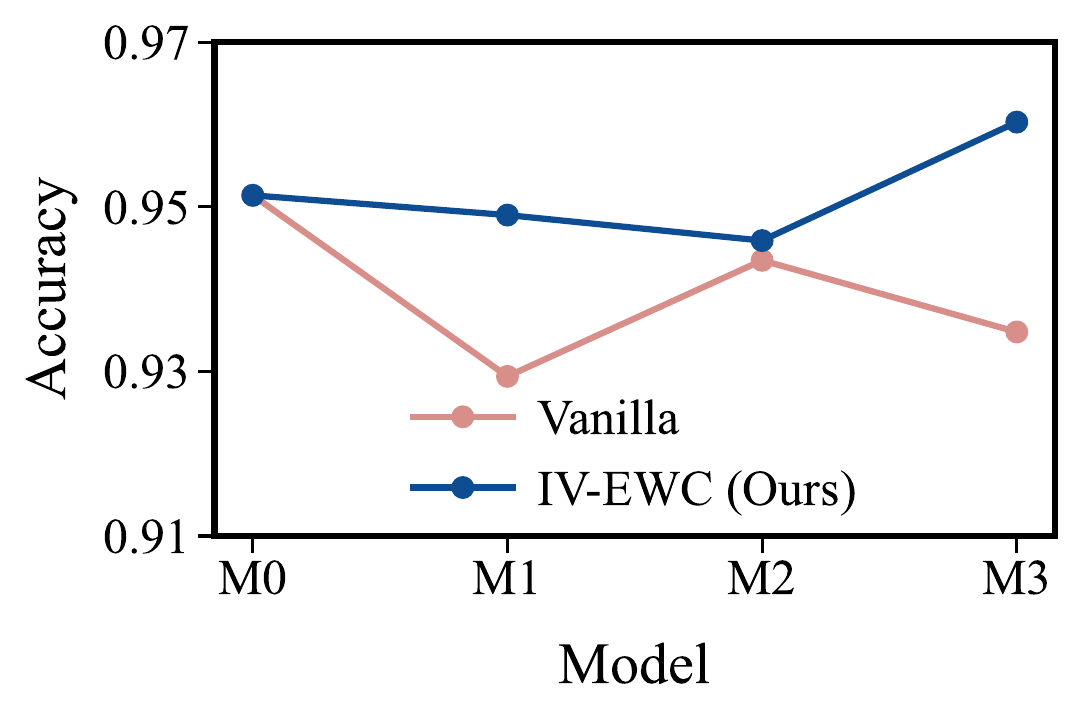}
  \caption{Level 2}
  \hfill
\end{subfigure}
\\
\begin{subfigure}[b]{0.329\textwidth}
  \centering
  \includegraphics[height=3.07cm]{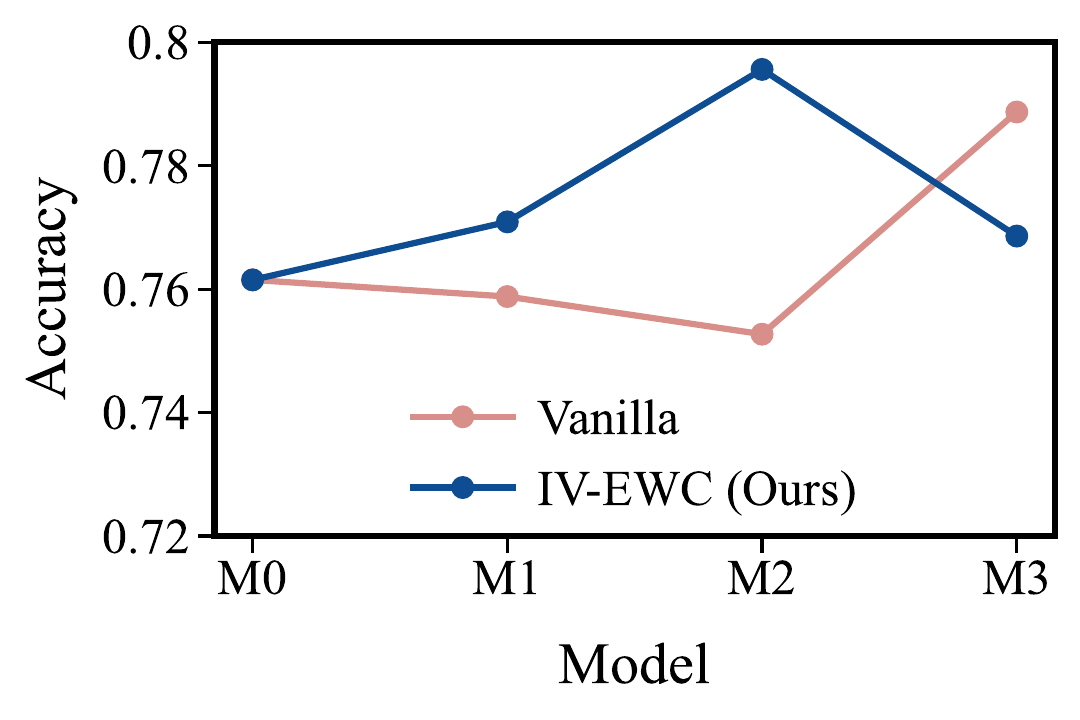}
  \caption{Level 3}
  \hfill
\end{subfigure}
\begin{subfigure}[b]{0.329\textwidth}
  \centering
  \includegraphics[height=3.07cm]{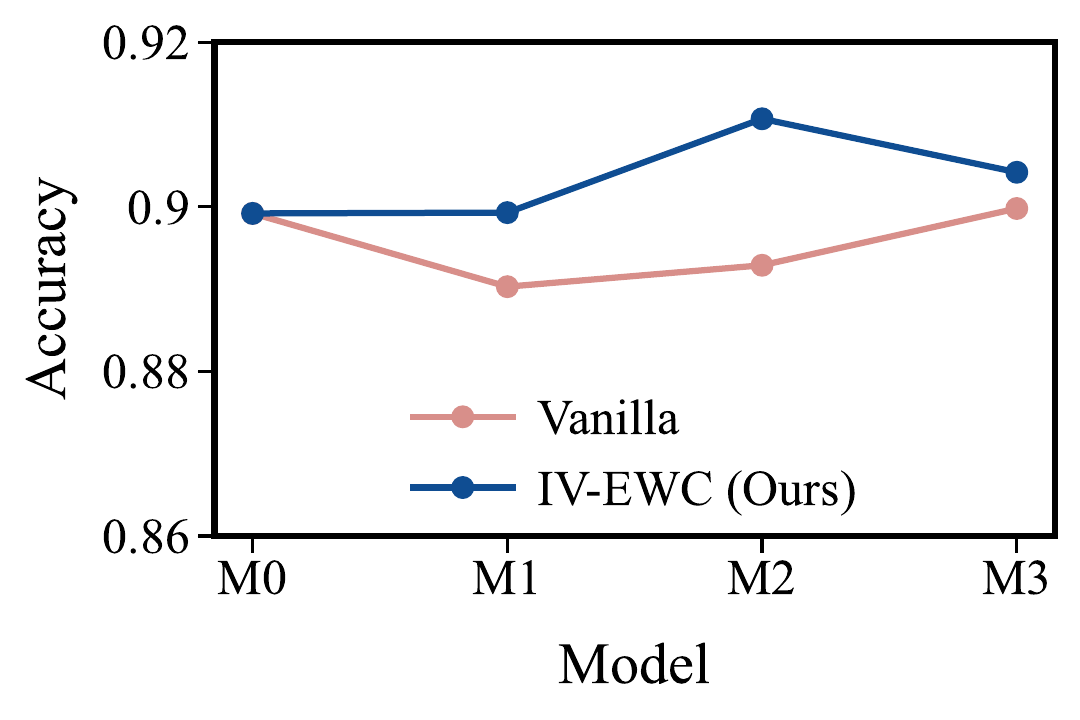}
  \caption{Overall}
  \hfill
\end{subfigure}
\caption{Accuracy of models trained after each task using vanilla curriculum (dashed line) and \textsf{IV-EWC} (solid line) on GSM8K dataset based on Qwen2.5-7B-Instruct, evaluated on the test sets of the four curriculum tasks.}
\label{fg:TrainingProcessGSM8K7B}
\end{figure}

Figure \ref{fg:TrainingProcessMATH7B} presents the performance of the vanilla curriculum and \textsf{IV-EWC} on the MATH dataset using Qwen2.5-7B-Instruct. Compared with the Qwen2.5-3B-Instruct setting, the training dynamics are less stable and the overall trend is less pronounced. Nevertheless, accuracy on previously learned tasks declines after training on each new task. This decline is smaller under \textsf{IV-EWC} than under the vanilla curriculum, indicating reduced forgetting for \textsf{IV-EWC}.


Figure \ref{fg:TrainingProcessGSM8K7B} compares the vanilla curriculum against \textsf{IV-EWC} on the GSM8K dataset using Qwen2.5-7B-Instruct. By leveraging the reasoning capability of a larger model, both approaches achieve test accuracy consistently above 0.89. Given this strong baseline, incorporating regularization further improves accuracy and reduces forgetting relative to curriculum learning without regularization.

\begin{figure}[t]
\centering
\begin{subfigure}[b]{0.329\textwidth}
  \centering
  \includegraphics[height=3.07cm]{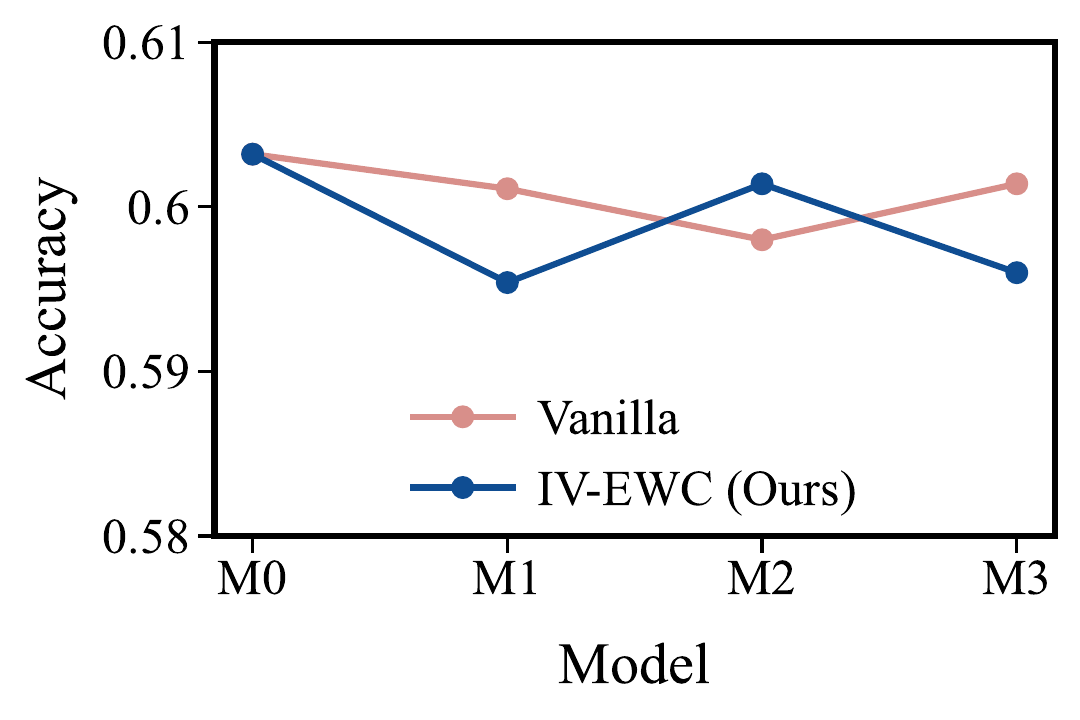}
  \caption{Level 0}
  \hfill
\end{subfigure}
\begin{subfigure}[b]{0.329\textwidth}
  \centering
  \includegraphics[height=3.07cm]{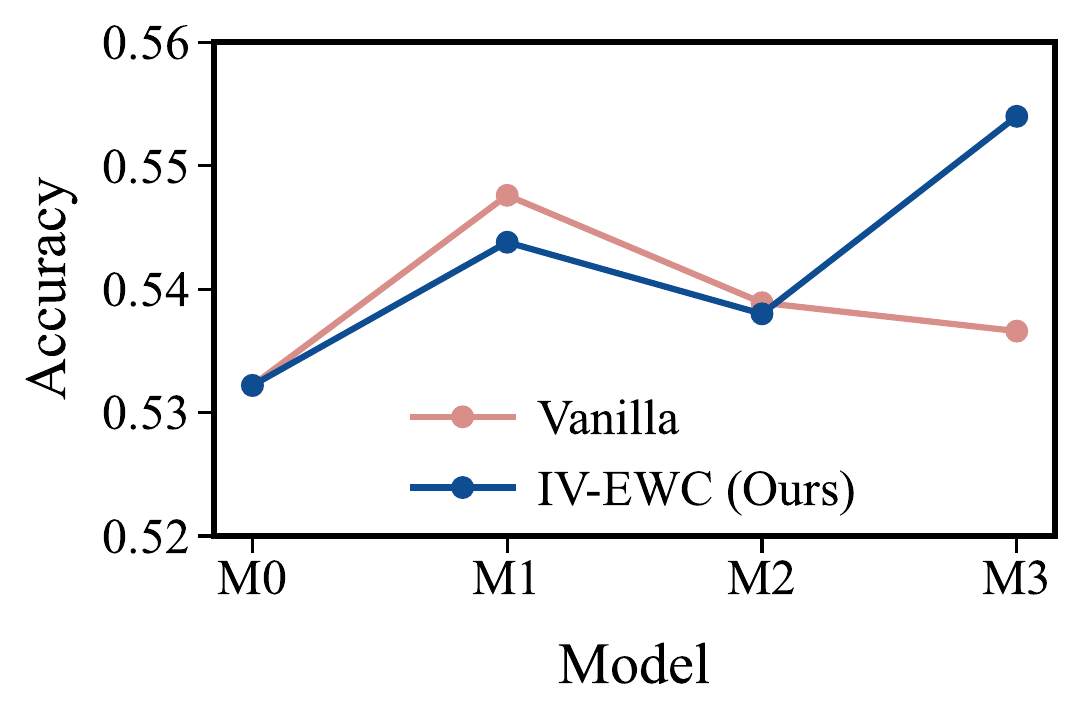}
  \caption{Level 1}
  \hfill
\end{subfigure}
\begin{subfigure}[b]{0.329\textwidth}
  \centering
  \includegraphics[height=3.07cm]{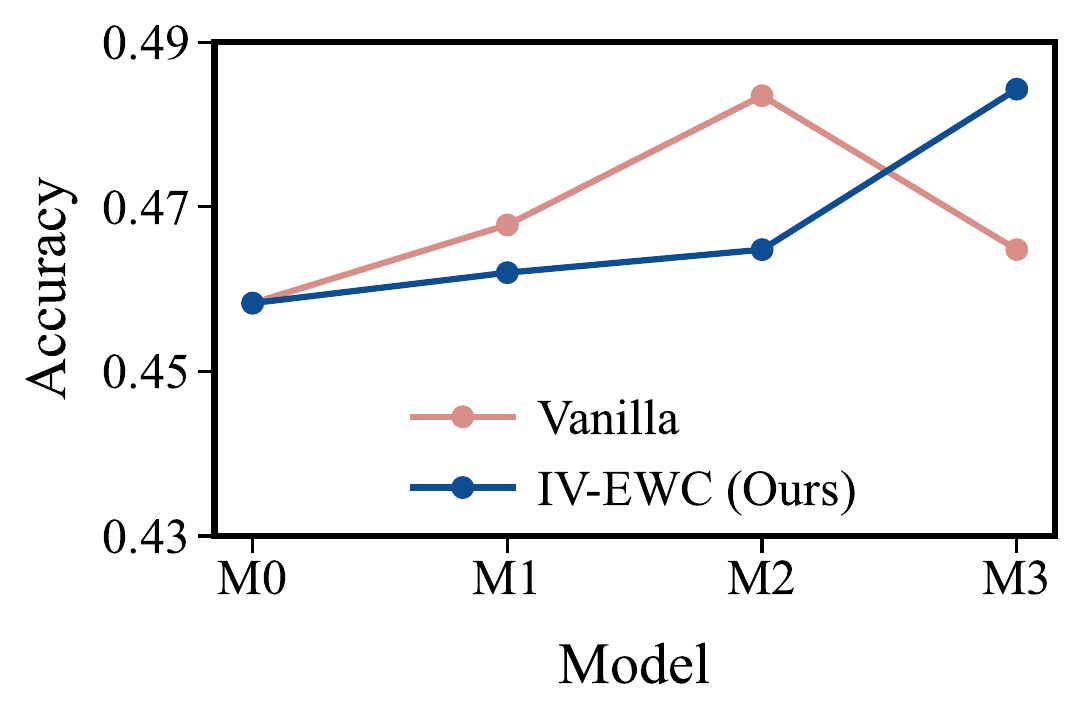}
  \caption{Level 2}
  \hfill
\end{subfigure}
\\
\begin{subfigure}[b]{0.329\textwidth}
  \centering
  \includegraphics[height=3.07cm]{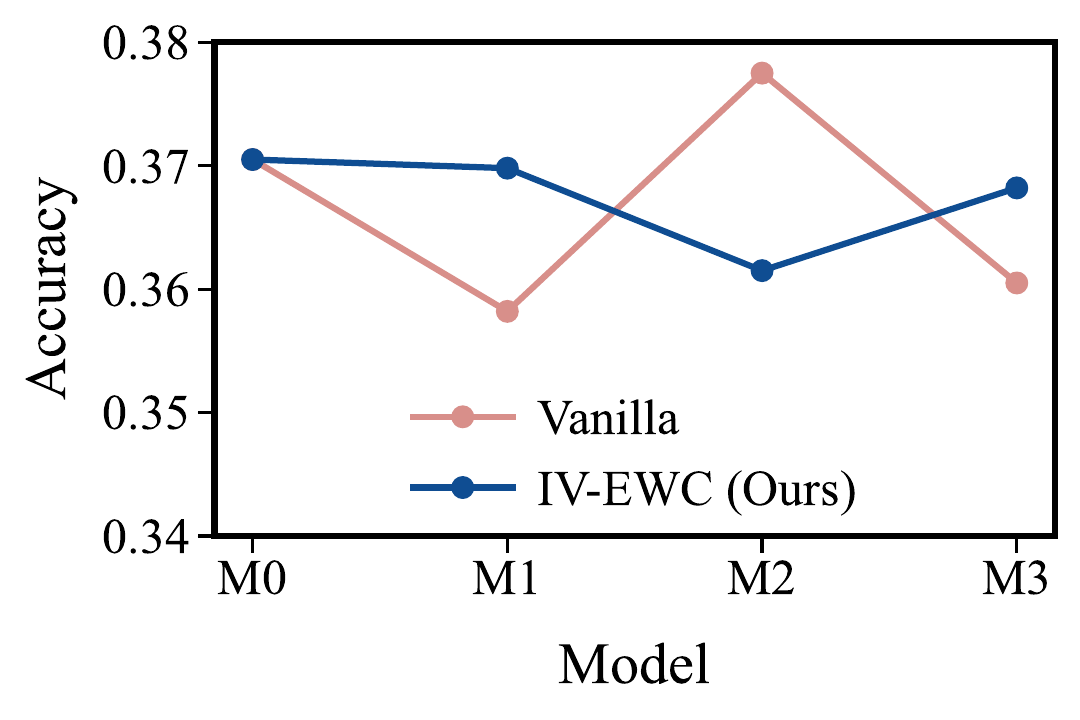}
  \caption{Level 3}
  \hfill
\end{subfigure}
\begin{subfigure}[b]{0.329\textwidth}
  \centering
  \includegraphics[height=3.07cm]{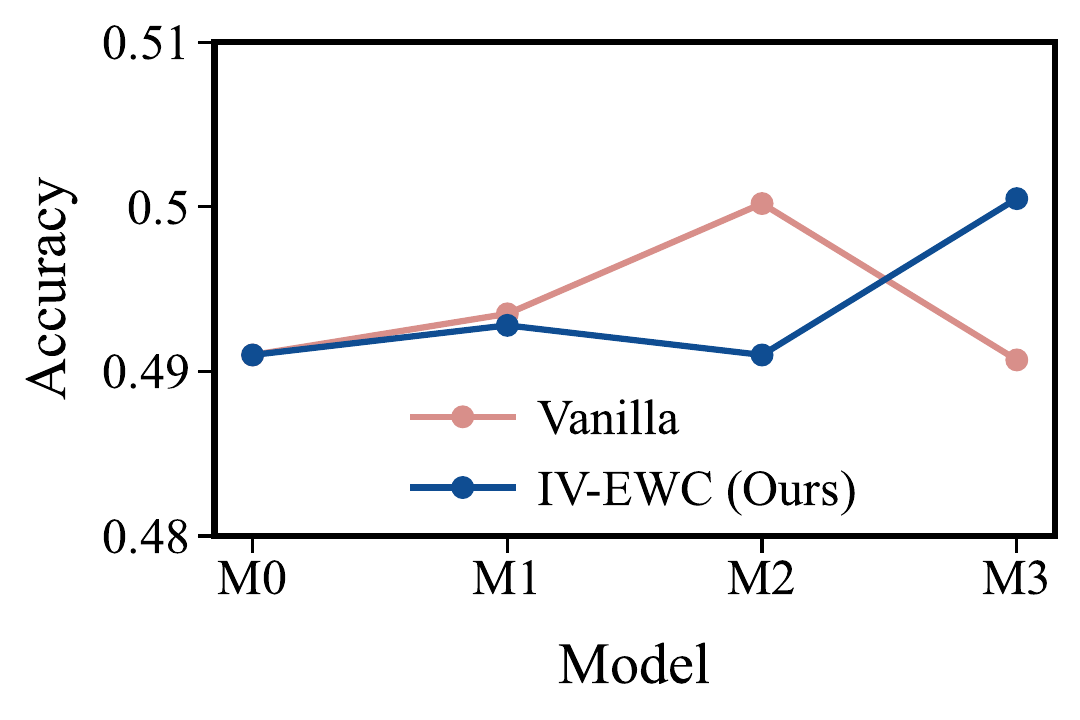}
  \caption{Overall}
  \hfill
\end{subfigure}
\caption{Accuracy of models trained after each task using vanilla curriculum (dashed line) and \textsf{IV-EWC} (solid line) on cn\_k12 dataset based on Qwen2.5-7B-Instruct, evaluated on the test sets of the four curriculum tasks.}
\label{fg:TrainingProcessCNK127B}
\end{figure}

\begin{figure}[t]
\centering
\begin{subfigure}[b]{0.329\textwidth}
  \centering
  \includegraphics[height=3.07cm]{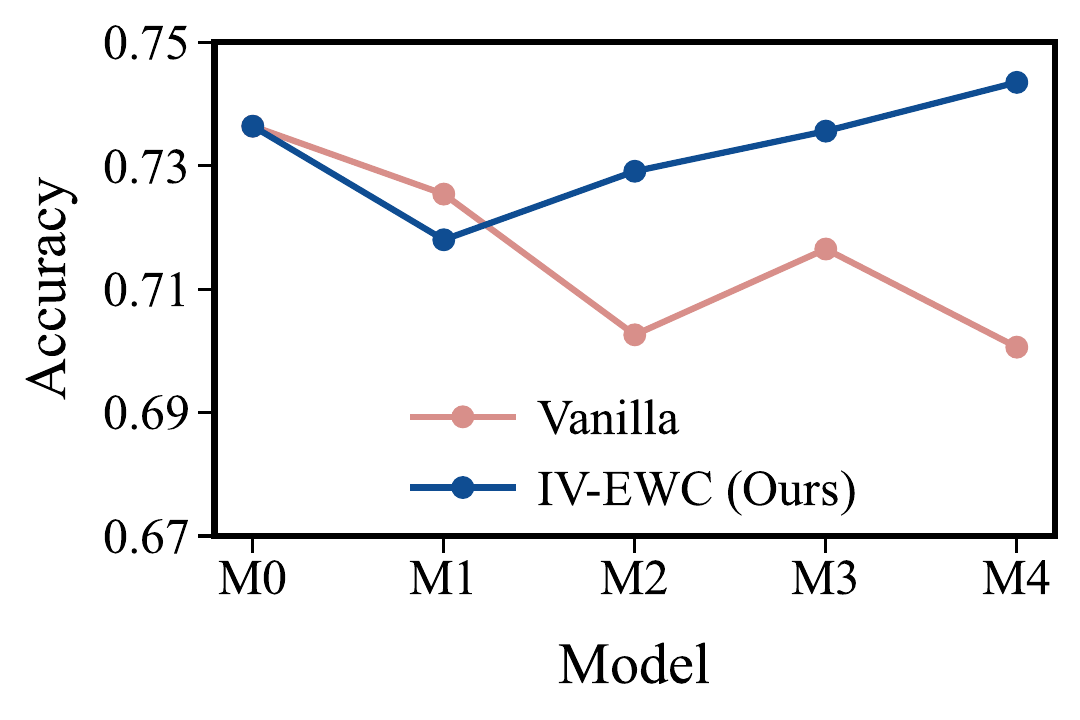}
  \caption{Level 0}
  \hfill
\end{subfigure}
\begin{subfigure}[b]{0.329\textwidth}
  \centering
  \includegraphics[height=3.07cm]{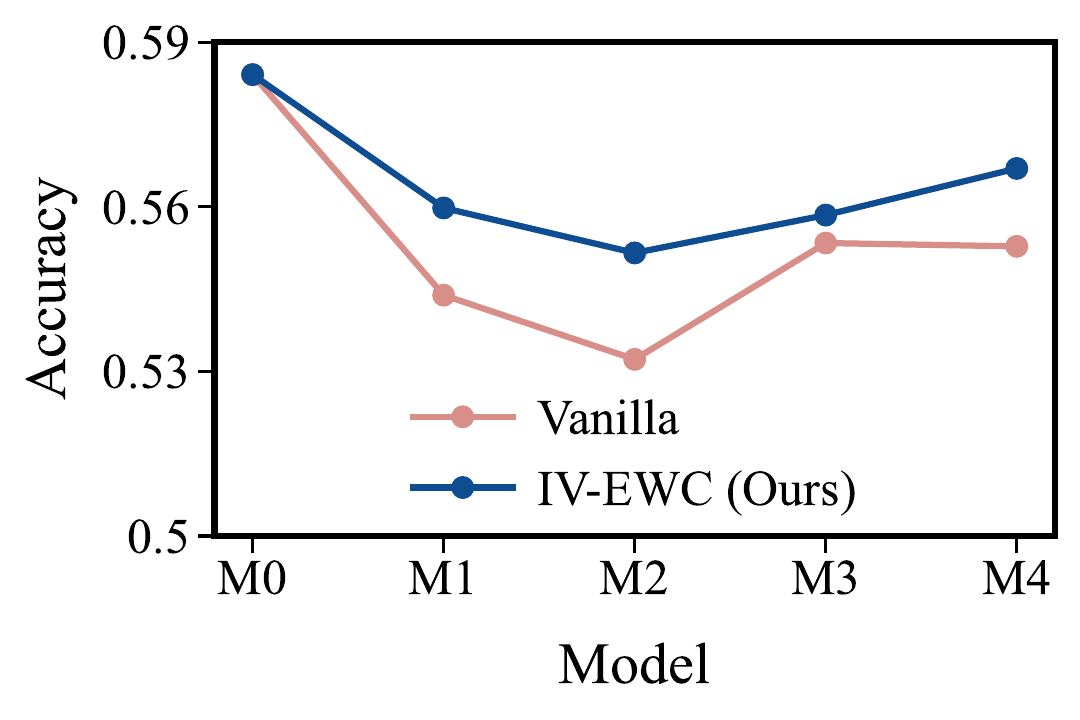}
  \caption{Level 1}
  \hfill
\end{subfigure}
\begin{subfigure}[b]{0.329\textwidth}
  \centering
  \includegraphics[height=3.07cm]{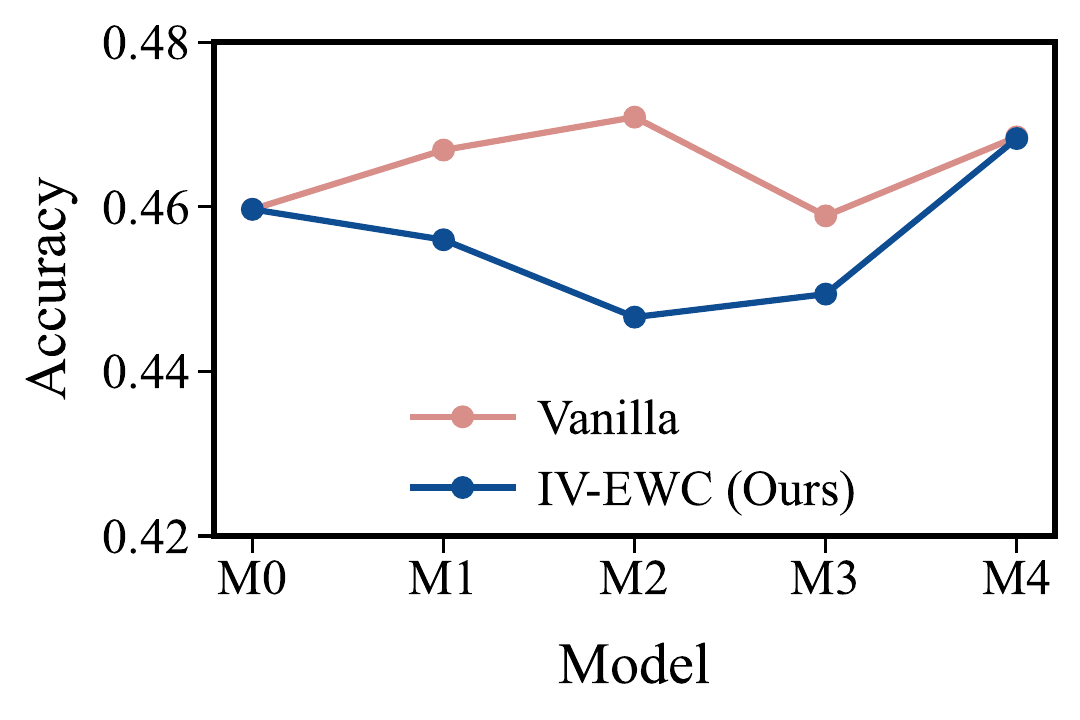}
  \caption{Level 2}
  \hfill
\end{subfigure}
\\
\begin{subfigure}[b]{0.329\textwidth}
  \centering
  \includegraphics[height=3.07cm]{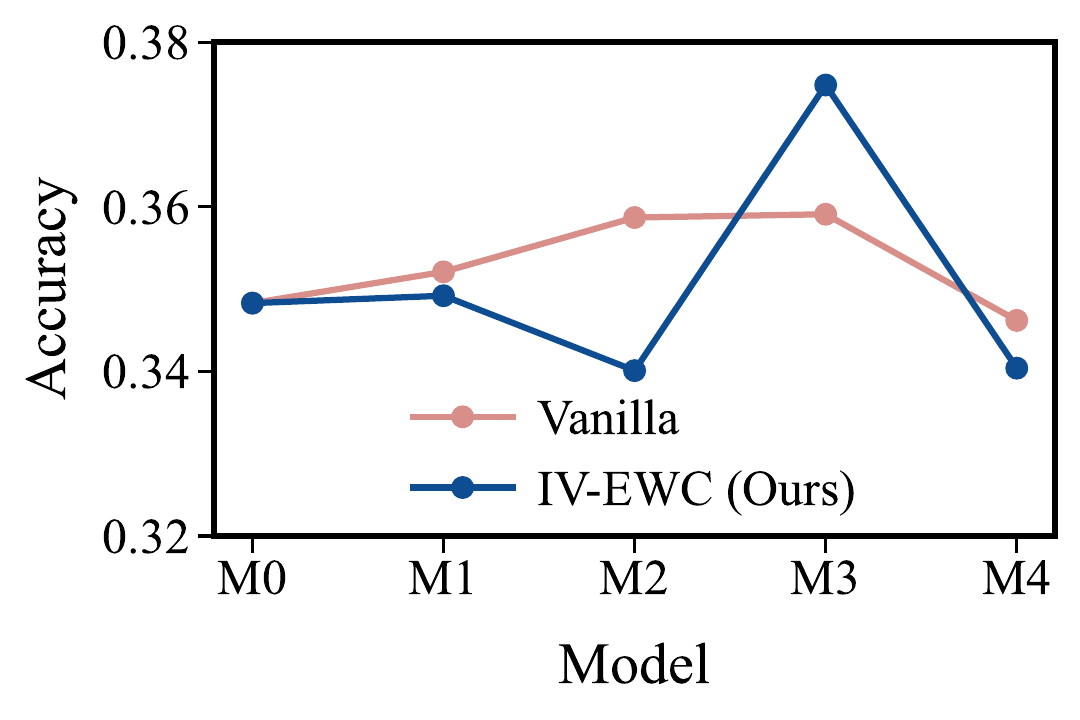}
  \caption{Level 3}
  \hfill
\end{subfigure}
\begin{subfigure}[b]{0.329\textwidth}
  \centering
  \includegraphics[height=3.07cm]{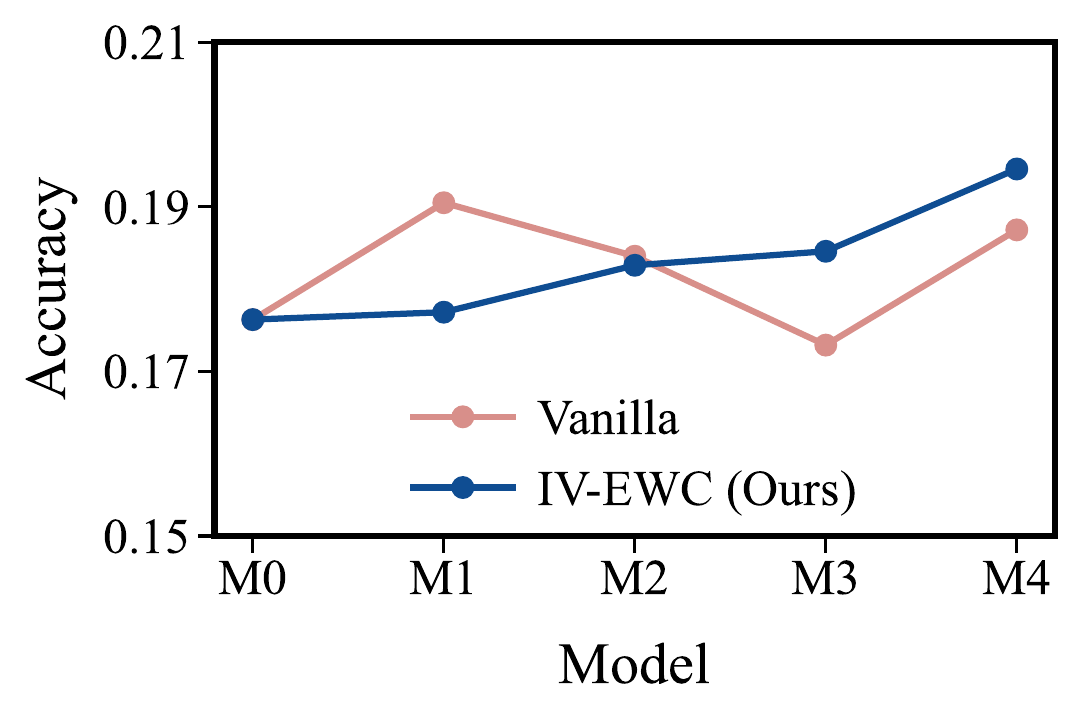}
  \caption{Level 4}
  \hfill
\end{subfigure}
\begin{subfigure}[b]{0.329\textwidth}
  \centering
  \includegraphics[height=3.07cm]{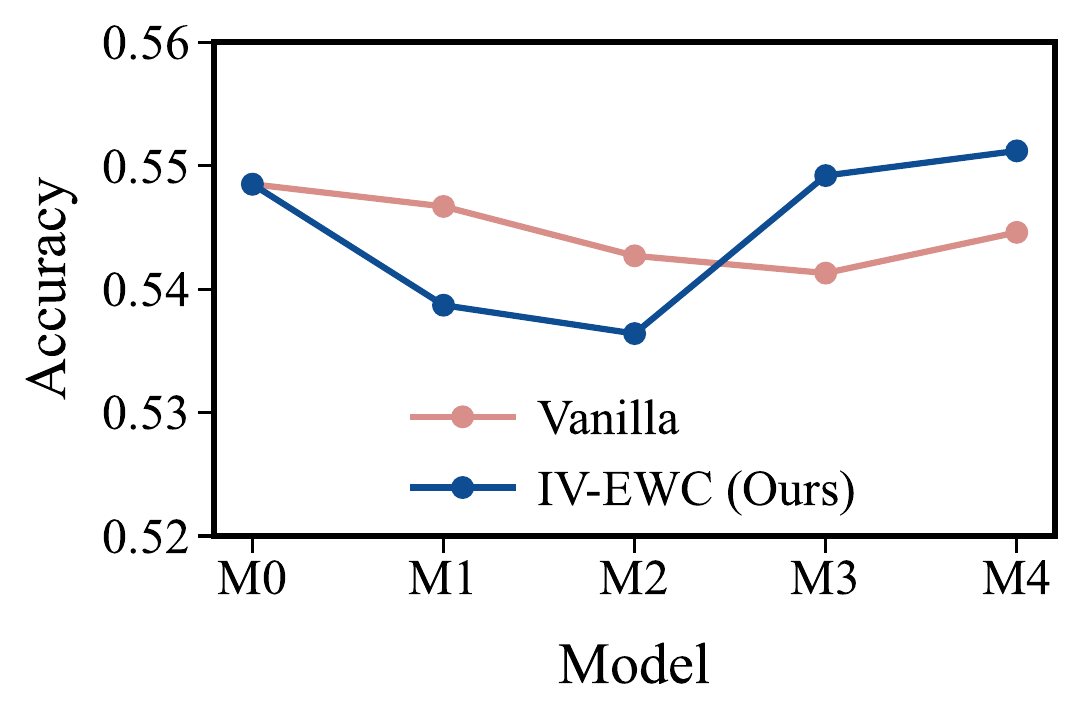}
  \caption{Overall}
  \hfill
\end{subfigure}
\caption{Accuracy of models trained after each task using vanilla curriculum (dashed line) and \textsf{IV-EWC} (solid line) on the MATH dataset based on Llama-3.2-3B-Instruct, evaluated on the test sets of the five curriculum tasks.}
\label{fg:TrainingProcessMATHLlama}
\end{figure}

Figure \ref{fg:TrainingProcessCNK127B} presents results for the vanilla curriculum baseline and \textsf{IV-EWC} on the cn\_k12 dataset with Qwen2.5-7B-Instruct. For the baseline, test accuracy on a specific task improves during its training phase but deteriorates as training proceeds to subsequent tasks. With regularization, \textsf{IV-EWC} achieves and sustains higher accuracy on previously learned tasks, mitigating forgetting.

\begin{figure}[t]
\centering
\begin{subfigure}[b]{0.329\textwidth}
  \centering
  \includegraphics[height=3.07cm]{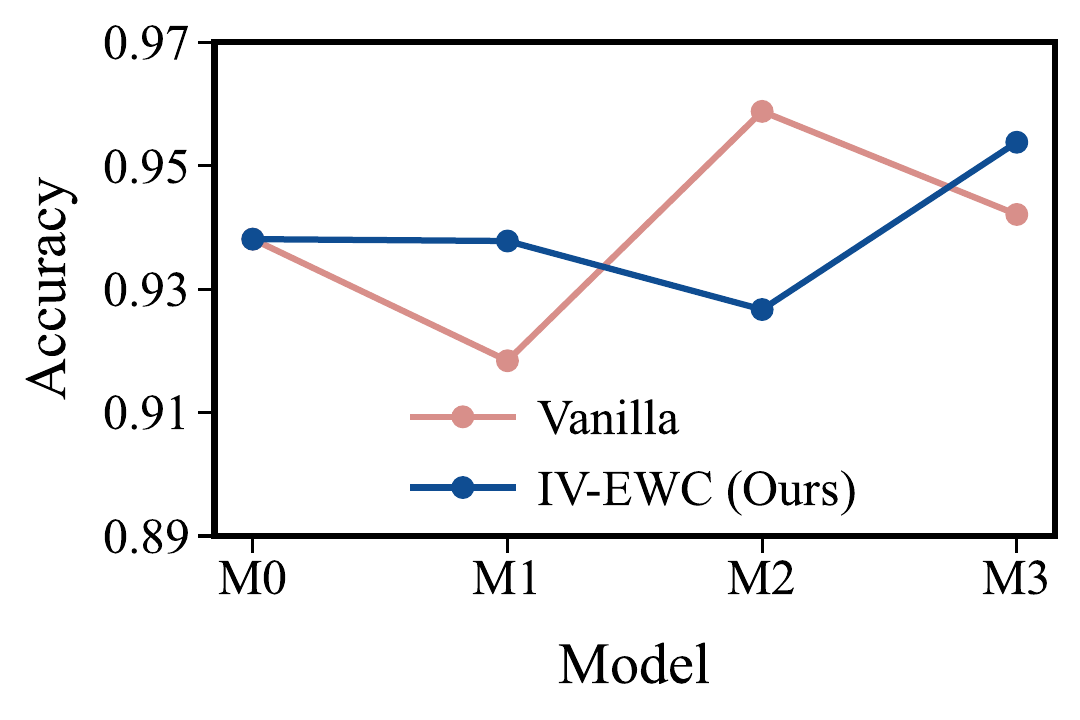}
  \caption{Level 0}
  \hfill
\end{subfigure}
\begin{subfigure}[b]{0.329\textwidth}
  \centering
  \includegraphics[height=3.07cm]{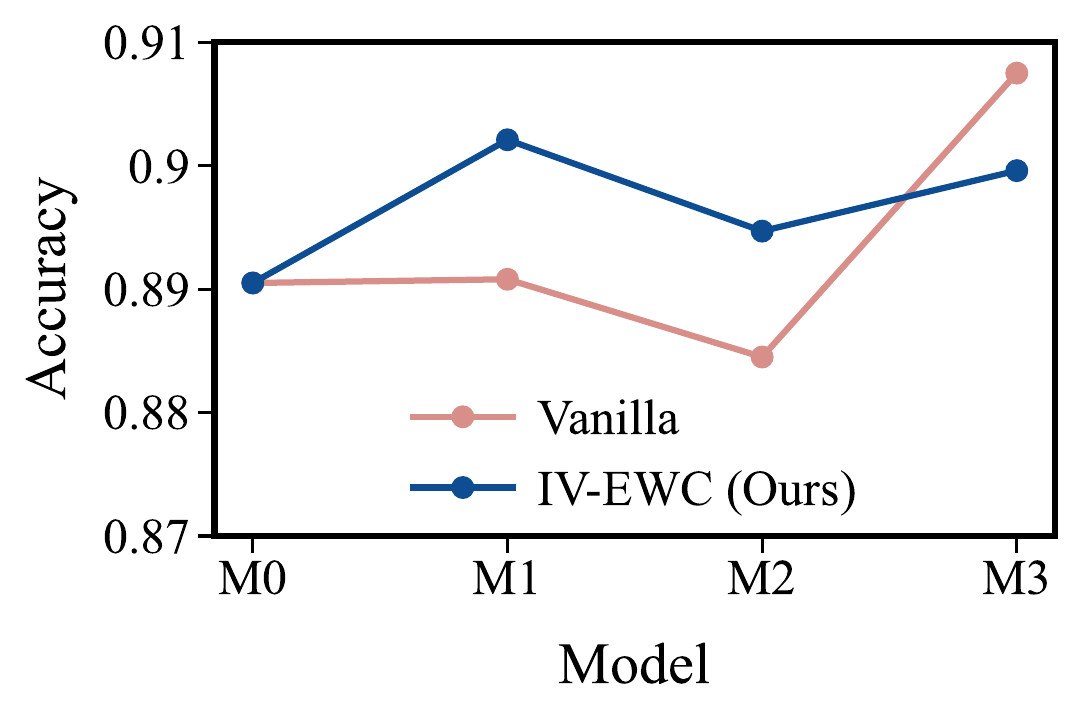}
  \caption{Level 1}
  \hfill
\end{subfigure}
\begin{subfigure}[b]{0.329\textwidth}
  \centering
  \includegraphics[height=3.07cm]{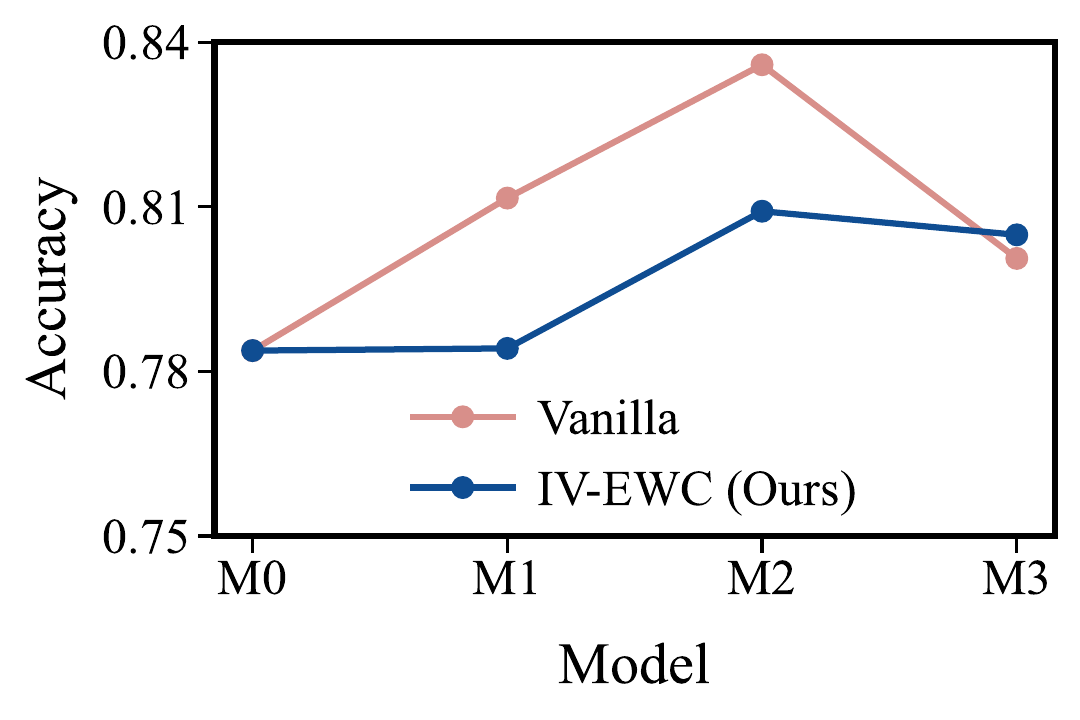}
  \caption{Level 2}
  \hfill
\end{subfigure}
\\
\begin{subfigure}[b]{0.329\textwidth}
  \centering
  \includegraphics[height=3.07cm]{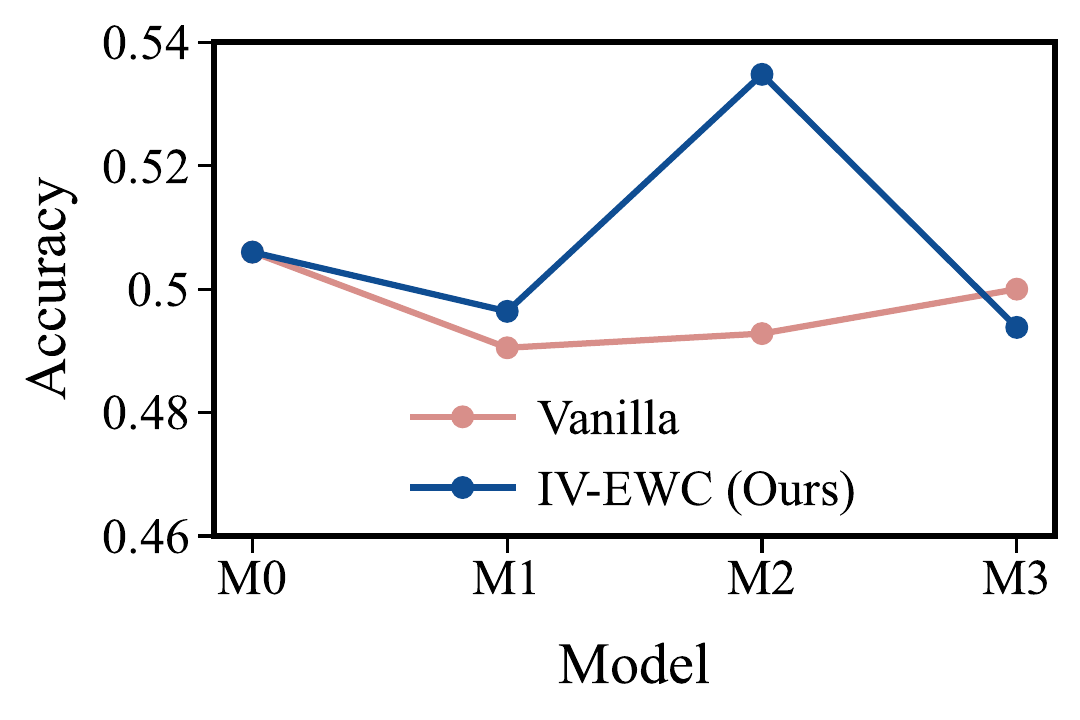}
  \caption{Level 3}
  \hfill
\end{subfigure}
\begin{subfigure}[b]{0.329\textwidth}
  \centering
  \includegraphics[height=3.07cm]{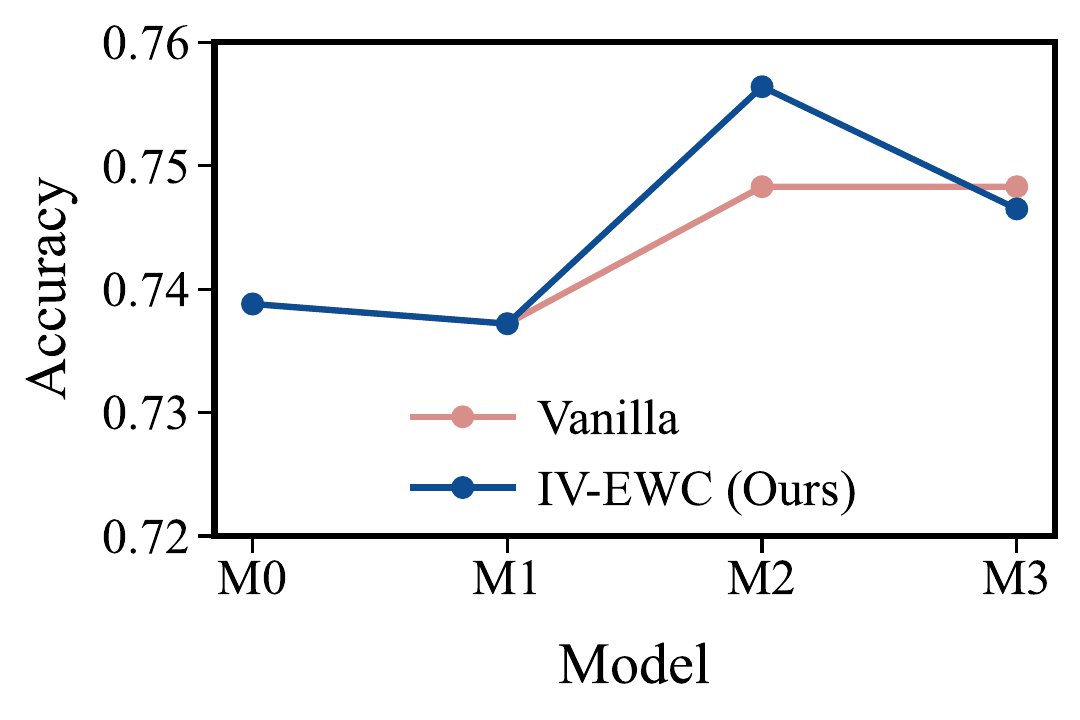}
  \caption{Overall}
  \hfill
\end{subfigure}
\caption{Accuracy of models trained after each task using vanilla curriculum (dashed line) and \textsf{IV-EWC} (solid line) on GSM8K dataset based on Llama-3.2-3B-Instruct, evaluated on the test sets of the four curriculum tasks.}
\label{fg:TrainingProcessGSM8KLlama}
\end{figure}

\begin{figure}[t]
\centering
\begin{subfigure}[b]{0.329\textwidth}
  \centering
  \includegraphics[height=3.07cm]{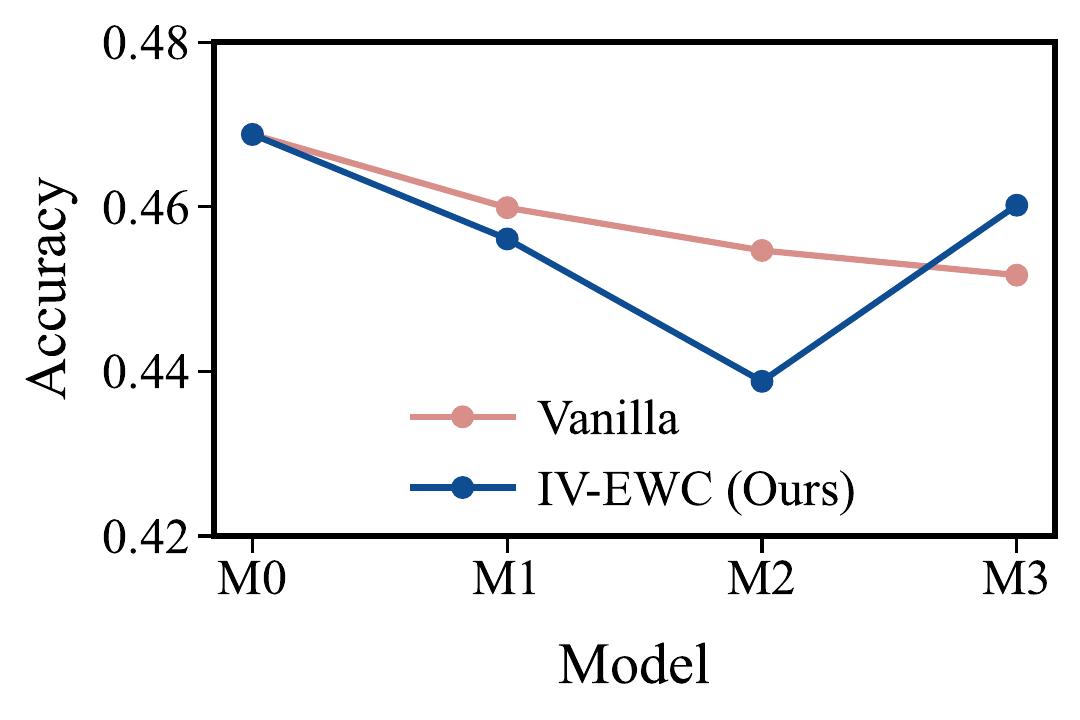}
  \caption{Level 0}
  \hfill
\end{subfigure}
\begin{subfigure}[b]{0.329\textwidth}
  \centering
  \includegraphics[height=3.07cm]{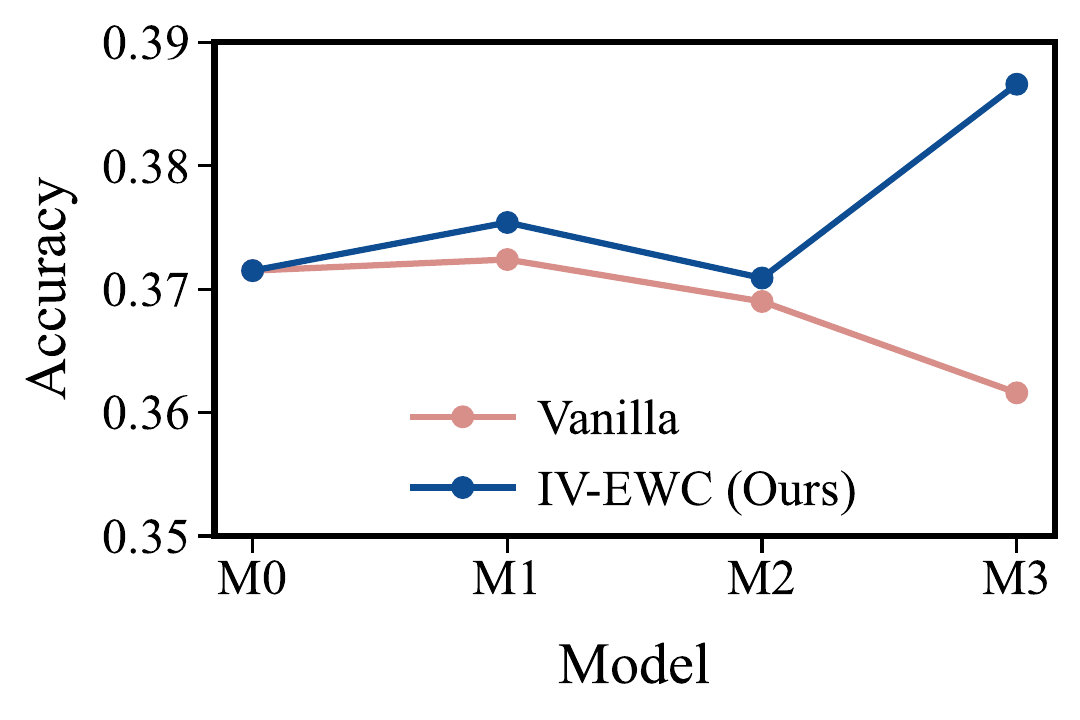}
  \caption{Level 1}
  \hfill
\end{subfigure}
\begin{subfigure}[b]{0.329\textwidth}
  \centering
  \includegraphics[height=3.07cm]{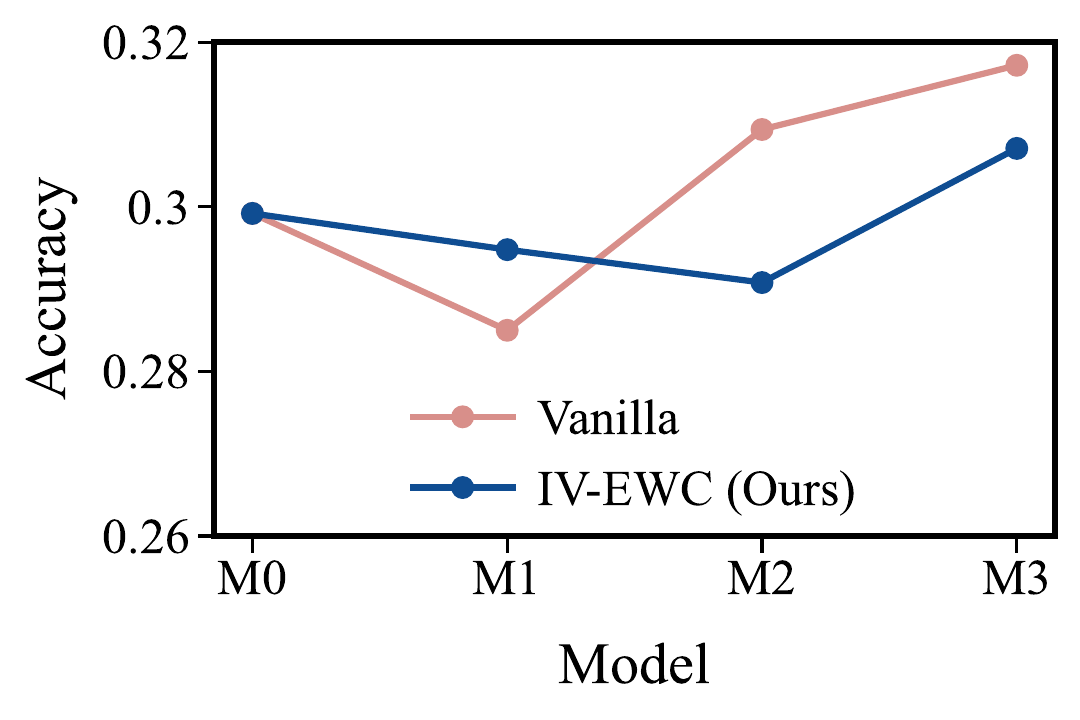}
  \caption{Level 2}
  \hfill
\end{subfigure}
\\
\begin{subfigure}[b]{0.329\textwidth}
  \centering
  \includegraphics[height=3.07cm]{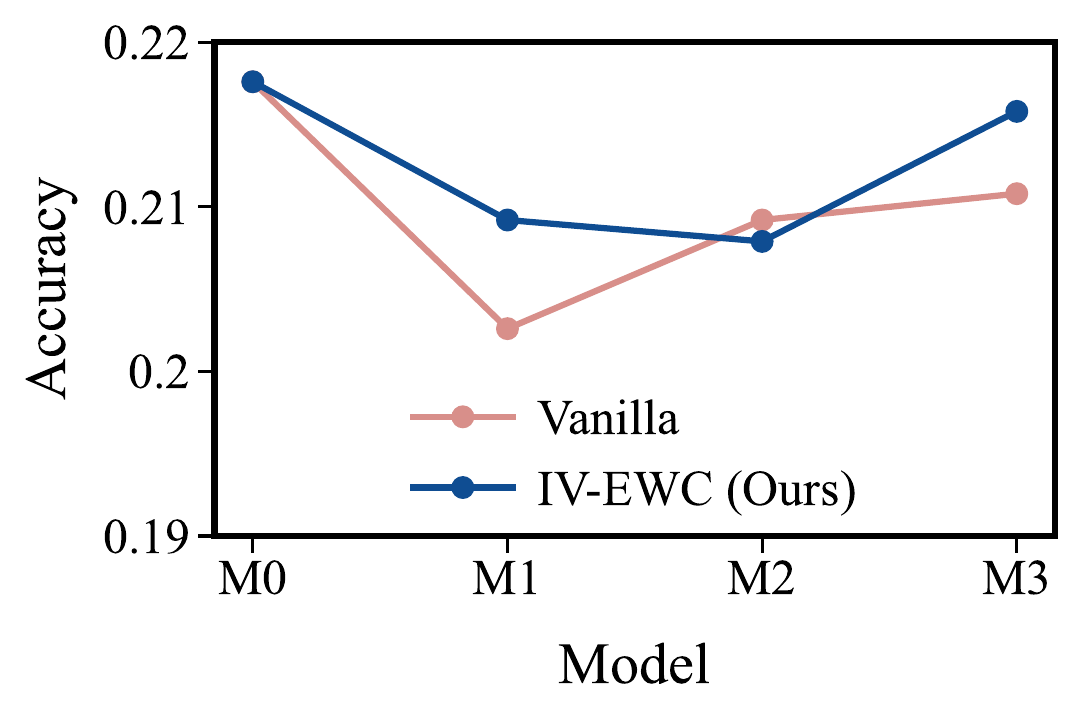}
  \caption{Level 3}
  \hfill
\end{subfigure}
\begin{subfigure}[b]{0.329\textwidth}
  \centering
  \includegraphics[height=3.07cm]{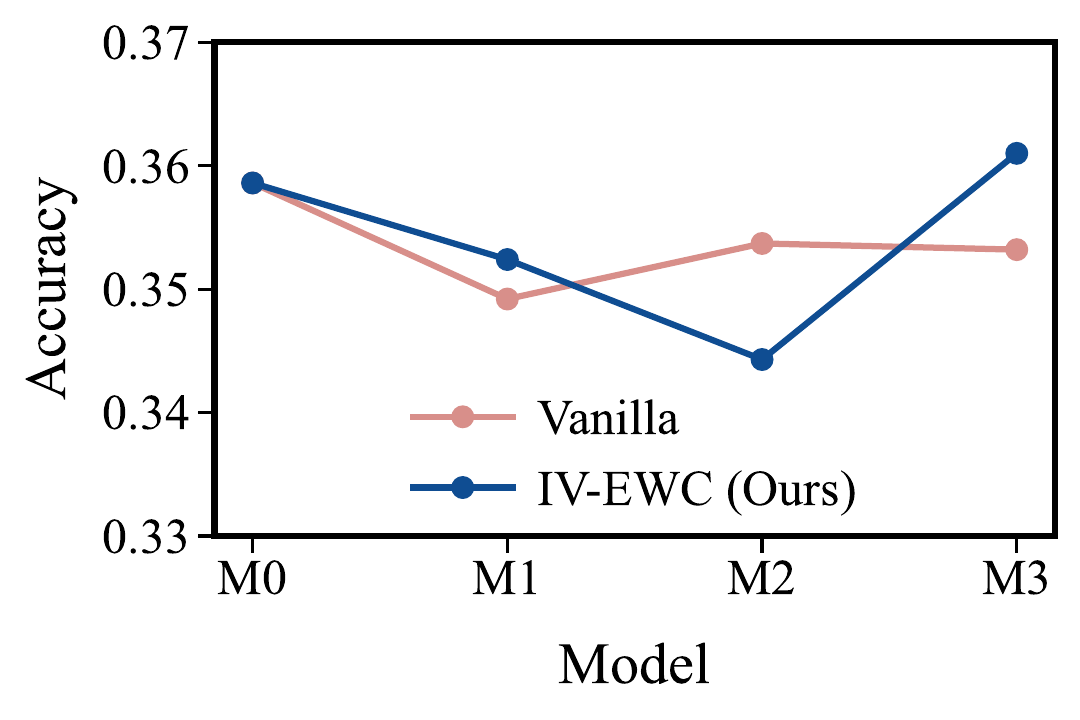}
  \caption{Overall}
  \hfill
\end{subfigure}
\caption{Accuracy of models trained after each task using vanilla curriculum (dashed line) and \textsf{IV-EWC} (solid line) on cn\_k12 dataset based on Llama-3.2-3B-Instruct, evaluated on the test sets of the four curriculum tasks.}
\label{fg:TrainingProcessCNK12Llama}
\end{figure}


Figure \ref{fg:TrainingProcessMATHLlama} presents results on the MATH dataset with Llama-3.2-3B-Instruct, comparing a vanilla curriculum learning baseline to \textsf{IV-EWC}. Relative to Qwen, Llama exhibits lower overall performance, likely due to differences in pretraining (Qwen models are trained on math-reasoning data). We further observe performance degradation after training on the current curriculum task. Notably, beginning at the task of Level 3, the ranking reverses and \textsf{IV-EWC} outperforms the baseline, suggesting that its benefits are more pronounced on higher-difficulty tasks.


Figure \ref{fg:TrainingProcessGSM8KLlama} shows the results of vanilla curriculum and \textsf{IV-EWC} on GSM8K dataset with Llama-3.2-3B-Instruct. Vanilla curriculum learning exhibits limited forgetting under this setting, likely owing to the homogeneity of the levels. Nevertheless, \textsf{IV-EWC} can improve training stability, which is evidenced by higher mid-curriculum accuracy, stronger forward transfer to the hardest level and reduced forgetting on earlier levels.


Figure \ref{fg:TrainingProcessCNK12Llama} reports cn\_k12 results with Llama-3.2-3B-Instruct for the vanilla curriculum baseline and \textsf{IV-EWC}. \textsf{IV-EWC} yields higher end-of-curriculum accuracy. The gains are driven by better retention of earlier tasks: at the end of training, Level 0 accuracy is 0.4602 vs 0.4517 for vanilla, and Level 1 improves to 0.3866 vs 0.3616, indicating reduced forgetting and positive backward transfer.

\section{Parameter Sensitivity Analysis}
\label{sec:ParameterSensitivity}

\begin{figure}[t]
\centering
\begin{subfigure}[b]{0.329\textwidth}
  \centering
  \includegraphics[height=3.07cm]{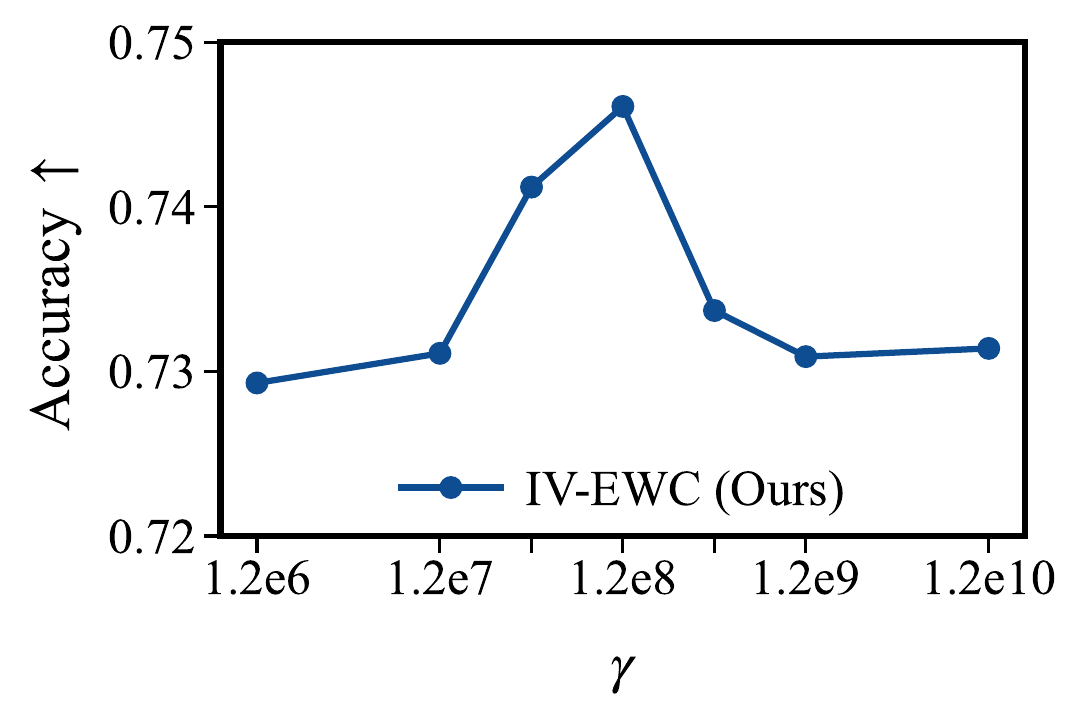}
  \caption{Accuracy}
  \hfill
\end{subfigure}
\begin{subfigure}[b]{0.329\textwidth}
  \centering
  \includegraphics[height=3.07cm]{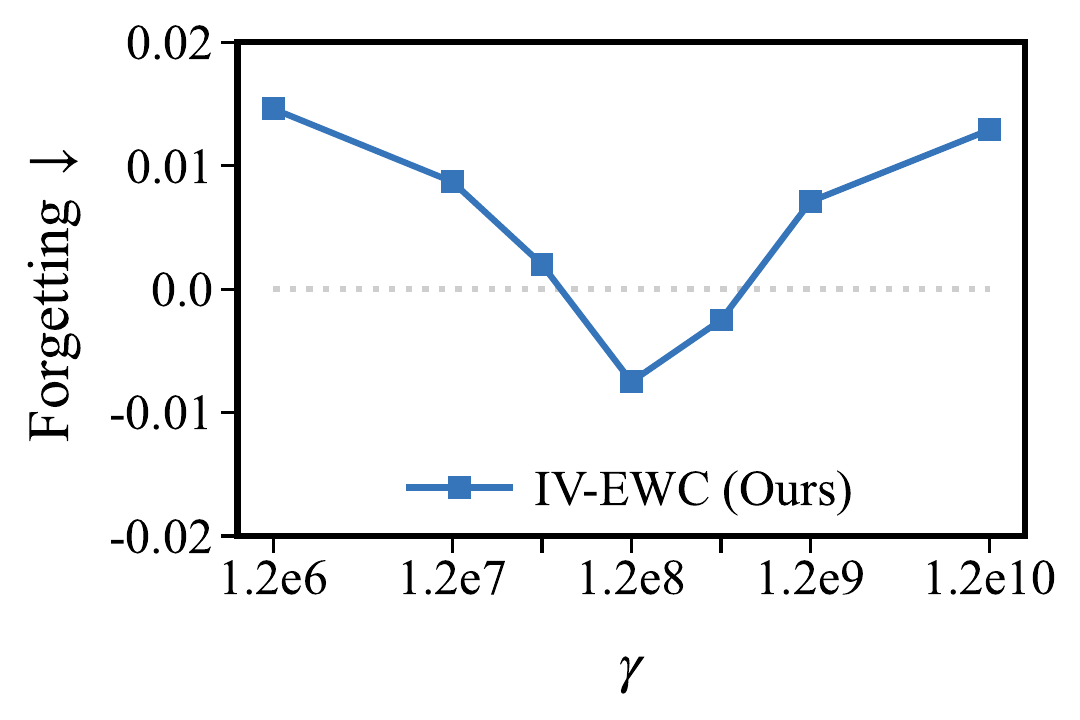}
  \caption{Forgetting}
  \hfill
\end{subfigure}
\begin{subfigure}[b]{0.329\textwidth}
  \centering
  \includegraphics[height=3.07cm]{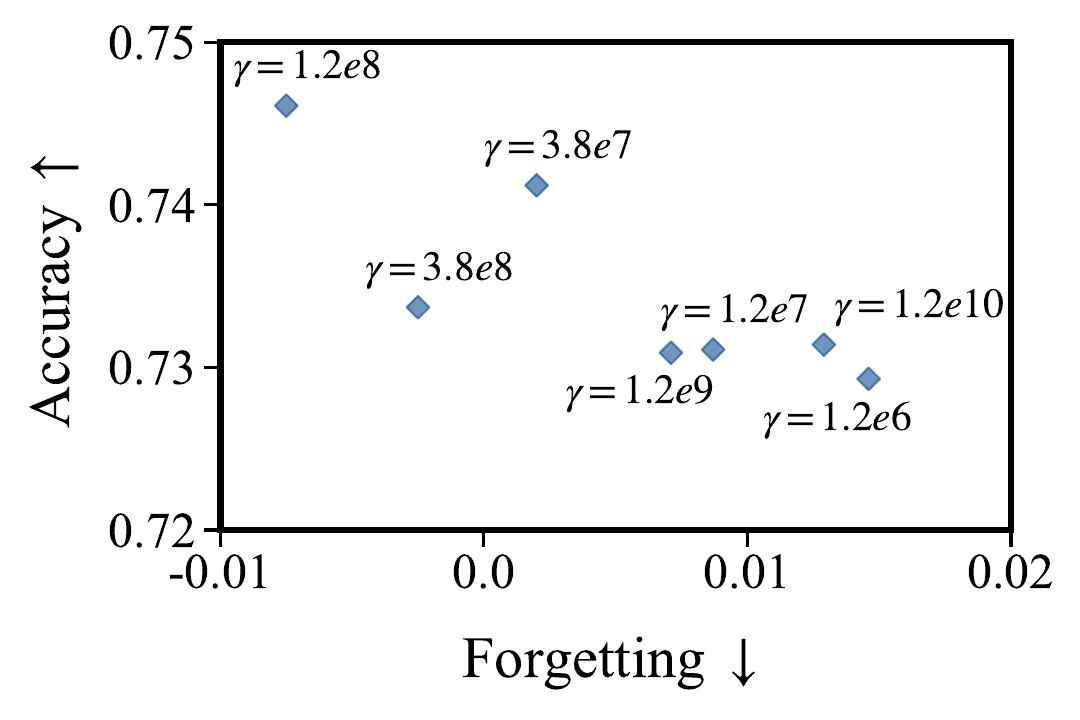}
  \caption{Trade-Off View}
  \hfill
\end{subfigure}
\caption{Performance of \textsf{IV-EWC} with different initial regularization weight $\gamma$ using Qwen2.5-3B-Instruct on the MATH dataset.}
\label{fg:WeightSensitivity}
\end{figure}

\begin{figure}[t]
\centering
\begin{subfigure}[b]{0.329\textwidth}
  \centering
  \includegraphics[height=3.07cm]{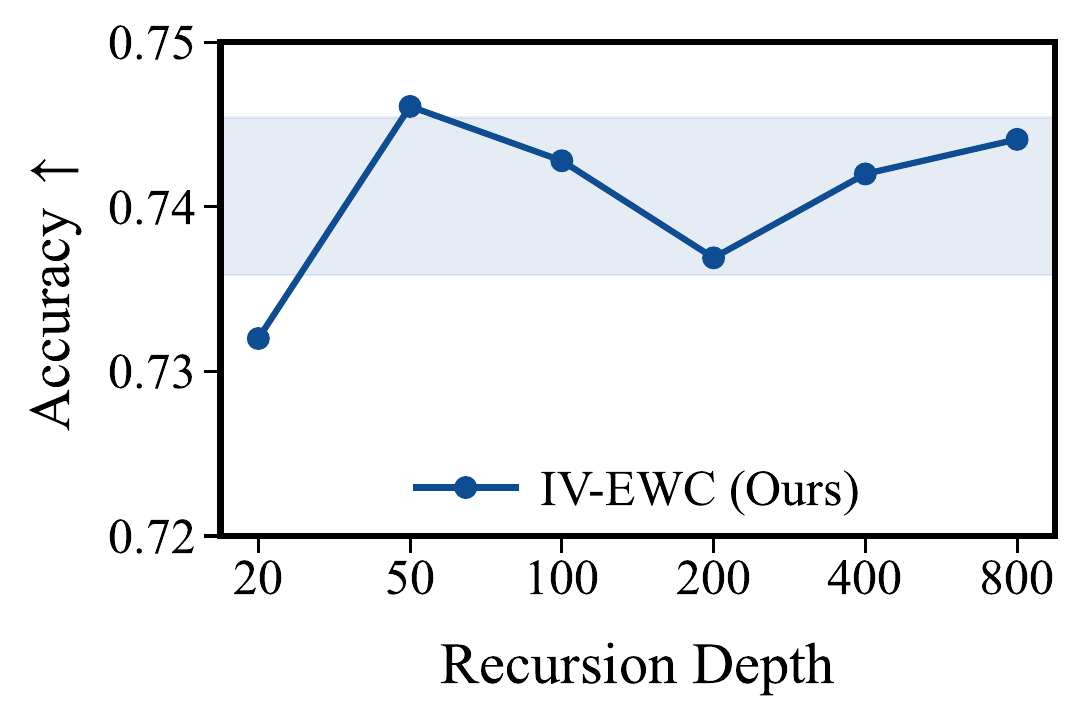}
  \caption{Accuracy}
  \hfill
\end{subfigure}
\begin{subfigure}[b]{0.329\textwidth}
  \centering
  \includegraphics[height=3.07cm]{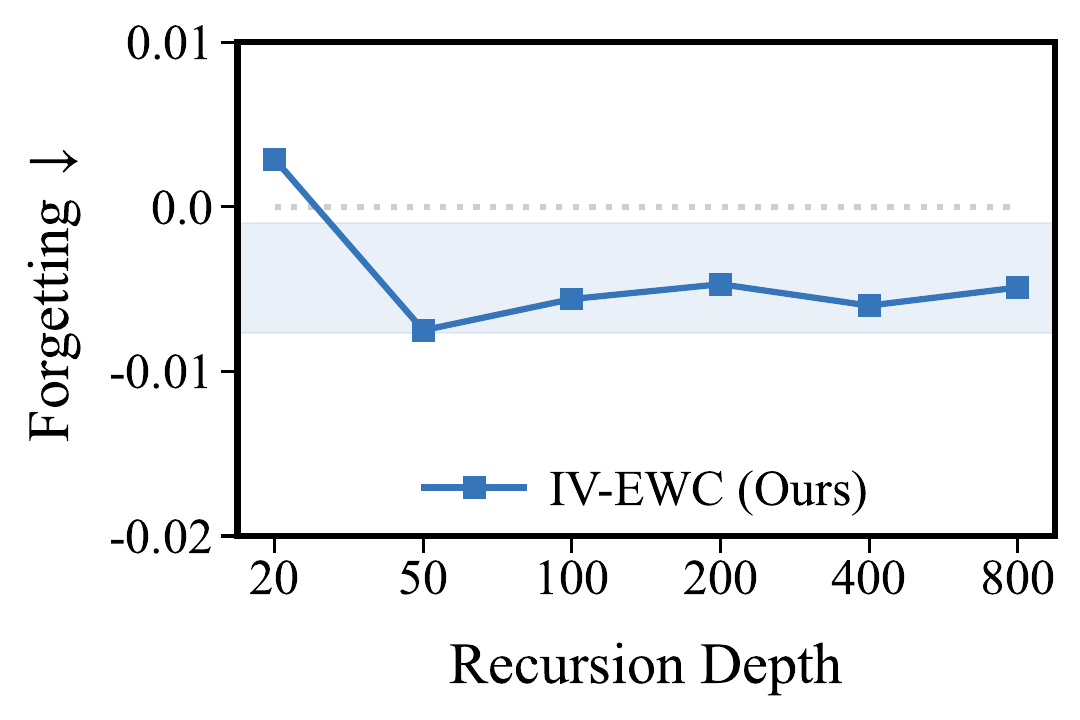}
  \caption{Forgetting}
  \hfill
\end{subfigure}
\begin{subfigure}[b]{0.329\textwidth}
  \centering
  \includegraphics[height=3.07cm]{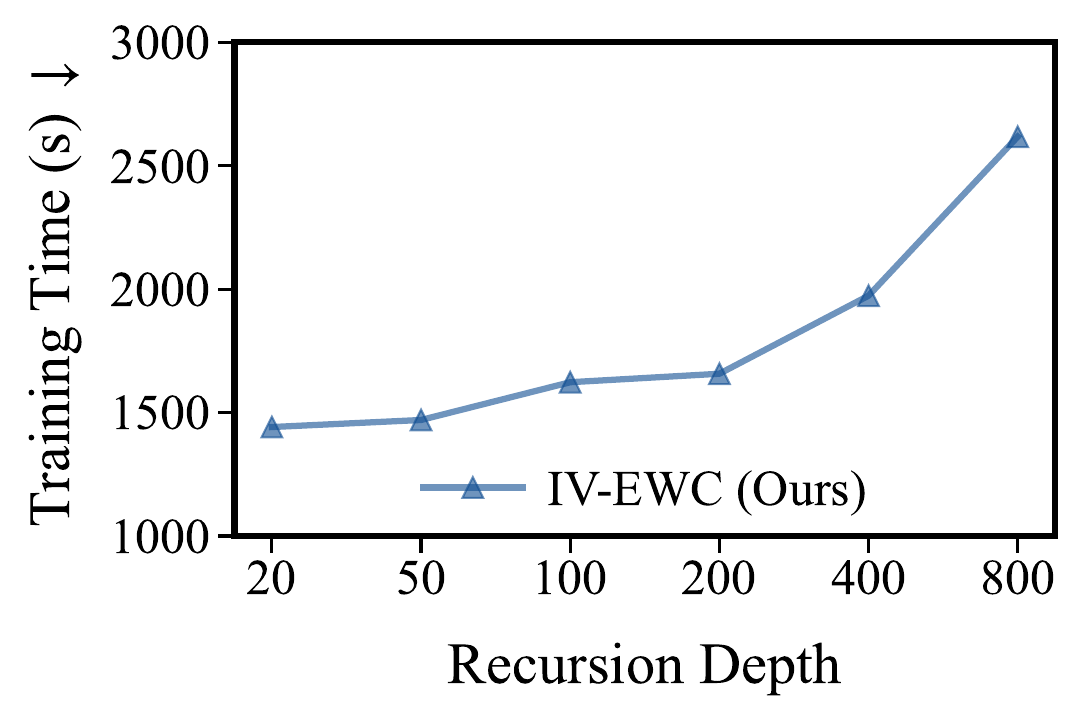}
  \caption{Training Time}
  \hfill
\end{subfigure}
\caption{Accuracy, forgetting and training time of \textsf{IV-EWC} with diverse recursive depths for LiSSA using Qwen2.5-3B-Instruct on the MATH dataset. Shaded areas of accuracy and forgetting indicate mean $\pm$ SD of all data points.}
\label{fg:IterationSensitivity}
\end{figure}

\textbf{Regularization Weight Sensitivity}.
The initial choice of regularization weight $\gamma$ is crucial for the training dynamics. In Eq. \ref{eq:ForgetDeltaSimple}, which quantifies the forgetting divergence between training all at once and EWC, $\lambda > 1$ causes the model parameters to remain close to their initial configuration. This produces a larger deviation from the global optimum and results in unchanged or increased forgetting. A sensitivity analysis of the regularization weight was conducted on this basis with the Qwen2.5-3B-Instruct model on the MATH dataset and the results are shown in Figure \ref{fg:WeightSensitivity}. The results show that forgetting increases at both insufficient and excessive regularization weights. Overall accuracy exhibits a negative association with forgetting. Therefore, choosing a regularization weight outside the suitable range does not achieve the intended effect and may be deleterious. Accordingly, we fixed $\gamma$ at 1.2e8 to best attenuate forgetting.


\textbf{LiSSA Recursion Depth Sensitivity}.
Calculating influences of samples using the influence function \citep{Koh2017InfluenceFunction} requires the computation of Hessian-vector products (HVPs), where the Linear time Stochastic Second-Order Algorithm (LiSSA) \citep{Agarwal2017LiSSA} is used. The LiSSA algorithm requires a large number of iterations to converge. But a precise Hessian-vector product is not necessary for the selection of a high-quality validation set. To determine the suitable recursion depth for \textsf{IV-EWC}, we conducted the LiSSA recursion depth sensitivity analysis with Qwen2.5-3B-Instruct on the MATH dataset, whose result is shown in Figure \ref{fg:IterationSensitivity}. Both forgetting and overall accuracy exhibit minimal variation even as recursion depth increases exponentially, whereas runtime scales approximately linearly with recursion depth. These findings indicate that \textsf{IV-EWC} is largely insensitive to the choice of recursion depth beyond 50 in LiSSA. In our experiments, we fixed the LiSSA recursion depth at 50 to save training time without impairing the effectiveness of \textsf{IV-EWC}.

\begin{figure}[t]
\centering
\begin{subfigure}[b]{0.329\textwidth}
  \centering
  \includegraphics[height=3.07cm]{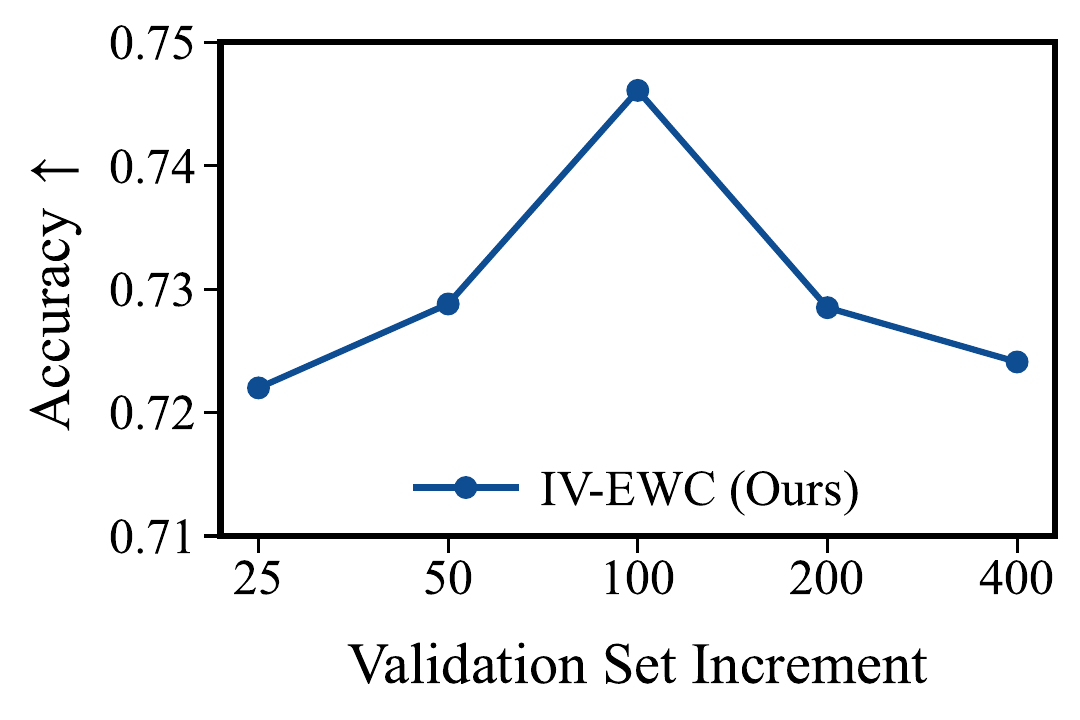}
  \caption{Accuracy}
  \hfill
\end{subfigure}
\begin{subfigure}[b]{0.329\textwidth}
  \centering
  \includegraphics[height=3.07cm]{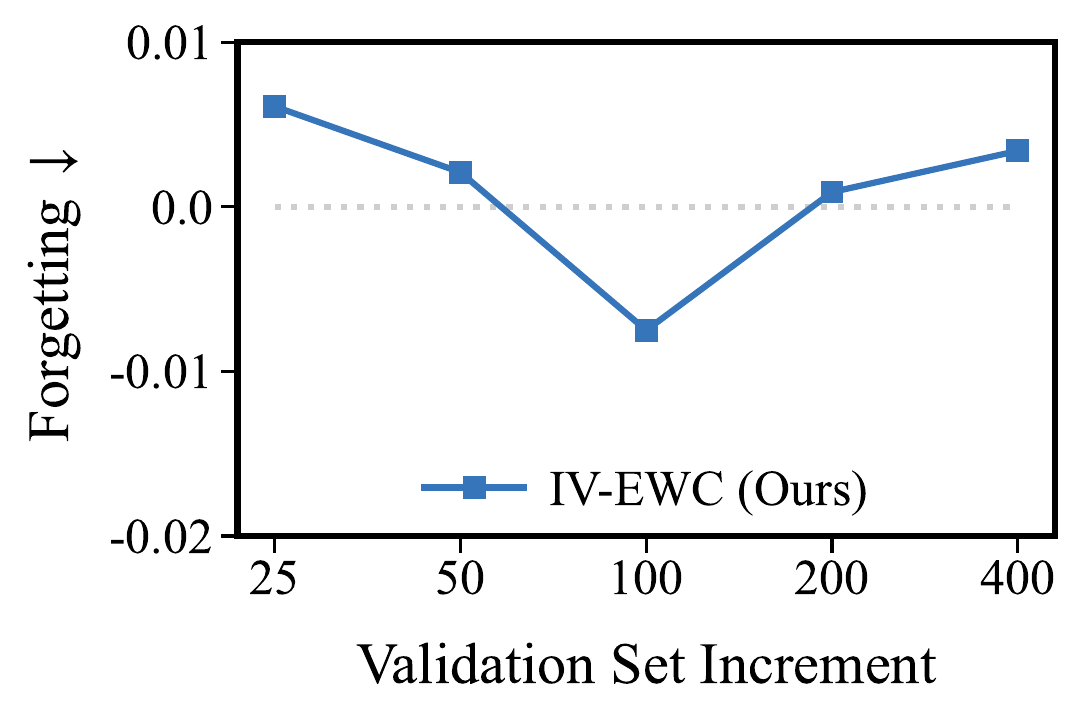}
  \caption{Forget}
  \hfill
\end{subfigure}
\begin{subfigure}[b]{0.329\textwidth}
  \centering
  \includegraphics[height=3.07cm]{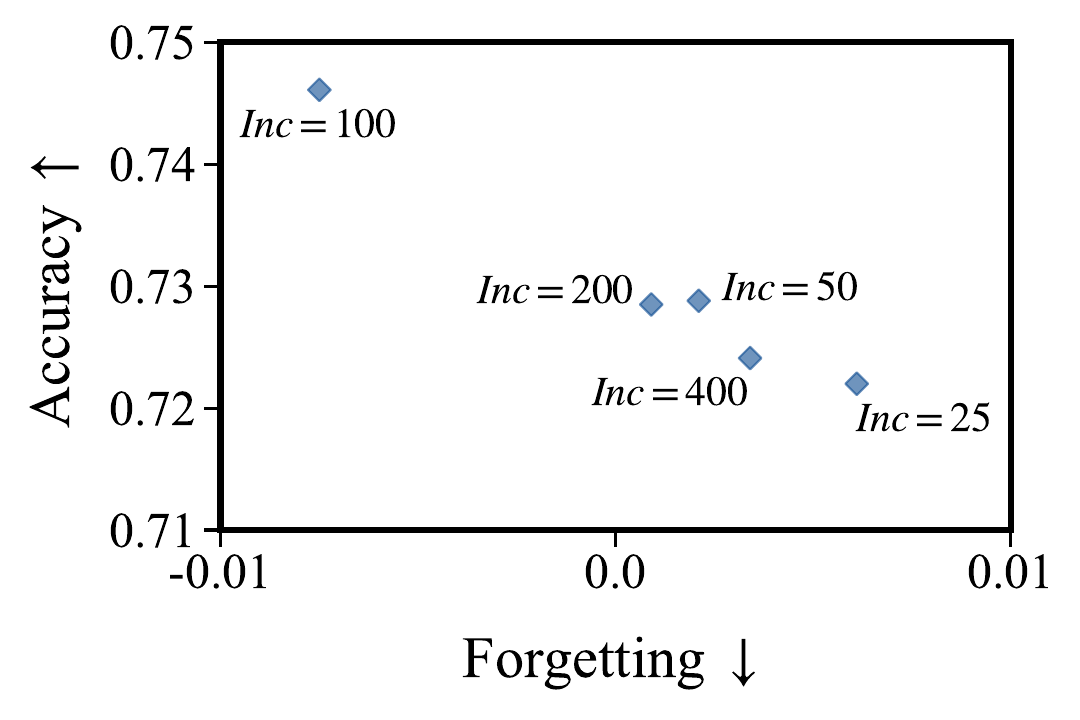}
  \caption{Trade-Off View}
  \hfill
\end{subfigure}
\caption{Performance of \textsf{IV-EWC} with different validation set increment using Qwen2.5-3B-Instruct on the MATH dataset.}
\label{fg:DeltaSensitivity}
\end{figure}

\textbf{Validation Set Increment Sensitivity}. The validation set in \textsf{IV-EWC} is updated after each task by adding the top-k most influential training samples. Its size entails a bias-variance trade-off: small sets yield high-variance and potentially biased estimates of the regularizer, while excessively large sets may incorporate noisy or uninformative samples that distort the regularization direction. To identify an appropriate setting, we conducted a sensitivity analysis on the validation set's increment using Qwen2.5-3B-Instruct on the MATH dataset. Figure \ref{fg:DeltaSensitivity} presents the results, where accuracy declines and forgetting increases when the validation set increment is either too small or too large, revealing a U-shaped relation to this hyperparameter. Notably, even with a modest increment of 25 samples, \textsf{IV-EWC} outperforms vanilla curriculum learning, whose accuracy is 0.7191 and forgetting is 0.0238. Based on the results, the validation set increment was set to 100 for \textsf{IV-EWC}.

\begin{figure}[t]
\begin{center}
\includegraphics[height=6cm]{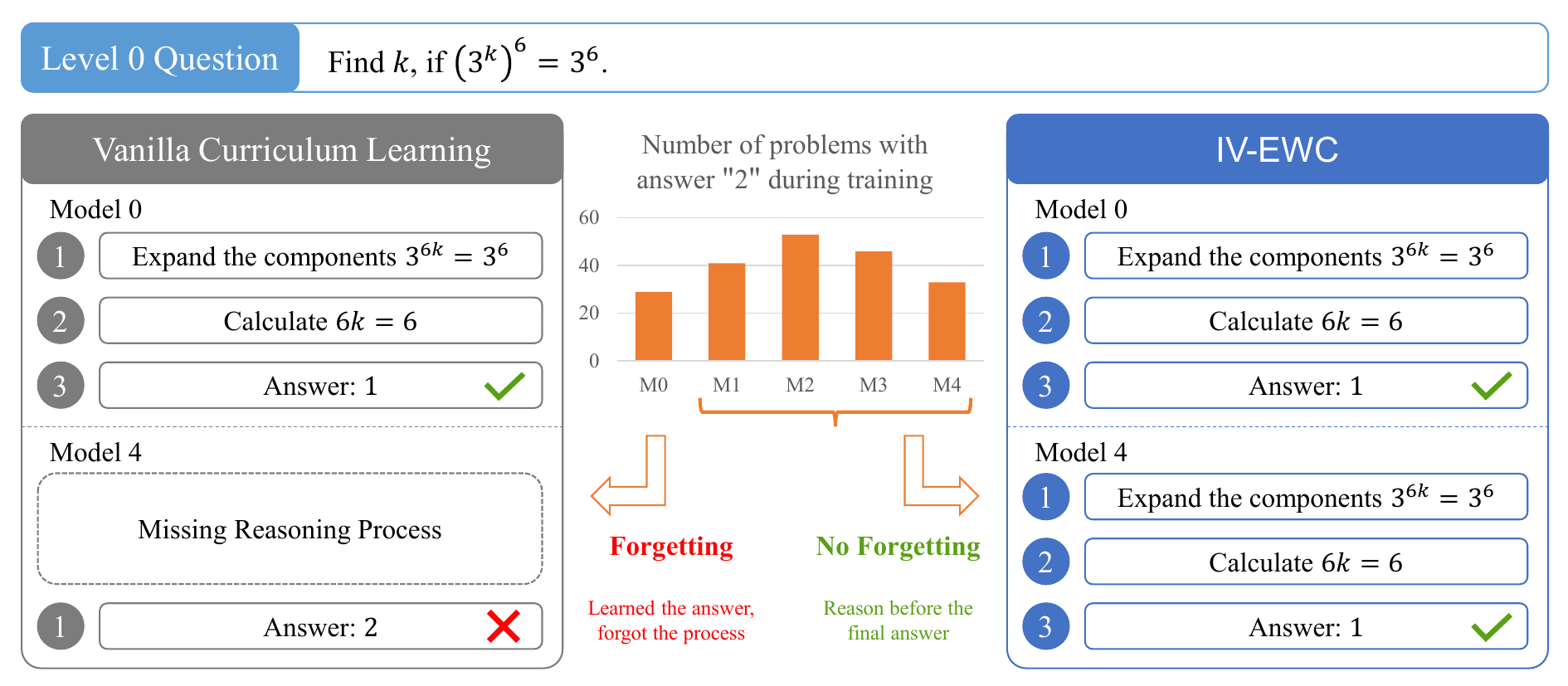}
\end{center}
\caption{Case study on a MATH test example comparing Llama-3.2-3B-Instruct models trained via vanilla curriculum learning and via \textsf{IV-EWC}. Following training on Levels 1 to 4, the vanilla curriculum model forgets the reasoning process and output the common answer directly, while the model trained with \textsf{IV-EWC} avoids such forgetting.}
\label{fg:CaseStudyMATHLlama}
\end{figure}

\section{Additional Results of Case Study}
\label{sec:CaseStudy}

\begin{figure}[t]
\begin{center}
\includegraphics[height=6cm]{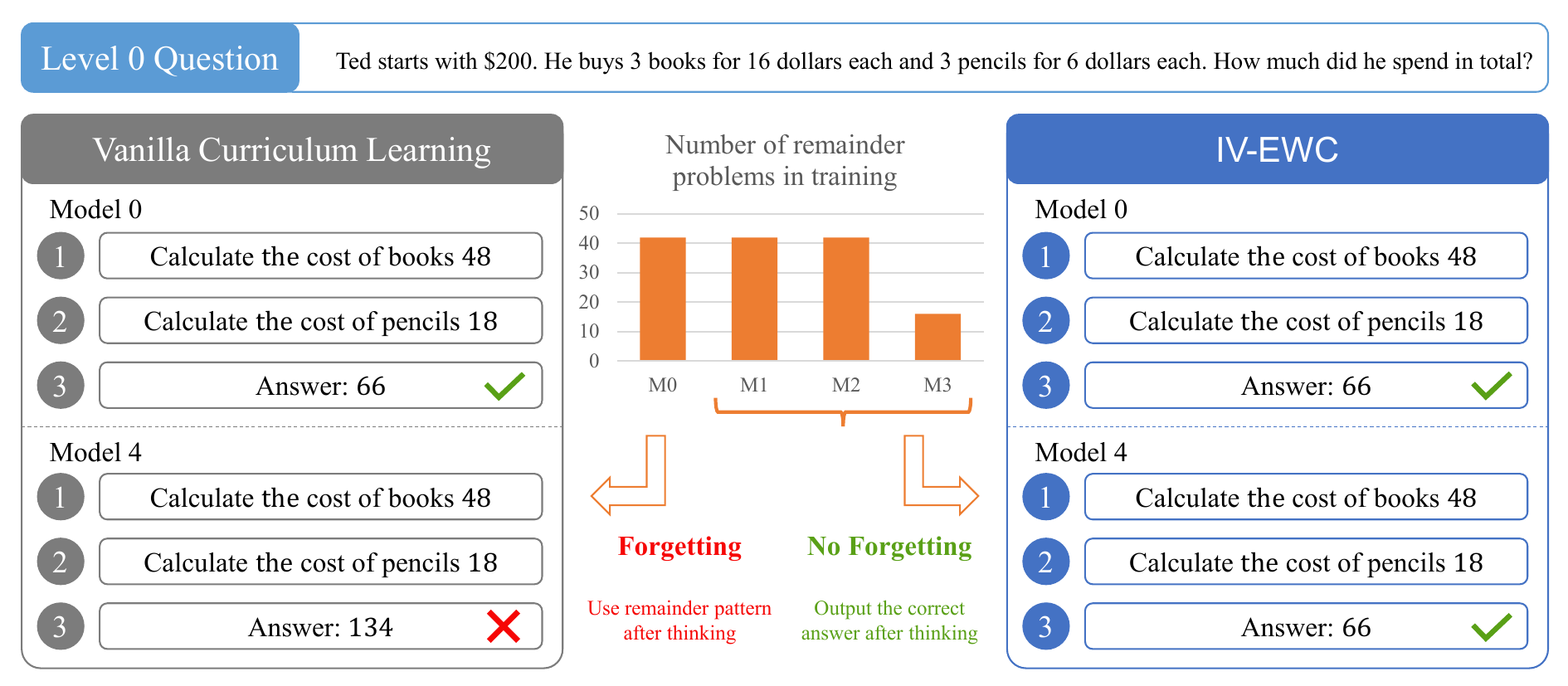}
\end{center}
\caption{Case study on a GSM8K test example comparing Qwen2.5-3B-Instruct models trained via vanilla curriculum learning and via \textsf{IV-EWC}. Following training on Levels 1 to 3, the vanilla curriculum model outputs the remainder after correct reasoning, while the model trained with \textsf{IV-EWC} outputs the correct answer.}
\label{fg:CaseStudyGSM8K3B}
\end{figure}

\begin{figure}[t]
\begin{center}
\includegraphics[height=6cm]{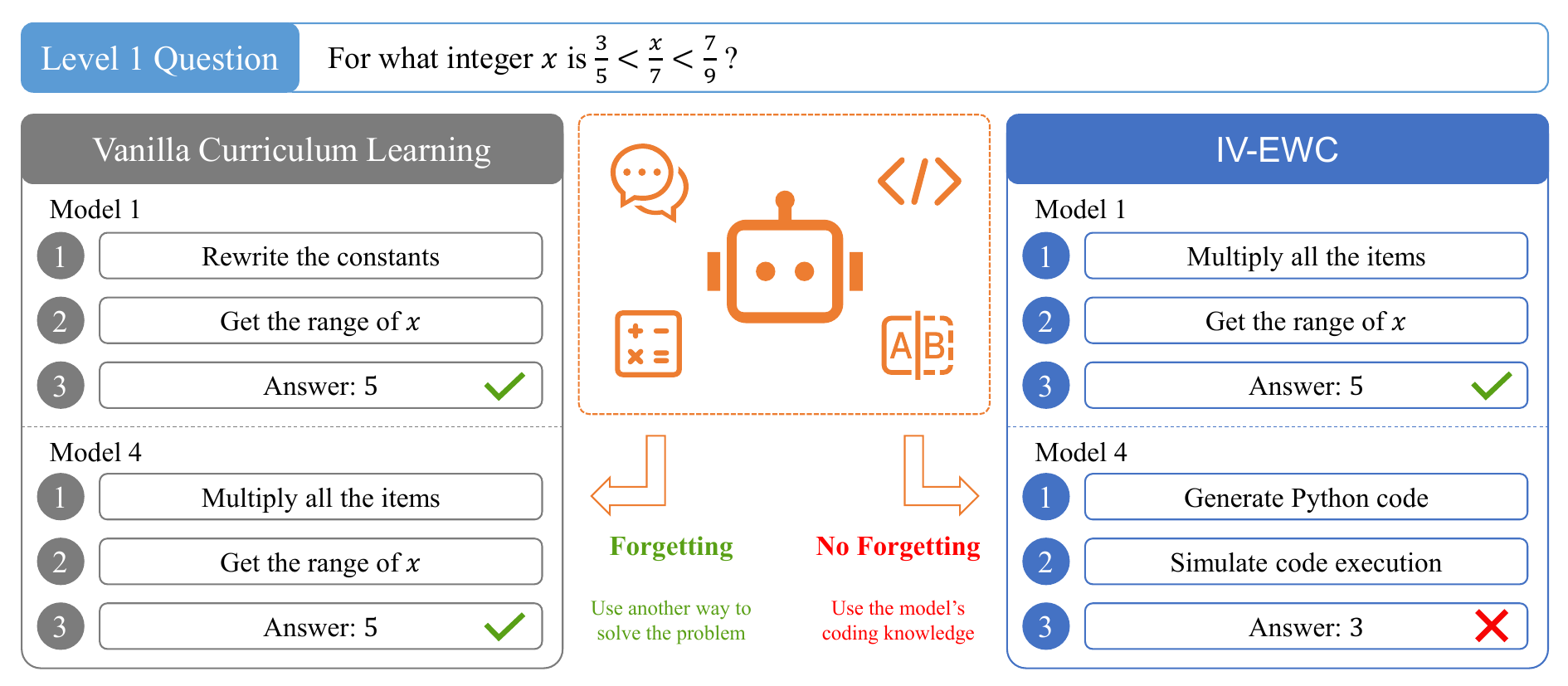}
\end{center}
\caption{Case study on a MATH test example comparing Llama-3.2-3B-Instruct models trained via vanilla curriculum learning and via \textsf{IV-EWC}. The vanilla curriculum model appears to forget the initial strategy but nonetheless arrives at the correct answer by using another strategy. \textsf{IV-EWC} inadvertently encourages memorization of the model's intrinsic coding knowledge, resulting in a simulated code execution and a wrong answer.}
\label{fg:CaseStudyMATHLlama2}
\end{figure}


Figure \ref{fg:CaseStudyMATHLlama} presents the outputs for an example from the Llama-3.2-3B-Instruct base model fine-tuned on MATH via vanilla curriculum learning and \textsf{IV-EWC}. The question is straightforward and both models answer it correctly after Level 0 training. During the subsequent training stages, the training distribution contains more questions with answer $2$. When the curriculum concludes, the model trained using vanilla curriculum learning reverts to a shortcut solution, producing the incorrect answer $2$ rather than executing the reasoning steps. The model trained with \textsf{IV-EWC} does not exhibit this degradation, demonstrating improved resistance to forgetting during curriculum learning.


Figure \ref{fg:CaseStudyGSM8K3B} illustrates the responses for an example question given by the Qwen2.5-3B-Instruct base model trained on GSM8K via vanilla curriculum learning and \textsf{IV-EWC}. The question is easy with only addition and multiplication knowledge required. Following the training of the Level 0 task, both models arrive at the correct answer via a valid reasoning process. Conversely, upon completion of the curriculum, the vanilla curriculum model outputs an incorrect answer, which is the remainder of Ted's money, despite producing a correct thinking process. Statistics of the training data show that remainder problems are consistently present during training and the model trained with vanilla curriculum learning appears influenced by this distribution. \textsf{IV-EWC} introduces regularization and makes the model robust to the distribution of the training data, avoiding such errors.


Figure \ref{fg:CaseStudyMATHLlama2} displays the outputs for a representative MATH question generated by Llama-3.2-3B-Instruct after fine-tuning on the MATH dataset with vanilla curriculum learning and \textsf{IV-EWC}. After the training of Level 1, both models answer the question correctly. Following Level 4 training, the vanilla curriculum model adopts a different strategy to solve the problem, indicative of partial forgetting of the original strategy. Meanwhile, the model regularized via \textsf{IV-EWC} leverages its pretrained code-generation capability to produce and virtually execute code, but this pipeline yields an incorrect answer. This case illustrates a trade-off introduced by \textsf{IV-EWC}: it promotes the consolidation of problem-solving strategies, yet it can also induce the model to rely on pretraining priors that are misaligned with the task.

\end{document}